\newtheorem{theorem}{Theorem}
\newtheorem{assumption}[theorem]{Assumption}
\newtheorem{corollary}[theorem]{Corollary}
\newtheorem{lemma}[theorem]{Lemma}
\newtheorem{prop}[theorem]{Proposition}
\newtheorem{remark}[theorem]{Remark}
\newenvironment{proof}{ \textbf{Proof:} }{ \hfill $\Box$}
\def\opt{\mathsf{OPT}}
\def\bbE{{\mathbb{E}}}
\def\bbP{{\mathbb{P}}}
\def\cA{\mathcal{A}}
\def\cP{\mathcal{P}}
\def\cQ{\mathcal{Q}}
\def\cR{\mathcal{R}}
\def\sfc{{\mathsf{c}}}
\def\sfp{{\mathsf{p}}}
\def\nn{\nonumber}
\newcommand{\EE}[1]{\mathbb{E}\left[#1\right]}
\newcommand{\Regret}{\mathcal{R}}
\newcommand{\dt}{\Delta t}
	\title{Continuous Time Bandits With Sampling Costs}
\author{Rahul Vaze \and Manjesh Hanawal }
\begin{document}

\maketitle

\begin{abstract}
We consider a continuous time multi-arm bandit problem (CTMAB),  where the learner can sample arms any number of times in a given interval and obtain a random reward from each sample, however, increasing the frequency of sampling incurs 
an additive penalty/cost. Thus, there is a tradeoff between obtaining large reward and incurring sampling cost as a function of the sampling frequency. The goal is to design a learning algorithm that 
minimizes the regret, that is defined as the difference of the payoff of the oracle policy and that of the learning algorithm.  
We establish lower bounds on the regret achievable with any algorithm, and propose algorithms that achieve the lower bound up to logarithmic factors. For the single arm case, we show that the lower bound on the regret is $\Omega(1/\mu)$,  and an upper bound with regret $O((\log (T/\lambda))^2/\mu)$, where $\mu$ is the mean of the arm,  $T$ is the time horizon, and $\lambda$ is the tradeoff parameter between the payoff and the sampling cost. With $K$ arms, we show that the lower bound on the regret is $\Omega( K\mu[1]/\Delta^2)$, and an upper bound $O( K (\log (T/\lambda))^2 \mu[1]/\Delta^2)$ where $\mu[1]$ now represents the mean of the best arm, and $\Delta$ is the difference of the mean of the best and the second-best arm. 
\end{abstract}

%

\label{sec:Intro}

\section{Introduction}
\label{sec:Intro}The classical discrete-time multi-arm bandit (DMAB) is a versatile learning problem  \cite{NOW12_bubeck2012regret,Book2019_BanditAlgorithms} that has been extensively studied in literature. 
By discrete-time, we mean that there are a total of $T$ discrete slots, and in each slot, a learning algorithm can choose to `play' any one of the possible $K$ 
arms. 


In this paper, we consider a continuous-time multi-arm bandit problem (CTMAB) that is well motivated from pertinent applications 
discussed later in this section. In particular, in CTMAB,  the total time horizon is $T$, and there are $K$ arms. The main distinction between the 
DMAB and the CTMAB  is that with the CTMAB,  an arm can be sampled/played at any (continuous) time $t$ before $T$. 
Once the sampling time $t \in [0,T]$ is selected, similar to the DMAB problem, any one of $K$ arms can be played, and if arm $k$ is played at time $t$, the learning algorithm gets a random reward with 
mean $\mu[k]$ independent of the time $t$. Without loss of generality, we let $1> \mu[1] > \mu[2] \dots > \mu[K]$.

Without any other restriction, per se, any algorithm can play infinite number of arms in time horizon $T$ by repeatedly playing arms at arbitrarily small intervals. Thus, to make the problem meaningful,  we account for the sampling (arm playing) cost that depends on how often any arm is sampled. Specifically, if two consecutive plays (of any two arms) are made at time $t$ and $t+\dt$, then the {\bf sampling cost} 
for interval $\dt$ is $f(\dt)$, where $f$ is a decreasing function. 
The total sampling cost is defined as the sum of sampling cost over the total time horizon $T$. 
Sampling cost penalizes high frequency sampling i.e., the higher the frequency of consecutive plays, the higher is the sampling cost. 
Considering sampling cost depending on consecutive plays of a specific arm results in multiple decoupled single arm CTMAB problems, thus a special case of the considered problem.

The overall payoff in CTMAB  is then defined as the accumulated random reward obtained from each sample minus $\lambda$ times the total sampling cost over the time horizon $T$. The variable $\lambda$ represents the relative weight of the two costs.
The regret is defined as usual, the expected difference between the overall payoff of the oracle policy and that of a learning 
algorithm.
There is a natural tradeoff in CTMAB, higher frequency sampling increases the accumulated reward but also increases the sampling cost at the same time. 
Compared to DMAB, where an algorithm has to decide which arm to play next at each discrete time,
in CTMAB, there are two decisions to be made, given the history of decisions and the current payoff (reward minus the sampling cost); i) which arm to play next, and ii) at what time.  


We next discuss some practical motivations for considering the CTMAB.
We first motivate the CTMAB in the single arm case ($K=1$) which in itself is non-trivial. 
Consider that there is a single agent (human/machine)  that processes same types of jobs (e.g. working in call center, data entry operation, sorting/classification job etc.) with an inherent random quality of work $q$, and random per-job (unknown) utility $u(q)$. The agent demands a payment depending on how frequently it is asked to accomplish a job \cite{gong2019truthful}. Alternatively, as argued in \cite{gopalakrishnan2016routing}, the quality of agents' work suffers depending on the speed of operation/load experienced. Thus, the payoff  is the total accumulated utility  minus the payment or the speed penalty (that depends on the frequency of work), and the objective is to find the optimal speed of operation, to maximize the overall payoff. 

To motivate the CTMAB with multiple arms, the natural extension of this model is to consider a platform or aggregator that has multiple agents, each having a  random work quality and corresponding per-job utility. 
The random utility of any agent can be estimated by assigning  jobs to it and observing its outputs. The platform charges a cost depending on the speed (frequency) of job arrivals to the platform, that is indifferent to the actual choice of the agent being requested for processing a particular job. Given a large number of jobs to be processed in a finite time, the objective is to maximize the overall payoff;  the total accumulated utility minus the payment made to the platform.
When the platform cost is the sum of the cost across agents, where each agent cost depends on the rate at which it processes  jobs, the problem decouples into multiple single arm CTMAB problems.

In this paper, for the ease of exposition, we assume that the sampling cost function $f$ is $f(x) = \frac{1}{x}$, i.e., if two consecutive plays are made at time $t$ and $t+\dt$, then the sampling cost 
for interval $\dt$ is $\frac{1}{\dt}$, which is intuitively appealing and satisfies natural constraints on $f$. How to extend results for general functions is discussed in Remark \ref{rem:fconvex}. 
Under this sampling cost function, assuming arm $1$ has the highest mean $\mu[1]$, it turns out (Proposition \ref{prop:oracle}) that the oracle policy always plays the best arm (arm with the highest mean) $N^\star_T = \Theta\left(\mu[1]T/\lambda\right)$ times at equal intervals in interval $[0,T]$. Importantly, the number of samples (or sampling frequency) obtained by the oracle policy {\bf depends on the mean of the best arm}.

This dependence of the oracle policy's choice of the sampling frequency on the mean of the best arm results in two main distinguishing features of the CTMAB  compared to the DMAB problem, described as follows. CTMAB  is non-trivial even when there is only a single arm unlike the DMAB problem, where it is trivial. The 
non-triviality arises since the learning algorithm for the CTMAB  has to choose the time at which to obtain the next sample that depends on the mean of that arm, which is also unknown.
Moreover with CTMAB, it is not enough to identify the optimal arm, the quality of the estimates is equally important as the sampling cost depends on that. In contrast, with DMAB, it is sufficient for an algorithm to identify the right ordering of the arms.

Recall that we have assumed $\mu[1]<1$. The case $\mu[k]\ge 1$ will follow similarly, where it is worth noting that the setting of $\mu[k]\ge 1$ is easier than when $\mu[1]<1$,  since estimates of $\mu[1]$ have to be accurately estimated in CTMAB, and that becomes harder as $\mu[1]$ decreases.  

{\bf Our contributions for the CTMAB  are as follows.} 

1. For the single arm CTMAB,  where $\mu = \mu[1]$, we propose an algorithm whose regret is at most $O\left(\frac{1}{\mu}(\log(T/\lambda))^2\right)$. In converse, we show that for any online algorithm that uses only unbiased estimators of $\mu$ for making its decisions, its regret is $\Omega\left(\frac{1}{\mu}\right)$.  
The reason for considering the unbiased restriction is that the sample mean has the minimum variance among all unbiased estimators of the true mean, a fact critically exploited in the proof. 

Thus, as a function of $\mu$, the proposed algorithm has the optimal regret, while there is a logarithmic gap in terms of $T/\lambda$.\footnote{The ratio $T/\lambda$ is an invariant of the problem. See Remark \ref{rem:invariant}.} The result has an intuitive appeal since as $\mu < 1$ decreases, the regret increases, since for the CTMAB,  $\mu$ has to be estimated, and that becomes harder as $\mu$ decreases.


2. For the general CTMAB with multiple arms,  we propose an algorithm whose regret is at most $O\left(\frac{ \mu[1]K\log^2 (T/\lambda)}{\Delta^2}\right)$ when $\frac{K}{\mu[1]} = O\left(\frac{\mu[1]}{\Delta^2}\right)$, which is a practically reasonable regime. The derived results holds more generally and not just for $\frac{K}{\mu[1]} = O\left(\frac{\mu[1]}{\Delta^2}\right)$.
In converse, we show that for any online algorithm  its regret  is  $\Omega\left( \frac{K\mu[1]}{\Delta^2}\right)$ when $\Delta$ is small. 
To derive this lower bound we do not need the assumption that an online algorithm uses only unbiased estimators of $\mu[k], k=1, \dots, K$ for making its decisions since we are able to exploit the fact that there are multiple arms and an algorithm has to identify the best arm.
Similar to the single arm case, 
 as a function of $\mu[1]$ and $\Delta$, the proposed algorithm has the optimal regret, while there is a logarithmic gap in terms of $T/\lambda$. 



\subsection{Related Works}
In prior work, various cost models have been considered for the bandit learning problems. The cost could be related to the consumption of limited resources, operational, or  quality of information required.

\noindent
{\bf Cost of resources:} In many applications (e.g., routing, scheduling) resource could be consumed as actions are applied. Various models have been explored to study learning under limited resources or cost constraints. The authors in \cite{JACM2018_BanditsWithKapsack} introduce {\it Bandits with Knapsack}  that combines online learning with integer programming for learning under constraints. This setting has been extended to various other settings like linear contextual bandits 
\cite{NIPS2017_LinearContextaulBanditsWithKanpsacks}, combinatorial semi-bandits \cite{AISTATS2018_CombibatorialSemiBandits}, adversarial setting \cite{FOCS2019_AdversarialBandits}, cascading bandits \cite{IJCAI2018_CasadingBandits}. The authors in \cite{Sigmetrics2015_BanditsWithBudgets} establish lower bound for budgeted bandits and develop algorithms with matching upper bounds. The case where the cost is not fixed but can vary is studied in \cite{AAAI2013_MABWithBdgetContraints}. 

\noindent
{\bf Switching Cost:} Another set of works study {\it Bandit with Switching Costs} where cost is incurred when learner switches from one arm to another arm \cite{STOC2014_DekelJian,NIPS2013_SwitchingCosts}. The extension to the case where partial information about the arms is available through feedback graph is studied in \cite{NIPS2019_SwitchingCostsFeedbackGraph}. For a detailed survey on bandits with switching cost we refer to \cite{DeEconomist2004_Juan}.

\noindent
{\bf Information cost:} In many applications the quality of information acquired depends on the associated costs (e.g., crowd-sourcing, advertising). While there is no bound on the cost incurred in these settings, the goal is to learn optimal action incurring minimum cost. \cite{ICML2015_CheapBandits,ICAASP2015_CostEffectiveSpectral} trade-offs  cost and information in linear bandits exploiting the smoothness properties of the rewards. Several works consider the problem of arm selection in online settings (e.g., \cite{AISTATS13_trapeznikov2013supervised,ICML14_seldin2014prediction})
involving costs in acquiring labels \cite{NIPS13_zolghadr2013online}.

Variants of bandits problems where rewards of arm are delay-dependent are studied in \cite{cella2020stochastic, pike2019recovering, kleinberg2018recharging}. In these works, the mean reward of each  arm is characterized as some unknown function of time. These setups differ from the CTMAB problem considered in this paper, as they deal with discrete time setup, and do not capture the cost associated with sampling rate of arms. Rested and restless bandit setups \cite{whittle1988restless} consider that distribution of each arm changes in each round or when it is played, but do not assign any penalty on rate of sampling.

In this work, our cost accounting is different from the above referenced prior work. The cost is related to how frequently the information/reward is collected. Higher the frequency, higher is the cost. Also, unlike the DMAB problem, there is no limit on the number of samples collected in a given time interval, however, increasing the sampling frequency also increases the cost. 

A multi-arm bandit problem, where pulling an arm excludes the pulling of any arm in future for a random amount of time (called delay) similar to our inter-sampling time has been considered in \cite{grygor}. However, in \cite{grygor} the delay experienced (inter-sampling time) is an exogenous random variable, while  it is {\bf a decision variable} in our setup. Moreover, the problem considered in \cite{grygor} is trivial with a single arm  similar to the usual DMAB, while it is non-trivial in our case as accuracy of the mean estimates play a crucial role.

\section{The Model}
\label{sec:Model}
There are a total of $K$ arms and the total time horizon is $T$. At any time $t \in [0,T]$, any one of the arms can be played/sampled. On sampling arm $i$ at any time $t$, a 
random binary reward $X_i$ is obtained which follows a Bernoulli distribution with mean $\mu[i]$. We consider Bernoulli distribution here, however, all results will hold for bounded distributions.\footnote{All we need is that the considered concentration inequalities should hold.} If the time difference between any two consecutive samples is $\dt$, then the sampling cost for interval $\dt$ is $f(\dt) = 1/ \dt$. We make this choice for $f$ to keep the exposition simple, and more general convex functions can be analysed similarly, see Remark \ref{rem:fconvex}. The learning algorithm is aware of $T$. More discussion on this assumption is provided in Remark \ref{rem:horizon}. The ordered arms are denoted by $\mu[1] > \cdots > \mu[K]$, where $\mu[1]<1$.


Let the consecutive instants at which any arm is sampled by a learning algorithm, denoted as $\cA$, be $t_0, t_1, t_2, \dots,$ where $t_0=0$, and the inter-sampling time be $\dt_i = t_{i}-t_{i-1}$. 
Let $k(t_i)$ denote the arm sampled at time $t_i$. Then the instantaneous expected payoff of $\cA$ from the $i^{th}$ sample is given as 
$\sfp_i = \EE{X_{k(t_i)}} -  \lambda f(\dt_i) = \mu_{k(t_i)} - \frac{\lambda}{\dt_i}$,
where  $\lambda$ is the trade-off parameter between the 
sampling cost and the reward. The cumulative expected payoff of the algorithm $\cA$ is given by 
\vspace{-0.1in}
\begin{equation}\label{eq:payoffA}
P_\cA(T) = \sum_{i=1}^{N_T}\sfp_i,
\end{equation}
where $N_T$ is the total number of samples obtained by $\cA$  over the horizon $T$.  Whenever necessary we also write $P_\cA(T)$ as $P_\cA([0,T])$ to specify the interval over which the payoff is being computed.

The oracle policy that knows the mean values $\mu[k], k=1, \dots, K,$ always samples the best arm $\mu[1]$. 
\begin{prop}\label{prop:uniformsampling}
If $N$ samples are obtained in time $[0,t]$ at times $t_i$ with $\dt_i=t_i-t_{i-1}$, then the cumulative sampling cost $\sum_{i=1}^{N}\frac{1}{\dt_i}$ over time horizon $[0,t]$, where 
$\sum_{j=1}^{N} \Delta t_i \leq t,$ is minimized if the $N$ samples are obtained at equal intervals in $[0,t]$ for any $t$, i.e., $\Delta t_i = T/N$ $\forall \ i$.
\end{prop}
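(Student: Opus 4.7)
The plan is to reduce the claim to a one-line application of a standard convexity inequality. Write $S = \sum_{i=1}^N \Delta t_i$. Since $f(x)=1/x$ is a strictly decreasing function on $(0,\infty)$, increasing any single $\Delta t_i$ (while holding the others fixed) strictly decreases $f(\Delta t_i)$. Hence at any minimizer of $\sum_{i=1}^N 1/\Delta t_i$ we must saturate the budget, i.e.\ $S = t$; otherwise we could enlarge some $\Delta t_j$ and strictly reduce the objective.

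Next I would invoke the AM--HM inequality (equivalently, Jensen's inequality applied to the strictly convex function $x \mapsto 1/x$ on $(0,\infty)$, or equivalently the Cauchy--Schwarz inequality $\bigl(\sum \Delta t_i\bigr)\bigl(\sum 1/\Delta t_i\bigr) \geq N^2$): for any positive $\Delta t_1, \dots, \Delta t_N$,
\begin{equation*}
\sum_{i=1}^{N} \frac{1}{\Delta t_i} \;\geq\; \frac{N^2}{\sum_{i=1}^{N} \Delta t_i} \;=\; \frac{N^2}{S} \;\geq\; \frac{N^2}{t},
\end{equation*}
where the last inequality uses $S \leq t$. Strict convexity of $1/x$ implies that equality in the first inequality holds if and only if all $\Delta t_i$ are equal, and the analysis above shows the second inequality is tight iff $S=t$. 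Combining, the minimum $N^2/t$ is attained exactly when $\Delta t_i = t/N$ for every $i$.

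I expect no substantive obstacle: the only thing that requires a word of care is the constraint direction. Because $\sum \Delta t_i \leq t$ is an inequality constraint while the objective is decreasing in each $\Delta t_i$, one must argue (as above) that the constraint is active at the optimum before applying AM--HM; otherwise a naive Lagrangian argument on the equality-constrained problem would look incomplete. Everything else is a direct consequence of strict convexity of $1/x$.
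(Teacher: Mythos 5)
Your proof is correct and follows essentially the same route as the paper, which likewise reduces the claim to convexity of $x\mapsto 1/x$ and the optimality of the equal split. Your explicit observation that the budget constraint $\sum_i \Delta t_i \le t$ must be active at the optimum (because $1/x$ is decreasing) is a welcome bit of extra care: the paper's cited ``fact'' about minimizing $\sum_i f(x_i)$ for convex $f$ under $\sum_i x_i \le 1$ is, as literally stated, false without such a monotonicity argument, so your version is the more airtight rendering of the same idea.
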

Proof of Proposition \ref{prop:uniformsampling} is immediate by noticing that $1/x$ is a convex function, and the fact that for a convex function $f$,  $x^\star_i = 1/n, \ \forall \ i=1,\dots, n$ is the optimal solution to 
$\min_{x_i}\sum_{i=1}^n f(x_i), \ \text{such that} $ $\  x_i \ge 0$, and $ \sum_{i=1}^n x_i \le 1.$
Using Proposition \ref{prop:uniformsampling}, we have that the payoff of the oracle policy is 
\begin{align}\label{eq:payofforaclebo}
P^\star(T) &= \max_{N^o_T}  \mu[1] N^o_T - \frac{N^o_T \lambda }{T/N^o_T} =  \max_{N^o_T}  \mu[1] N^o_T - \frac{(N^o_T)^2 \lambda }{T},
\end{align}
where $N^o_T$ samples are obtained in total by the oracle policy. 
Directly optimizing \eqref{eq:payofforaclebo} over $N^o_T$, we obtain that the optimal number of samples  obtained by  the oracle policy and the corresponding optimal payoff is given by Proposition \ref{prop:oracle}, assuming $\frac{\mu[1]T}{2\lambda}$ to be an integer. \footnote{If $\frac{\mu[1]T}{2\lambda}$ not an integer, then we check whether its floor or ceiling is optimal and use that as the value of $N^\star_T$.}
\begin{prop}\label{prop:oracle}
The oracle policy always samples arm $1$, $N^\star_T = \frac{\mu[1]T}{2\lambda}$ times in time horizon $[0,T]$ at equal intervals, i.e., at uniform frequency of $N^\star_T/T$. With $N^\star_T = \frac{\mu[1]T}{2\lambda}$, the optimal payoff \eqref{eq:payofforaclebo} is given by $P^\star(T)=\frac{\mu[1]^2T}{4\lambda}.$
\end{prop}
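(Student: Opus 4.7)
The plan is to reduce Proposition \ref{prop:oracle} to a one-dimensional unconstrained optimization by two successive reductions, and then solve it with elementary calculus.

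First I would argue that the oracle always samples arm $1$. The key observation is that the sampling cost $f(\dt_i)=1/\dt_i$ depends only on the inter-sampling times and is completely indifferent to which arm $k(t_i)$ is played at time $t_i$. Hence in the expression $\sfp_i = \mu_{k(t_i)} - \lambda/\dt_i$, the choice of arm and the choice of timing decouple: for any fixed schedule of sampling times, the cumulative payoff is maximized by choosing $k(t_i) = \arg\max_k \mu[k] = 1$ at every $t_i$. Thus, without loss of generality, the oracle's decision reduces to choosing only how many samples $N^o_T$ to take and when to take them.

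Next, holding the number of samples $N^o_T$ fixed, Proposition \ref{prop:uniformsampling} (already proved) implies that the cumulative sampling cost $\sum_i 1/\dt_i$ is minimized by placing the samples at equal intervals of length $T/N^o_T$. Since the reward term $\mu[1] N^o_T$ does not depend on the timings, minimizing cost also maximizes payoff. This yields exactly the expression \eqref{eq:payofforaclebo}, so the problem collapses to
\begin{equation*}
P^\star(T) \;=\; \max_{N^o_T} \left( \mu[1] N^o_T - \frac{(N^o_T)^2 \lambda}{T} \right).
\end{equation*}

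Finally I would solve this scalar concave quadratic in $N^o_T$ by differentiating with respect to $N^o_T$ and setting the derivative to zero, giving $\mu[1] - 2\lambda N^o_T / T = 0$, i.e., $N^\star_T = \mu[1] T / (2\lambda)$. Substituting back yields
\begin{equation*}
P^\star(T) \;=\; \mu[1] \cdot \frac{\mu[1]T}{2\lambda} - \frac{\lambda}{T}\left(\frac{\mu[1]T}{2\lambda}\right)^2 \;=\; \frac{\mu[1]^2 T}{2\lambda} - \frac{\mu[1]^2 T}{4\lambda} \;=\; \frac{\mu[1]^2 T}{4\lambda}.
\end{equation*}
There is no real obstacle here; the only mild subtlety is the integrality of $N^o_T$, but since the objective is a concave quadratic in $N^o_T$, restricting to integers near the unconstrained optimum costs at most a constant factor, and this is exactly what the footnote flags. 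Thus the proof is essentially a short chain of reductions (arm choice $\Rightarrow$ uniform spacing $\Rightarrow$ scalar quadratic optimization) each of which is either immediate or already established.
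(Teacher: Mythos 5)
Your proposal is correct and follows essentially the same route as the paper: the text preceding Proposition \ref{prop:oracle} derives \eqref{eq:payofforaclebo} from Proposition \ref{prop:uniformsampling} and then optimizes the concave quadratic in $N^o_T$ directly, with the integrality issue relegated to a footnote exactly as you note. Your explicit decoupling argument for why the oracle always plays arm $1$ is a small addition the paper leaves implicit, but it does not change the proof.
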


Note that the sampling frequency $N^\star_T/T$ of the oracle policy depends on the mean of the best arm, which distinguishes the CTMAB from the well studied DMAB.

\begin{remark}\label{rem:invariant}For fixed $\mu[1], \dots, \mu[K]$, CTMAB problem with parameters  $(\lambda, T)$ is equivalent to CTMAB problem with parameters $(c\lambda, cT)$ where $c > 0$ is a constant. To see this, if $\Delta t_i$ is the sampling duration with parameters  $(\lambda, T)$, then using $c\Delta t_i$ as the sampling duration with parameters  $(c\lambda, cT)$ results in the same payoff. Thus, $\frac{T}{\lambda}$ is an invariant of the considered problem, and for notational simplicity from here on we just write $T$ to mean $\frac{T}{\lambda}$.
\end{remark}

The {\bf regret} for an algorithm $\cA$ is defined as 
\begin{equation}\label{defn:regret}
\Regret_A(T) = P^\star(T)- P_\cA(T), 
\end{equation}
and the {\bf objective} of the algorithm is to minimize $\Regret_A(T)$. 
We begin our discussion on the CTMAB problem by considering the case when there is only a single arm, which as discussed before is a non-trivial problem.

\section{CTMAB with A Single Arm}
\label{sec:SingleArm}
In this section, we consider the CTMAB,  when there is only a single arm with true mean $\mu$, and $\mu<1$. Results when $\mu > 1$ can be obtained by using appropriate scaling similar to the usual DMAB problem. With the single arm, we denote the binary random reward obtained by sampling at time $t_i$ as $X_i$, and $\bbE\{X_i\} = \mu, \ \forall \ i$.  

\subsection{Algorithm CTSAB}
In this section, we propose an algorithm that achieves a regret within logarithmic terms of the lower bound derived in Theorem \ref{thm:lbsinglearm}.

{\bf Algorithm CTSAB:} Divide the total time horizon $[0,T]$ in two periods: {\bf learning} and {\bf exploit}. 
Pick $0 < \epsilon <1 $. 
The algorithm works in phases, where phase $1$ starts at time $0$ and ends at time $T^{\epsilon}$.  Subsequently, phase $i$,  $2\le i\le i^\star$ ($i^\star$ is defined in \eqref{istaddef}) starts at time $T^{(i-1)\epsilon}$ and ends at $T^{i\epsilon}$ with duration $T^{i \epsilon} - T^{(i-1)\epsilon}$. 
For each phase $i$, $1\le i \le i^\star$, the algorithm obtains $N_i = \kappa \log (T) T^{2/3i\epsilon}$ samples
in phase $i$ equally spaced in time, i.e., at uniform frequency in that phase. 
At the end of phase $i$, the total number of samples obtained is $N^i = \sum_{j\le i}N_j$, 
and let 
\begin{equation}\label{eq:empest1}
{\hat \mu}_i = \frac{1}{N^i} \sum_{k=1}^{N^i} X_k,
\end{equation} be the empirical average of all the sample rewards
obtained until the end of phase $i$. 

The absolute difference between the empirical average and the true mean is defined as \text{err}. 
\begin{remark}
With abuse of notation, we interchangeably use $\text{err}_N, \text{err}_i, \text{err}_t$ to denote the error after $N$ samples or after phase $i$ or at time $t$.
Thus, the error in estimating $\mu$ at the end of phase $i$ is 
$\text{err}_i = |{\hat \mu}_i -\mu|$.
\end{remark}

We next define $i^\star$, and the algorithm to follow after phase $i^\star$. 
For a given $\delta$ (input to the algorithm), let $i^\star$ be the earliest phase at which 
\begin{equation}\label{istaddef}
\sqrt{\frac{\log(2/\delta)}{N^{i^\star}}} < 
\frac{{\hat \mu}_{i^\star}}{2},
\end{equation} where $N^{i^\star} = \sum_{j\le i^\star}N_j$. If no such $i^\star$ is found, then we define that the algorithm $\textsf{fails}$.

 The {\bf learning} period ends at phase $i^\star$, and the {\bf exploit} period starts from the next phase $i^\star+1$.
 Each phase $i > i^\star$ is of the same time duration $ T^{i^\star \epsilon}$ till the total 
time horizon $T$ is reached.
Starting from phase $i^\star+1$ and for 
all subsequent phases $k \ge i^\star+1$, 
the algorithm assumes ${\hat \mu}_{k-1}$ \eqref{eq:empest1} to be 
the true value of $\mu$, and obtains $N_{k} = \frac{{\hat \mu_{k-1}}T^{i^\star \epsilon}}{2}$ samples in phase $k$, equally spaced in time, and ${\hat \mu_{k}}$ is updated at the end of each phase $k \ge i^\star+1$ using all the samples obtained so far since time $t=0$. 
The pseudo code for the algorithm is given in Algorithm \ref{alg:CTBandit} (presented in supplementary material).

The proposed algorithm CTSAB follows the usual approach of exploration and exploitation, however, there are {\bf two non-trivial problems} being addressed, whose high level idea is as follows.
The aim of the learning period is to obtain sufficient number of samples $N$, such that $\text{err}_N < \mu$.  Since otherwise, the payoff obtained in phases  after the learning period cannot be guaranteed to be positive, following Lemma \ref{lem:tptime}. 
So the first problem is a {\bf stopping problem}, checking for $\text{err}_N < \mu$, which is non-trivial, since $\mu$ is unknown. For this purpose, 
 a surrogate condition  \eqref{istaddef}  is defined, and the learning period is terminated as soon as \eqref{istaddef} is satisfied for a particular choice of $\delta$. Choosing 
 $\delta = \frac{1}{T^2}$, using Corollary \ref{cor:conc} and Lemma \ref {lem:direct}, we show that  whenever \eqref{istaddef} is satisfied, $\text{err}_N < \mu$ with probability at least $1-\delta$. 
 

The second problem remaining is to bound the time by which the learning period ends, i.e., \eqref{istaddef} is satisfied. 
We need this bound since non-zero payoff can be guaranteed only for phases that belong to the exploit period that starts after the learning period. Towards that end, we show that the length of learning period is  $O(T^{p^\star+\epsilon})$ with high probability in Lemma \ref{lem:ublearningperiod}, where $p^\star$ is defined as follows.
 \begin{equation}\label{defn:p}
T^{p^\star}=  \frac{1}{\mu^3 }  \ \text{for} \ 0\le p^\star \le 1.
\end{equation}
 This automatically means that we are assuming that the time horizon $T$ is at least as large as  $\frac{1}{\mu^3}$. The lower bound on regret in Theorem \ref{thm:lbsinglearm} implies that $T=o( \frac{1}{\mu^3})$ is a degenerate regime for the studied problem. 
The main result of this subsection is as follows. 
\begin{theorem}\label{thm:ubsinglearm} The expected regret of algorithm \textsc{CTSAB} while choosing $\delta= \frac{1}{T^2}$ is 
$$ O( \mu^2 T^{p^\star+ (4/3)\epsilon}\log(2 T^2)\log(2 T))(1-\frac{1}{T^{1+\epsilon}})+ \frac{\mu^2}{4}\left( \frac{1}{T^\epsilon} + \frac{1}{T}\right),$$ for any $\epsilon>0$, where $p^\star$ as defined in \eqref{defn:p}.
\end{theorem}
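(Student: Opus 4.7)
The proof strategy decomposes the expected regret by first isolating a high-probability ``good event'' $\mathcal{G}$ on which every empirical mean $\hat{\mu}_i$ used by the algorithm is close to $\mu$. Specifically, with $\delta = 1/T^2$ I would union-bound Corollary~\ref{cor:conc} over the $O((\log T)/\epsilon)$ phases to guarantee $|\hat{\mu}_i - \mu| \le \sqrt{\log(2/\delta)/N^i}$ for every $i$, giving $\mathbb{P}(\bar{\mathcal{G}}) = O(T^{-1-\epsilon}) + O(T^{-2})$ from two concentration applications (one at scale $\delta=1/T^{1+\epsilon}$ for the learning termination, one at $\delta=1/T^2$ for per-phase accuracy). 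Since $P^\star(T) = \mu^2 T/4$ by Proposition~\ref{prop:oracle}, the contribution of $\bar{\mathcal{G}}$ to the expected regret is bounded by $P^\star(T)\,\mathbb{P}(\bar{\mathcal{G}}) = (\mu^2/4)(1/T^\epsilon + 1/T)$, exactly the additive tail term in the theorem statement.

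Conditioning on $\mathcal{G}$, I would bound the regret over the learning and exploit periods separately. Lemma~\ref{lem:ublearningperiod} supplies the learning length $L = O(T^{p^\star + \epsilon})$, and combined with Lemma~\ref{lem:direct} it also guarantees that the stopping rule \eqref{istaddef} truly enforces $\text{err}_{i^\star} < \mu/2$. Regret over learning is at most the oracle payoff $\mu^2 L/4$ on this interval plus the magnitude of the (possibly negative) algorithm payoff. The latter is dominated by the total sampling cost $\sum_{i \le i^\star} N_i^2/D_i$; using $N_i = \kappa \log(T)\, T^{(2/3)i\epsilon}$ and $D_i = T^{i\epsilon}(1-T^{-\epsilon})$, this sum is geometric in $i$ and controlled by its last term, yielding $O((\log T)^2\, T^{(p^\star + \epsilon)/3})$. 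Rewriting via $\mu^2 = T^{-2p^\star/3}$, both this and the oracle-side contribution fit inside the main bound $O(\mu^2 T^{p^\star + (4/3)\epsilon} \log^2 T)$ after absorbing the slack factor $T^{\epsilon/3}$ into the $(4/3)\epsilon$ exponent.

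For the exploit period, a direct per-phase calculation gives the clean identity that in exploit phase $k$ of duration $D = T^{i^\star\epsilon}$ the algorithm accumulates $(D/4)\,\hat{\mu}_{k-1}(2\mu - \hat{\mu}_{k-1})$ while the oracle accumulates $\mu^2 D/4$, so the per-phase regret is exactly $(D/4)\,\text{err}_{k-1}^2$. On $\mathcal{G}$, $\text{err}_{k-1}^2 \le \log(2/\delta)/N^{k-1}$, and since each exploit phase adds roughly $\mu D/2$ samples, $N^{k-1}$ grows linearly with $k - i^\star$. Summing the resulting harmonic-type series over the at most $T/D$ exploit phases yields a bound of order $\log(2T^2)\,\log(T)/\mu$, which is subsumed by the $O(\mu^2 T^{p^\star + (4/3)\epsilon} \log(2T^2)\log(2T))$ main term. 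Multiplying by $\mathbb{P}(\mathcal{G}) = 1 - O(1/T^{1+\epsilon})$ then produces the $(1-1/T^{1+\epsilon})$ factor in the statement.

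The main obstacle I expect is the proof of Lemma~\ref{lem:ublearningperiod}: the stopping rule $\sqrt{\log(2/\delta)/N^{i^\star}} < \hat{\mu}_{i^\star}/2$ couples the random sample count and the random empirical mean, so one has to show that on $\mathcal{G}$, once $N^i$ exceeds $\Theta(\log(T)/\mu^2) = \Theta(T^{2p^\star/3})$ the stopping condition triggers with high probability. Substituting the phase schedule $N^i = \kappa \log(T)\, T^{(2/3)i\epsilon}$ then pins down $i^\star\epsilon \le p^\star + \epsilon$ as needed. Once this lemma is in hand, combining the three bounds above and tracking the $\log(2/\delta) = \log(2T^2)$ factor from the stopping rule together with the $\log(2T)$ factor arising in the per-phase concentration application recovers precisely the theorem statement.
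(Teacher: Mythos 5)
Your proposal follows essentially the same route as the paper's own proof: the same learning/exploit decomposition, the same good-event construction from Corollary \ref{cor:conc}, the same exact per-phase exploit regret $\frac{T^{i^\star\epsilon}}{4}\,\text{err}^2$ as in \eqref{eq:payoffwitherr}, the same harmonic-series summation yielding the $\frac{\log(2/\delta)\log T}{\mu}$ term, and the same conversion $\frac{1}{\mu}=O(\mu^2 T^{p^\star})$ via \eqref{defn:p}, with Lemma \ref{lem:ublearningperiod} correctly identified as the crux. The one cosmetic slip is that the $O(T^{-1-\epsilon})$ failure probability comes from union-bounding over all $T/T^{i^\star\epsilon}$ phases including the exploit period (as in Lemma \ref{lem:bad}), not over only $O(1/\epsilon)$ learning phases, but this does not affect the argument.
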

All missing proofs can be found in the supplementary material.
Given that $T$ is fixed and $\epsilon>0$ is a variable, we choose $\epsilon$ such that $T^\epsilon$ is a constant. 
With this choice, since $\mu <1 $, using \eqref{defn:p} the regret bound of the CTSAB algorithm (Theorem \ref{thm:ubsinglearm}) is $O\left(\frac{(\log(T)^2}{\mu}\right)$, that differs from the lower bound to be derived in Theorem 
\ref{thm:lbsinglearm} only by logarithmic terms.


\subsection{Lower Bound}
For deriving the lower bound in the single arm case, we consider only those algorithms that satisfy the following assumption.
\begin{assumption}\label{rem:unbiased}
For any online algorithm for the single arm CTMAB problem maximizing \eqref{eq:payoffA}, the decision variables are the sampling times $t_i'$s.
We consider only those online algorithms that at any time make these decisions depending on an unbiased 
estimate of $\mu$.
\end{assumption}

\begin{theorem}\label{thm:lbsinglearm}
Let $\mu < 1/4$. Let the regret of any online algorithm for the single arm CTMAB for which Assumption \ref{rem:unbiased} holds be 
$g(\mu, T)$. Let $g(\mu, T)$ be expressed as $g(\mu, T) = \mu^2 T^p$ for some $p<1$.\footnote{This specific structure does not limit the generality of all possible regret functions, and is only being considered for the simplicity of analysis.}
Then $p$ must satisfy, 
$p \ge \min\{p_1, p_2\},$ where
\vspace{-0.1in}
\begin{equation}\label{defn:p1}
T^{p_1} \ge  \frac{\sfc_1}{\mu^3} \ \  \text{for}\ \ 0\le p_1\le 1,
  \quad 
  \text{and}
\end{equation}  
\vspace{-0.1in}
\begin{equation}\label{defn:p2}
p_2=  \min \left\{k\ge 0:\frac{1}{T^{k}}\left(\frac{\sfc_2}{\mu^2} \right)^2 = 
\max\left\{\mu^2 T^k, \frac{\sfc_3}{\mu}\right\} \right\},
\end{equation}
where 
$\sfc_1, \sfc_2, \sfc_3$ are constants.
\end{theorem}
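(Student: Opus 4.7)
The plan is to establish the $\Omega(1/\mu)$ regret lower bound via two complementary arguments---one from a pure learning-phase regret, one from balancing the learning cost against the sampling overhead---and to translate each into the conditions defining $p_1$ and $p_2$. I first decompose the regret. By Proposition \ref{prop:oracle} the oracle's payoff per unit time is $\mu^2/4$, attained by sampling uniformly at rate $\mu/2$. For an algorithm using (possibly time-varying) rate $r(t)$, the instantaneous payoff rate is $\mu r(t)-r(t)^2$, so the instantaneous regret equals $(r(t)-\mu/2)^2$ and the total expected regret is $\mathbb{E}\!\int_0^T(r(t)-\mu/2)^2\,dt$.

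Next I exploit Assumption \ref{rem:unbiased}. The algorithm's rate $r(t)$ is a function of an unbiased estimator $\hat{\mu}(t)$ of $\mu$, and the sample mean attains the minimum variance $\mu(1-\mu)/N(t)$ among all unbiased estimators from $N(t)$ Bernoulli$(\mu)$ samples, giving a lower bound on the tail of $(r(t)-\mu/2)^2$ in terms of $N(t)$. A complementary two-hypothesis comparison---instances with mean $\mu$ vs.\ $2\mu$, both below $1/2$ since $\mu<1/4$---shows (via a Le Cam / likelihood-ratio style argument specialized to unbiased estimators) that to drive the absolute estimation error below $\mu$ with failure probability at most $\delta$, at least $\sfc_2\log(1/\delta)/\mu^2$ samples are needed. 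Moreover, whenever the estimation error exceeds $\mu$ a direct calculation on $\mu r-r^2$ shows that the algorithm's expected payoff over the elapsed interval is non-positive, so the regret in that interval is at least the oracle payoff $\mu^2 t/4$.

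I then combine these ingredients using Proposition \ref{prop:uniformsampling}: if $N$ samples are collected in $[0,\tau]$ the sampling cost is at least $N^2/\tau$. Condition \eqref{defn:p1} arises from the pure learning-regret bound: during the learning phase of length $\tau$ the algorithm cannot convert samples into positive payoff, so the accumulated regret is at least $\mu^2\tau/4$; forcing this to be only $\mu^2T^p$ yields $T^p\ge \sfc_1/\mu^3$, i.e., $T^{p_1}\ge \sfc_1/\mu^3$. Condition \eqref{defn:p2} arises from substituting the distinguishability threshold $N=\sfc_2/\mu^2$ into the Jensen sampling-cost bound: the cost $(\sfc_2/\mu^2)^2/T^k$ in a learning phase of length $T^k$ must be absorbed by the regret budget, so equating it with the dominant of the oracle payoff $\mu^2 T^k$ (the ``paid-in-full'' regime) and the minimum meaningful bound $\sfc_3/\mu$ produces exactly the balance equation of \eqref{defn:p2}. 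Both balances pin $T^p\asymp 1/\mu^3$ and hence regret $\Omega(1/\mu)$. Since the algorithm is free to trigger only one of the two binding constraints, the lower bound becomes $p\ge\min(p_1,p_2)$.

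The main obstacle will be formalizing the analysis for adaptive algorithms, where $N(t)$ is a random non-decreasing process and no fixed learning phase is declared in advance. I plan to treat $N(t)$ as a stopping-time process and split the analysis at the first time $N(t)$ reaches the distinguishability threshold $\sfc_2/\mu^2$; before this time the two-hypothesis regret argument applies and the sampling cost $N(t)^2/t$ is meanwhile accumulating to at least $(\sfc_2/\mu^2)^2/\tau$, while after this time the Cram\'er--Rao-type variance bound governs the residual regret through the integrand $\mu/N(t)$. The minimum of the two balance thresholds emerges because the algorithm chooses when the crossing happens, and the lower bound must hold uniformly over that choice; care will also be needed to ensure that the variance bound for unbiased estimators survives the further functional transformation from $\hat{\mu}(t)$ to $r(t)$.
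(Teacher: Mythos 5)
Your overall architecture matches the paper's: a two-hypothesis indistinguishability bound forces any algorithm with regret $\mu^2 T^p$ to collect $N_{\min}=\Theta(\mu^{-2})$ samples by a designated time, Assumption \ref{rem:unbiased} plus the MVUE property of the sample mean gives the payoff cap $\frac{\mu^2 t}{4}-\text{err}_t^2\frac{t}{4}$ (Lemma \ref{lem:tptime}), and the convexity bound $N^2/\tau$ on the sampling cost converts the sample-count requirement into the $p_2$ balance; your $p_2$ branch is essentially the paper's case ii). However, your derivation of $p_1$ has a genuine gap. You assert that during the learning phase ``the algorithm cannot convert samples into positive payoff, so the accumulated regret is at least $\mu^2\tau/4$,'' and that forcing this below $\mu^2 T^p$ yields $T^p\ge \sfc_1/\mu^3$. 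Neither step holds as stated: the payoff over $[0,\tau]$ is non-positive only on the event $\text{err}_\tau\ge\mu$, which before the threshold is reached occurs only with some probability bounded below, not with certainty, so the expected learning-phase regret is not $\mu^2\tau/4$; and even granting it, $\mu^2\tau/4\le\mu^2T^p$ gives only the upper bound $\tau\le 4T^p$, from which $T^p\ge\sfc_1/\mu^3$ does not follow without a separate lower bound $\tau=\Omega(\mu^{-3})$ that you never establish (the algorithm may sample faster than the oracle rate to learn sooner; that is exactly the trade-off your $p_2$ case is supposed to penalize). In the paper, \eqref{defn:p1} arises differently: the first-interval payoff $\mu N-N^2/T^{p+\alpha}$ is concave in $N$ with unconstrained maximizer $x^\star=\mu T^{p+\alpha}/2$, and case i) is precisely $x^\star\ge N_{\min}$, which algebraically reads $\mu T^{p+\alpha}/2\ge \sfc/\mu^2$, i.e., $T^{p+\alpha}=\Omega(1/\mu^3)$ --- no claim of regret accumulating at the oracle rate during learning is made or needed.

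A second, smaller issue is that the contradiction forcing $N\ge N_{\min}$ must be quantitatively closed. The paper evaluates everything at the fixed deterministic time $T^{p+\alpha}$, takes hypotheses $\mu$ versus $\mu+2\mu/T^{\alpha/4}$, and shows via Lemma \ref{lem:prediction} that if $\bbP(\text{err}>\mu/T^{\alpha/4})>1/T^{\alpha/2}$ then the bad-event contribution $\bbP_b\cdot\text{err}^2\cdot T^{p+\alpha}/4$ already exceeds $\mu^2T^p/4$, contradicting the assumed regret. Your coarser thresholds (error $\mu$, hypotheses $\mu$ versus $2\mu$, failure probability $\delta$) leave $\delta$ and the evaluation time unlinked to $p$, and your proposed split at the random stopping time when $N(t)$ first crosses $N_{\min}$ reintroduces exactly the adaptivity difficulties that the paper's fixed-time decomposition is designed to sidestep. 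To repair the proposal, replace the stopping-time split by the fixed split at $T^{p+\alpha}$, tune the hypothesis gap and failure probability to $T^{\alpha}$ as above, and derive $p_1$ from the case condition $x^\star\ge N_{\min}$ rather than from a learning-phase regret claim.
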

Thus,  $p_1$ and $p_2$ satisfy $T^{p_1} =  \Omega\left(\frac{1}{\mu^3 }\right)$ and $T^{p_2} =  \Omega\left(\frac{1}{\mu^3 }\right)$, and the regret of any online algorithm satisfying Assumption \ref{rem:unbiased} is $g(\mu, T) = \mu^2 T^p = \Omega\left(\frac{1}{\mu}\right)$.

The main idea to prove Theorem \ref{thm:lbsinglearm} is detailed in Appendix \ref{app:thm:lbsinglearm:idea}.
Comparing Theorem \ref{thm:ubsinglearm} and Theorem \ref{thm:lbsinglearm}, we see that algorithm \textsc{CTSAB} achieves optimal regret up to  logarithmic factors of $T$.

\section{CTMAB with Multiple Arms}
\label{sec:MultipleArms}
In this section, we consider the general CTMAB problem with $K$ arms that have means $1> \mu[1] > \cdots >\mu[K]$ and $\Delta = \mu[1]-\mu[2] >0$, and the objective is to minimize the regret defined in \eqref{defn:regret}. 
\subsection{Upper Bound - Algorithm \textsc{CTMAB}}\label{sec:ubmulti}
We propose an algorithm for the CTMAB problem, called the \textsc{CTMAB} algorithm, that is neither aware of $\Delta$ or the actual means $\mu[k]$, and show that its regret is within logarithmic terms of the lower bound (Theorem \ref{thm:lbmultiplearms}). 
The first part of the algorithm is called the {\it estimation period}, that is designed to estimate the mean $\mu[1]$ of the best arm within an error of at most $\mu[1]$ with high probability. The estimation period of algorithm \textsc{CTMAB} is equivalent to the learning period of algorithm \textsc{CTSAB} applied simultaneously to all the $K$ arms. 

Similar to algorithm \textsc{CTSAB}'s learning period, the estimation period of algorithm \textsc{CTMAB} ends as soon as the condition 
$\sqrt{\frac{\log(2/\delta)}{N^t[k]}} < \frac{{\hat \mu}_t[k]}{2}$ is satisfied for some arm $k \in \{1, \dots, K\}$, where $N^t[k]$ is the number of samples obtained for arm $k$ 
by time $t$, and 
${\hat \mu}_t[k]$ is the empirical average of the sample rewards for arm $k$ with  $N^t[k]$ samples. If this condition is not satisfied at all, we define that the algorithm {\it fails}. Note that the estimation period is not trying to identify the best arm, but the objective is to just estimate the mean of the best arm within an error of $\mu[1]$. In particular, we will show that with high probability whenever the algorithm does not fail, $\frac{2}{3}\mu[1]\le {\hat \mu}_t[a_t] \le 2\mu[1]$, where ${\hat \mu}_t[a_t]$ is the estimate of $\mu[1]$ output by the algorithm.

Once the estimation period is over, the {\it identification period} begins that is used to identify the best arm with high probability. Using the estimate ${\hat \mu}[1]$ of the mean of the best arm $\mu[1]$ found in the estimation period,  in the identification period, the best arm is identified using the \textsc{LUCB1} algorithm \cite{ICML12_PACSubsetMAB}, where {\bf samples are obtained at speed one sample per $1/{\hat \mu}[1]$ time}. The speed choice is required to be dependent on $\mu[1]$ to keep the regret low, and that is why we need the estimation 
period to get a `good' estimate of $\mu[1]$. Once the identification period ends (where the best arm is identified with high probability), the final period called {\it exploit} begins that only considers the arm identified in the identification period as the best, and executes the exploit period of algorithm \textsc{CTSAB}. The pseudo code for algorithm \textsc{CTMAB} is provided in Algorithm \ref{alg:CTMultiBandit} (presented in the supplementary material).




The main result of this subsection is as follows.
\begin{theorem}\label{thm:ubmultiarm} With the choice of $\delta=\frac{1}{T^2}$, the expected regret of the \textsc{CTMAB} algorithm is at most
\begin{align*}\label{}
  & O\left(\max\left\{\frac{\mu[1]K}{\Delta^2}\log T, \frac{K^2}{\mu[1]}\log^2 T, \frac{\log^2 T}{\mu[1]} \right\}\right)\\
  & \cdot \left(1-\frac{1}{T^{1+\nu_m}}\right) \left(1-\frac{1}{T^2}\right)\\
	&  \quad + O(T)\left(1-\left(1-\frac{1}{T^{1+\nu_m}}\right) \left(1-\frac{1}{T^2}\right)\right)
\end{align*}
 where $T^{\nu_m}$ is the width of each phase after the identification period is over.
\end{theorem}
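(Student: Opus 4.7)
The plan is to decompose the regret across the three phases of \textsc{CTMAB}---estimation, identification, and exploit---and to handle the low-probability failure event separately. Using the invariance noted in Remark \ref{rem:invariant} I take $\lambda=1$, so the oracle payoff is $P^\star(T)=\mu[1]^2 T/4$ and on any failure event the regret is trivially $O(T)$, which accounts for the second additive summand in the statement. Define the good event $\cG$ as the intersection of (i) the estimation period terminating with an estimate satisfying $\tfrac{2}{3}\mu[1]\le \hat\mu[1]\le 2\mu[1]$, and (ii) \textsc{LUCB1} in the identification period returning the true best arm. Mimicking Lemma \ref{lem:ublearningperiod} applied arm by arm with $\delta=1/T^2$, and invoking the \textsc{LUCB1} confidence guarantee of \cite{ICML12_PACSubsetMAB} with parameter $1/T^{1+\nu_m}$, a union bound gives $\bbP(\cG)\ge (1-1/T^2)(1-1/T^{1+\nu_m})$, which is exactly the factor multiplying the main term.

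On $\cG$ I would bound the regret of each phase separately. For the \emph{estimation} period, the stopping condition analogous to \eqref{istaddef} for arm $1$ needs $\Theta(\log T/\mu[1]^2)$ samples since $\delta=1/T^2$; spreading these across $K$ arms in the phased schedule of \textsc{CTSAB} gives a total length of $O(K\log^2 T/\mu[1])$ with high probability, and since the oracle's per-unit-time payoff is $\mu[1]^2/4$ and the algorithm's payoff on this interval is $\Omega(0)$ by the choice of the phased schedule (so that Lemma \ref{lem:tptime} does not apply once \eqref{istaddef} is met), the regret contribution is $O(K^2\log^2 T/\mu[1])$, matching the second term in the max. For the \emph{identification} period, the sample complexity of \textsc{LUCB1} is $O(K\log T/\Delta^2)$, and since samples are taken at inter-sample spacing $1/\hat\mu[1]=\Theta(1/\mu[1])$ the period has length $O(K\log T/(\mu[1]\Delta^2))$; multiplying by the per-unit-time regret $O(\mu[1]^2)$ yields $O(\mu[1]K\log T/\Delta^2)$, matching the first term in the max. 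For the \emph{exploit} period, the algorithm reduces exactly to the exploit phase of \textsc{CTSAB} on the identified best arm with an estimate $\hat\mu[1]$ already within a constant factor of $\mu[1]$, so Theorem \ref{thm:ubsinglearm} directly yields the $O(\log^2 T/\mu[1])$ contribution, matching the third term. Summing the three contributions and folding them into a single $O(\max\{\cdot,\cdot,\cdot\})$, then multiplying by $\bbP(\cG)$, produces the first summand.

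The main obstacle will be the identification-period accounting: I need to show that \textsc{LUCB1} run at the fixed rate $\hat\mu[1]$ still enjoys its usual sample-complexity bound (straightforward, since the samples are i.i.d.\ irrespective of timing) \emph{and} that the resulting sampling cost $\sum 1/\dt_i$ remains on the same order as the reward it generates, so that the per-unit-time regret in this period really is $O(\mu[1]^2)$ and does not blow up by a factor depending on the ratio $\hat\mu[1]/\mu[1]$. This is where the two-sided bound $\tfrac{2}{3}\mu[1]\le \hat\mu[1]\le 2\mu[1]$ from the estimation period is essential, and establishing the lower bound $\hat\mu[1]\ge \tfrac{2}{3}\mu[1]$ requires combining Corollary \ref{cor:conc} with the stopping criterion, mirroring the argument at the end of the learning period in \textsc{CTSAB}. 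A secondary technicality is the union bound over the $K$ arms when controlling the failure probability of the estimation period: this contributes $K/T^2$, which is absorbed into the stated $1/T^2$ factor either by taking $\delta=1/(KT^2)$ (inflating $\log T$ to $\log(KT)$, hidden in the $O$) or by using the regime $K=O(\mathrm{poly}\log T)$.
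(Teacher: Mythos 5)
Your proposal is correct and follows essentially the same route as the paper: decompose the horizon into estimation, identification, and exploit periods, bound the payoff loss of each on a high-probability good event (yielding exactly the three terms $O(K^2\log^2 T/\mu[1])$, $O(\mu[1]K\log T/\Delta^2)$, and $O(\log^2 T/\mu[1])$), and charge $O(T)$ regret on the complement. The only small discrepancy is the provenance of the $1/T^{1+\nu_m}$ factor: in the paper it arises from the union bound over the $T/T^{\nu_m}$ exploit-period phases (each failing with probability $\delta=1/T^2$), not from the \textsc{LUCB1} confidence parameter, though this does not affect the final bound.
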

With $\frac{K}{\mu[1]} = O(\frac{\mu[1]}{\Delta^2})$ (which is a reasonable setting since $K$ is typically not too large),  the regret of the \textsc{CTMAB} algorithm is $O\left(\frac{K\mu[1](\log (T))^2}{\Delta^2}\right)$ matching the lower bound to be derived in Theorem \ref{thm:lbmultiplearms} upto logarithmic terms. 

The basic idea to derive Theorem \ref{thm:ubmultiarm} is as follows. In Lemma \ref{lem:ublearningperiod} we show  that for the truly best arm, arm $1$, the condition $\sqrt{\frac{\log(2/\delta)}{N^t[1]}} < \frac{{\hat \mu}_t[1]}{2}$ is 
satisfied by time at most $O(T^{p^\star + \epsilon})$ with probability at least $1-1/T^2$, where $T^\epsilon$ is a constant and $p^\star$ is as defined in \eqref{defn:p} with $\mu[1]=\mu$. Thus, the estimation phase terminates by time $O(T^{p^\star})$ with probability at least $1-1/T^2$. Moreover, we show in Lemma \ref{lem:estimationperiod}, that whenever the estimation phase terminates, the estimated mean ${\hat \mu}[a_t]$ satisfies $|{\hat \mu}[a_t] - \mu[1]|\le \frac{\mu[1]}{2}$ with high probability. 
Consequently, we show that the payoff  of the estimation period is at least $-O\left(\frac{K^2 \log^2 T}{\mu[1]}\right)$ with high probability as shown in Lemma \ref{lem:payoff1ctmab}.

From \cite{ICML12_PACSubsetMAB}[Thm 6], we know that the \textsc{LUCB1} algorithm needs $O\left(\frac{K\log T}{\Delta^2}\right)$ expected samples to identify the 
best arm with high probability (setting $\epsilon=0$ and $\delta=1/T$). Thus, by obtaining these samples at a frequency  of one sample per 
$1/({\hat \mu}[1])$ time in the identification period, where $|{\hat \mu}[1]-\mu[1]| \le \mu[1]$ is guaranteed, the total time needed for the identification period is $O\left(\frac{K\log(T) }{\Delta^2\mu[1]}\right)$ with high probability.
The choice of the frequency of obtaining samples in the identification period needs to depend on $\mu[1]$ to keep the regret of the identification period small. With this choice of sampling frequency, the payoff of the identification period is at most $-O\left(\frac{4K}{\mu[1]\Delta^2} \log \left(T^2\right) \right)$ as shown in 
\eqref{eq:payoffmultiarm1} with high probability.

Once the identification period ends, algorithm \textsc{CTMAB} is identical to algorithm \textsc{CTSAB} where the single arm to consider is the arm identified as the best arm in the identification period. 
 With the \textsc{CTMAB} algorithm, the number of samples $N^{[1]}$
obtained by the \textsc{LUCB1} algorithm  (for the best arm) in the identification period for the best identified arm is $\Omega(1/\Delta^2)$ which is more than $\Omega(1/\mu[1]^2)$, since $\Delta < \mu[1]$. Thus, using the single arm  case result, at the end of identification period of algorithm \textsc{CTMAB}, the error in estimating the mean of the best arm $\mu[1]$ is at most 
$\mu[1]$ with high probability. Therefore, we can directly use the payoff guarantee of \textsc{CTSAB}  during its exploit period to bound the payoff of 
\textsc{CTMAB}  during its exploit period.

\begin{figure*}[!h]
	\centering
	\subfloat[]{\label{fig:MultiArm1}
		\includegraphics[scale=0.28]{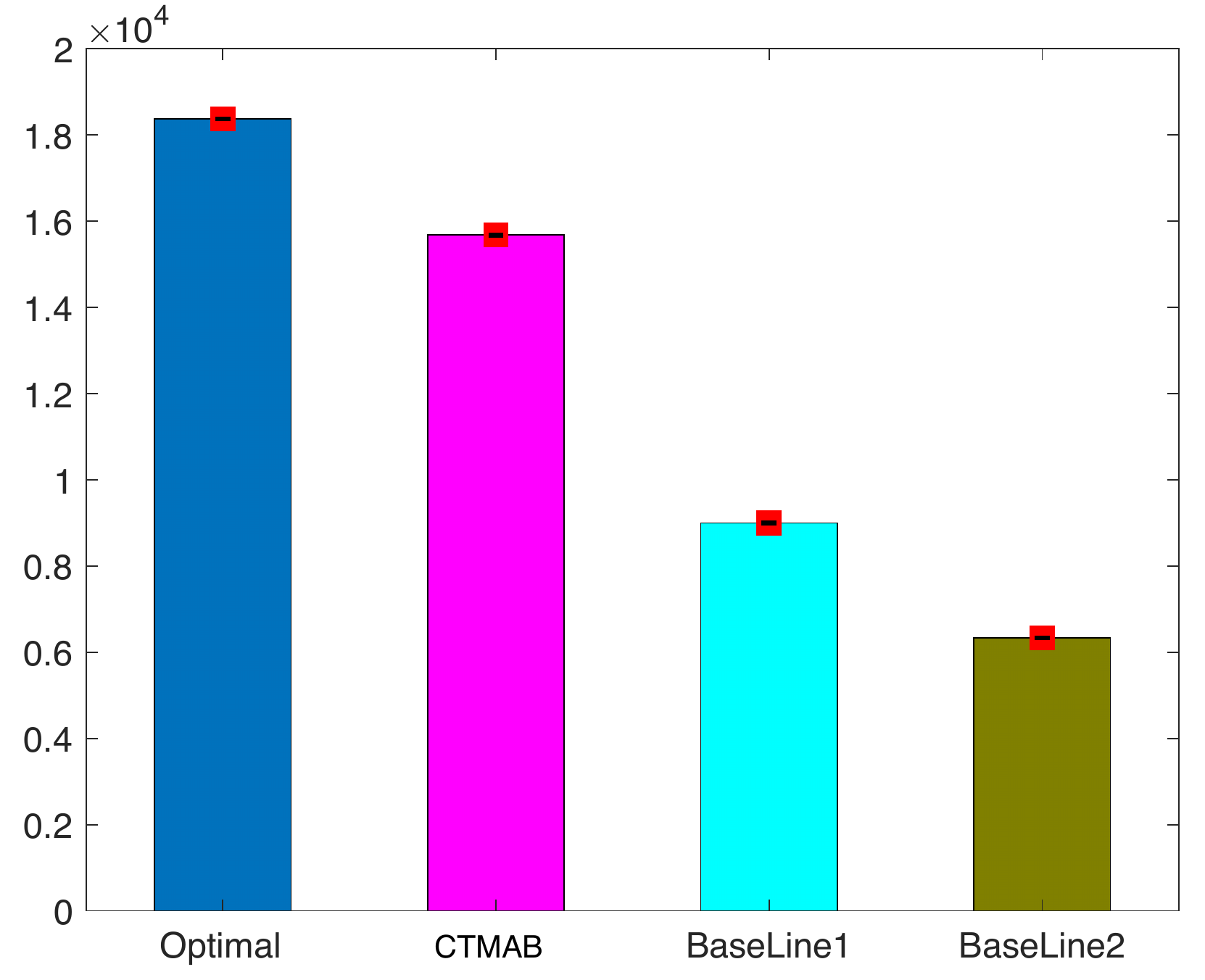}}
	\subfloat[]{\label{fig:MultiArm2}
		\includegraphics[scale=0.28]{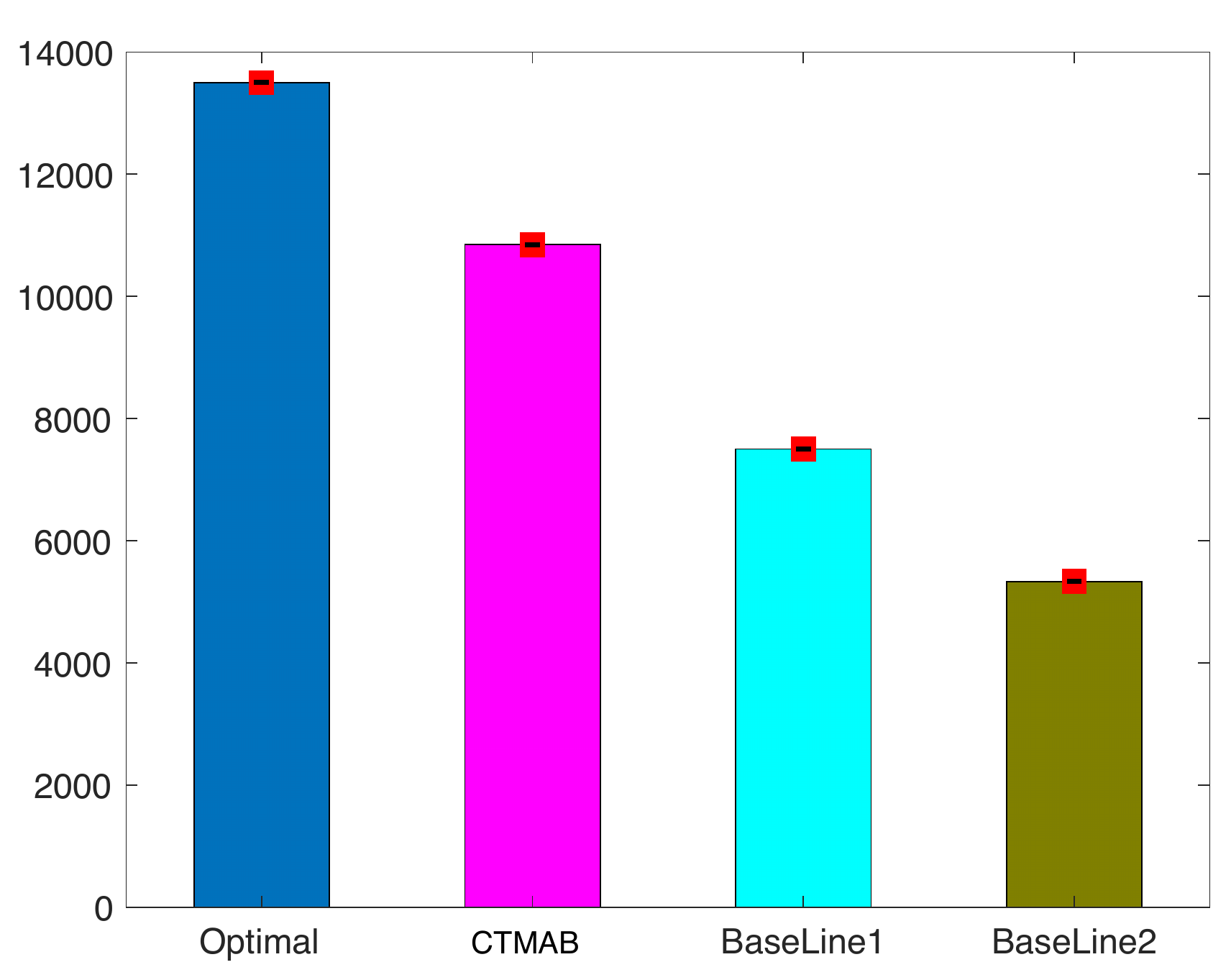}}
	\subfloat[]{\label{fig:MultiArm3}
		\includegraphics[scale=0.28]{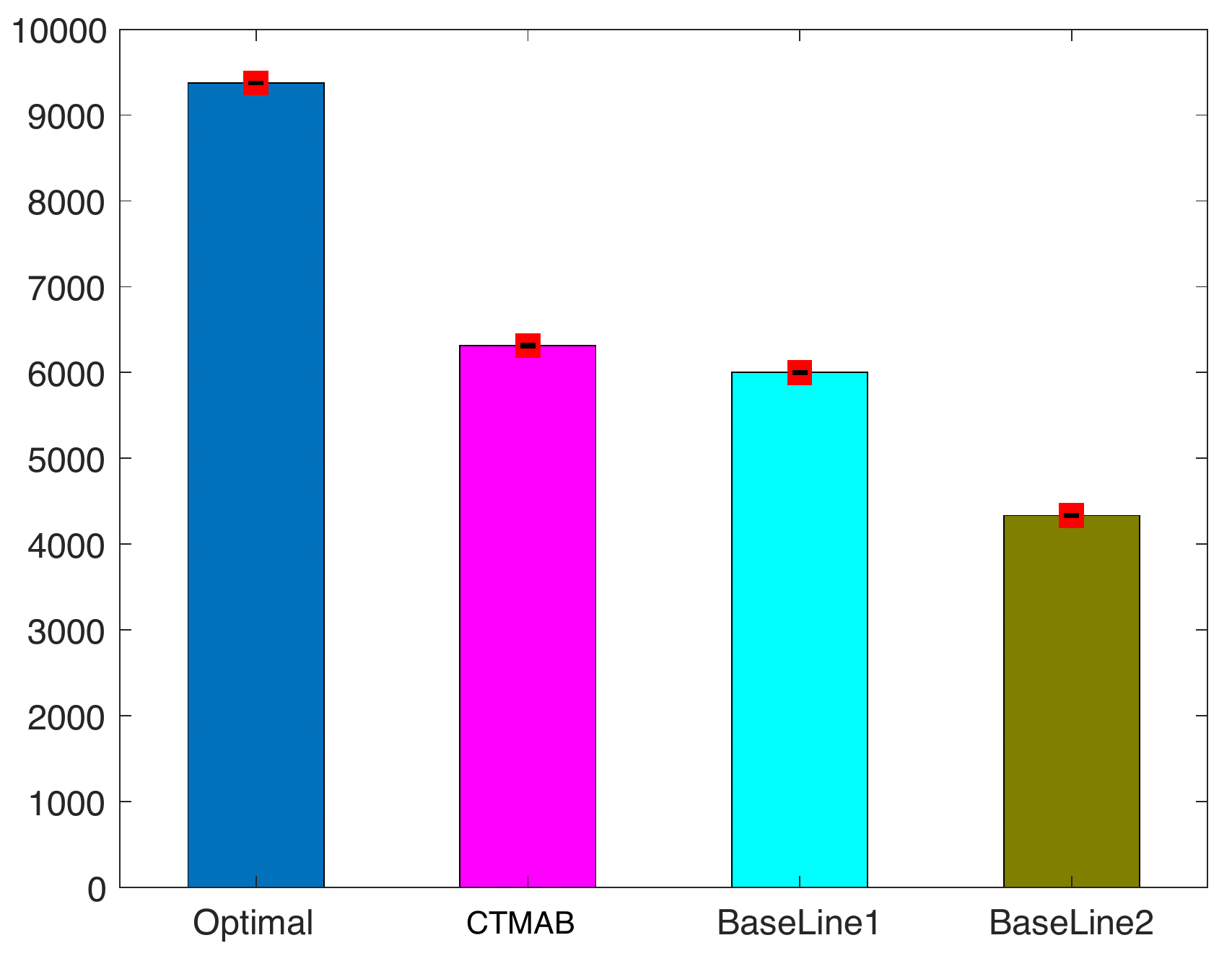}}
	\caption{Comparison of cumulative reward of algorithm {\textsc CTMAB} with other policies for different arm means. }
	\label{fig:Payoff Comparison}
\end{figure*}
\subsection{Lower Bound}\label{sec:lbmulti}
For the multiple arm CTMAB problem we derive a lower bound on the regret of any algorithm by exploiting the fact that the algorithm has to identify the best arm with a certain probability by a certain time. 
\begin{theorem}\label{thm:lbmultiplearms}
 Let the regret of any online algorithm for the multiple arm CTMAB  problem be 
$g_m(\mu[1], \dots, \mu[K], T, \Delta)$ which can be expressed as 
$g_m(\mu[1], \dots, \mu[K], T, \Delta) = \mu[1]^2 T^p_m,$ for some $p_m<1$. 
Then $p_m \ge \min\{p_1^m, p_2^m\},$ where
\begin{equation}\label{defn:pm1}
 T^{p_1^m} = \frac{2  K \log\left(\frac{c_{12}}{8}\right)}{c_{12}(\Delta/2)^2 \mu[1]} \ \ \text{for} \ \ 0\le p_1^m \le 1, \quad \text{and}
\end{equation}
 $p_2^m  = \min \left\{k\ge 0: \frac{ \left(\frac{\sfc_4 K}{(\Delta/2)^2}
 	\log\left(\sfc_5\frac{c_{12}}{8}\right)\right)^2}{c_{12}T^{k}} \right.$
\begin{align} \label{defn:pm2}
	& \left. \quad \quad = 
\max\left\{\mu[1]^2 T^k, \mu[1]\frac{\sfc_4 K}{(\Delta/2)^2}
 	\log\left(\sfc_5\frac{c_{12}}{8}\right)\right\} \right\},
\end{align}
where $c_{12} = \frac{\mu[1]^2}{\mu[2]^2}$ and $\sfc_4,\sfc_5$ are constants.
\end{theorem}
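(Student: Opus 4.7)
The plan is to adapt the structure of the single-arm lower bound (Theorem~\ref{thm:lbsinglearm}) but replace the unbiased-estimator argument underlying Assumption~\ref{rem:unbiased} with a standard best-arm identification sample-complexity argument; this is precisely the reason Assumption~\ref{rem:unbiased} is not needed in the multi-arm case. I will first derive a sample-complexity lower bound for best-arm identification in the CTMAB setting, then translate the required samples into two competing regret contributions (time wasted versus sampling cost paid), and finally combine them into the claimed $\min\{p_1^m, p_2^m\}$.

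First, I would establish that any algorithm which identifies the best arm with probability at least $1-\delta$ must obtain at least $N_{\mathrm{id}} = \Omega\!\bigl(K\log(c_{12}/\delta)/(\Delta/2)^2\bigr)$ samples across arms, via a standard change-of-measure argument against an alternative instance in which some suboptimal arm $k \ne 1$ is elevated to have mean exceeding $\mu[1]$ by at least $\Delta$. The factor $c_{12}=\mu[1]^2/\mu[2]^2$ emerges from bounding the Bernoulli KL divergence $d(\mu[2],\mu[1])$ in terms of $(\Delta/2)^2$ when $\mu[2]$ is small, and yields the $\log(c_{12}/8)$ appearing in~\eqref{defn:pm1} upon substituting $\delta$ proportional to a small absolute constant.

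Next, I would translate $N_{\mathrm{id}}$ into regret constraints by considering two regimes for the identification time $t^\star$, i.e., the time by which the algorithm has obtained $N_{\mathrm{id}}$ samples and committed. In the ``slow'' regime, samples are spread over $[0,t^\star]$ at the oracle-matched rate $\Theta(\mu[1])$, so $t^\star \ge N_{\mathrm{id}}/\mu[1]$; during this window the per-unit-time payoff shortfall relative to the oracle is $\Theta(\mu[1]^2)$ by Proposition~\ref{prop:oracle}, giving a regret contribution of order $\mu[1]^2 t^\star$. Forcing $\mu[1]^2 T^{p_m} \ge \mu[1] N_{\mathrm{id}}$ then reproduces the threshold in~\eqref{defn:pm1}. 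In the ``fast'' regime the algorithm packs $N_{\mathrm{id}}$ samples into a shorter window of length $t^\star$, paying sampling cost at least $\Theta(N_{\mathrm{id}}^2/t^\star)$ by Proposition~\ref{prop:uniformsampling}; requiring this cost to stay below the regret budget $\mu[1]^2 T^{p_m}$ while simultaneously exceeding the competing alternatives $\mu[1]^2 t^\star$ (time cost) and $\mu[1] N_{\mathrm{id}}$ (reward foregone if the algorithm plays suboptimal arms before identification) yields the implicit balance equation defining $p_2^m$ in~\eqref{defn:pm2}, with constants $\sfc_4,\sfc_5$ absorbing the log factors from the identification bound.

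Finally, I would verify that if the algorithm fails to meet the $N_{\mathrm{id}}$ requirement by any time $t \le T$, then with non-vanishing probability it commits to a suboptimal arm, incurring regret $\Omega(\mu[1]^2(T-t))$ on the remainder via the oracle-rate shortfall of Proposition~\ref{prop:oracle}, which itself dominates one of $p_1^m, p_2^m$. The main obstacle will be rigorously executing the KL-divergence computation to extract the precise $c_{12}$-dependent constants and showing that the two regimes are exhaustive: namely that no interleaved (part-dense, part-sparse) sampling schedule can simultaneously evade both the sampling-cost bound and the identification-time bound. Here I would invoke the convexity of $1/x$ underlying Proposition~\ref{prop:uniformsampling} to show that the $N_{\mathrm{id}}^2/t^\star$ sampling-cost lower bound continues to apply even to non-uniform schedules, which closes the case analysis and delivers $p_m \ge \min\{p_1^m, p_2^m\}$.
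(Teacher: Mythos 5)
Your overall architecture matches the paper's: invoke a best-arm-identification sample-complexity lower bound (the paper simply cites Lemma~\ref{lem:lbpureexplore}, the Mannor--Tsitsiklis bound with $\beta=\Delta/2$, rather than re-deriving it by change of measure), argue that too few samples by the critical time forces a misidentification probability large enough to exceed the regret budget, and then convert the resulting necessary sample count $N_{\min}^m$ into regret via the concave payoff $\mu[1]x - x^2/t$ and a two-case split on whether the unconstrained optimizer $x^\star=\mu[1]t/2$ exceeds $N_{\min}^m$. So the skeleton is right, but two of your key accounting steps have genuine gaps.

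First, your misidentification penalties are overstated in exactly the regime where the theorem bites. Conditional on committing to a suboptimal arm, the best achievable payoff over a window of length $s$ is $\mu[2]^2 s/4$, so the loss relative to the oracle is $(\mu[1]^2-\mu[2]^2)s/4=\Theta(\mu[1]\Delta s)$, not $\Omega(\mu[1]^2 s)$, when $\Delta$ is small; similarly, nothing forces a $\Theta(\mu[1]^2)$ per-unit-time shortfall during the identification window when all arms have nearly equal means. Deriving the constraint on the failure probability $\delta_I$ from an inflated penalty yields a ``necessary'' condition that is not in fact necessary, which is fatal in a lower-bound argument. The paper avoids this by taking the first interval to have length $c_{12}T^{p_m}$ with $c_{12}=\mu[1]^2/\mu[2]^2$ and bounding the two terms of the payoff asymmetrically (the lost arm-1 term via $\delta_I\ge T^{\psi}/c_{12}$, the recovered arm-2 term via $\delta_I\le 1$), so that the recovered term equals exactly $\mu[1]^2T^{p_m}/4$ and stays within the budget while the lost term is $\mu[1]^2T^{p_m+\psi}/4$. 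Second, and relatedly, your account of where $c_{12}$ comes from is wrong: it is not an artifact of the Bernoulli KL divergence (the constants in Lemma~\ref{lem:lbpureexplore} are absolute); it is the interval-length normalization just described, and it enters the logarithm in \eqref{defn:pm1}--\eqref{defn:pm2} only because the tolerated failure probability is $\gamma=T^{\psi}/c_{12}$. Without this device, your ``slow'' and ``fast'' regimes do not reproduce the stated thresholds, and in the small-$\Delta$ regime the contradiction you need in the slow regime does not go through.
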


When $\Delta$ is small, i.e., $\mu[1]\approx\mu[2]$, $p_1^m$ and $p_2^m$ satisfy  
$T^{p_1^m} =  \Omega(\frac{ K}{\mu[1]\Delta^2})$ and $T^{p_2^m} = \Omega(\frac{ K}{\mu[1]\Delta^2})$. 
Hence the regret of any online algorithm is $g_m(\mu[1], \dots, \mu[K], T, \Delta) = \Omega( K\mu[1]/\Delta^2)$ in the small $\Delta$ regime.

The main idea used to derive this lower bound is as follows. Let the regret of any algorithm $\cA$ be $\mu[1]^2T^{p_m}$. Then consider time $T^{p_m + \alpha}$ for any $\alpha>0$. We show that if the probability of correctly identifying the best arm with algorithm $\cA$ is less than $1-1/T^\alpha$ at time $T^{p_m + \alpha}$, 
then the regret of $\cA$ is $> \mu[1]^2T^{p_m+\alpha}$, contradicting the assertion that the regret of $\cA$ is $\mu[1]^2T^{p_m}$. Thus, the probability of identifying the best arm with $\cA$ at time $T^{p_m + \alpha}$ must be greater than $1-1/T^\alpha$. This necessary condition implies a lower bound on the number of samples $N_{T^{p_m+\alpha}}$  to be obtained by $\cA$ in time period $[0,T^{p_m + \alpha}]$ using Lemma \ref{lem:lbpureexplore}. 
 Accounting for the sampling cost resulting out of this lower bound, gives us the  lower bound of  Theorem \ref{thm:lbmultiplearms}. 

Comparing the upper bound on regret for algorithm CTMAB (Theorem \ref{thm:ubmultiarm}) and the lower bound (Theorem \ref{thm:lbmultiplearms}), we see that algorithm CTMAB achieves optimal regret up to  logarithmic factors of $T$.

\section{Numerical Results}
\label{sec:Exp}

In this section, we compare the performance of our algorithm against the oracle policy and a baseline policy that does not adapt to the estimates of the arm means. The baseline policy samples the optimal arm at a fixed  interval of $1/aT$, where $a>0$ is a constant that determines the rate of sampling. The payoff of the baseline policy over a period $T$ is $a T(\mu[1]-\lambda a)$, and  the payoff is positive and increasing for all $a \leq \frac{\mu[1]}{2\lambda}$ achieving maxima at $a= \frac{\mu[1]}{2\lambda}$.

We consider the multiple arms case, and evaluate the performance of the CTMAB algorithm with $K=5$ arms. 
We simulate the CTMAB algorithm on three sets of mean vectors
${\boldsymbol \mu}=(\mu[1], \dots, \mu[5])$, with values $(0.35, 0.2, 0.15, 0.1, 0.08), (0.3, 0.2, 0.15, 0.1, 0.08),$ and $(0.25, 0.2, 0.15, 0.1, 0.08)$, and plot the cumulative payoff for the oracle policy and the CTMAB algorithms  in Fig. \ref{fig:Payoff Comparison}. The problem instances are chosen to have a decreasing sub-optimality gap and hence increasingly difficult to learn. As seen, the CTMAB performance is close to that of the oracle policy, and the regret degrades with reducing value of $\mu[1]$ and the sub-optimality gap. The results for the single arm case are provided in the supplementary material. 

\label{sec:Conclusion}
\section{Conclusions}
In this paper, we have a introduced a new continuous time multi-arm bandit model (CTMAB), that is well motivated from applications in 
crowdsourcing and inventory management systems. The CTMAB is fundamentally different than the popular DMAB, and to 
the best of our knowledge has not been considered before. The distinguishing feature of the CTMAB  is that the oracle policy's decision depends on the mean  of the best arm, and this makes even the single arm problem non-trivial. To keep the model simple, we considered a simple sampling cost function, and derived almost tight upper and lower bounds on the 
optimal regret for any learning algorithm.  
\bibliographystyle{plainnat}
\bibliography{ref}


\appendix
\clearpage
\vspace{10in}
 \section{Remarks on the system model}
  \begin{remark}\label{rem:horizon}
  Unlike the DMAB setting, where algorithms like UCB or Thompson sampling can work without the knowledge of $T$, in the current setting, the CTSAB algorithm we propose, crucially uses the information about $T$ to define phases and its decisions. Developing an algorithm without the knowledge  of $T$ for the  CTMAB appears challenging and is part of ongoing work.
\end{remark}
\section{Pseudo Code for Algorithm (\textsc{CTSAB})}
We use the notation $\b1_{x}=1$ if $x=1$ and $0$ otherwise. 
\begin{algorithm}[H]
		\caption{Continuous Time Single Arm Bandit (\textsc{CTSAB})}
		\label{alg:CTBandit}
		\begin{algorithmic}[1]
			\STATE {\bf Input } $0<\epsilon<1$, $\kappa > 1$, $T, \delta$ 
			\STATE\%Learning Period Starts
			\FOR {$i=1,2,3, 	 \ldots$} 
			\STATE Obtain $N_i = \kappa (\log T )T^{(2/3)i\epsilon}$ samples at uniform frequency in interval $[ T^{(i-1)\epsilon}-\b1_{\{i=1\}}, \ \ T^{i \epsilon} ]$
			\STATE $N^i = \sum_{j\le i} N_j, {\hat \mu}_i = \frac{1}{N^i}\sum_{j=1}^{N^i} X_i$,
				\IF{$\sqrt{\frac{\log(2/\delta)}{N^i}} < \frac{{\hat \mu}_i}{2}$}
					\STATE Set $i^\star=i$ and Break;
					\ENDIF
			\ENDFOR
			\STATE \%Exploit Period Starts
			\STATE The time at which learning period ends  be $\tau$
			\IF{$\tau  \ge T$}
				\STATE Break;
					\ELSE
			\FOR{phases \ $k=i^\star+1, i^\star+2, 	 \ldots$}
				\STATE Obtain $N_k = {\hat \mu}_{k-1}T^{i^\star \epsilon}/2$  samples at equal intervals in
				$[\tau + (k-1)T^{i^\star \epsilon}, \tau + kT^{i^\star \epsilon}]$
					\STATE $N^k = \sum_{j\le k} N_j,{\hat \mu}_k = \frac{1}{N^k}\sum_{j=1}^{N^k} X_i$,
					\IF{$\tau + kT^{i^\star \epsilon} \ge T$}
				\STATE Break;
					\ENDIF
				\ENDFOR
				\ENDIF
		\end{algorithmic}
	\end{algorithm}
 \section{Preliminaries}
Let $X_i$'s be independent and identically Bernoulli distributed random variables with mean $\mu$, and ${\hat \mu}_N = \frac{1}{N} \sum_{i=1}^N X_i$.
\begin{lemma}\label{lemm:conc}
(Chernoff Bound) $\bbP\left(|{\hat \mu}_N- \mu| > \theta\right) \le \exp^{-2N \theta ^2}.$
\end{lemma}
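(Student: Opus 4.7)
My plan is to prove the bound via the standard Chernoff/Hoeffding method applied to the sum $S_N := \sum_{i=1}^N (X_i - \mu) = N(\hat\mu_N - \mu)$. First I would handle the one-sided tail $\bbP(\hat\mu_N - \mu > \theta)$. By Markov's inequality applied to the MGF, for any $s > 0$,
\begin{equation*}
\bbP(\hat\mu_N - \mu > \theta) = \bbP\bigl(e^{s S_N} > e^{s N \theta}\bigr) \le e^{-s N \theta}\, \bbE\bigl[e^{s S_N}\bigr].
\end{equation*}
Using independence of the $X_i$'s, $\bbE[e^{s S_N}] = \prod_{i=1}^N \bbE[e^{s(X_i - \mu)}]$.

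Next I would invoke Hoeffding's lemma: if $Y$ is a zero-mean random variable supported on $[a,b]$, then $\bbE[e^{sY}] \le \exp(s^2(b-a)^2/8)$. Since $X_i - \mu$ is centered and takes values in $[-\mu, 1-\mu]$, which has width $1$, this gives $\bbE[e^{s(X_i-\mu)}] \le e^{s^2/8}$, so $\bbE[e^{s S_N}] \le e^{N s^2/8}$. Combining,
\begin{equation*}
\bbP(\hat\mu_N - \mu > \theta) \le \exp\bigl(-sN\theta + Ns^2/8\bigr).
\end{equation*}
Minimizing the right-hand side in $s > 0$ (the minimizer is $s = 4\theta$) yields the one-sided bound $\exp(-2N\theta^2)$. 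The symmetric argument applied to $-(X_i - \mu)$ handles the lower tail, and a union bound then gives the two-sided statement as claimed (up to the conventional factor of $2$ that the paper suppresses in writing the bound).

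The main obstacle, if any, is Hoeffding's lemma itself, whose proof relies on the convexity of $e^{sx}$ on $[a,b]$ and an analysis of the cumulant generating function $\psi(s) = \log \bbE[e^{sY}]$ showing $\psi''(s) \le (b-a)^2/4$; this is entirely standard and I would cite it rather than reprove it. Everything else is routine Markov-plus-optimization.
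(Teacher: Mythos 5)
Your proof is correct and is the standard Markov--plus--Hoeffding's-lemma derivation; the paper states this lemma without proof as a textbook fact, so there is no alternative argument to compare against. Your remark about the suppressed factor of $2$ in the two-sided bound is accurate --- the union bound over the two tails gives $2\exp(-2N\theta^2)$, and the paper's statement as written is loose on that constant, which does not affect any downstream use.
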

\begin{corollary}\label{cor:conc}Choosing $\theta= \frac{\log\left(\frac{1}{\delta}\right)}{\sqrt{N}}$, we get that $\text{err}_N = |{\hat \mu}_N- \mu| = O\left(\frac{\log\left(\frac{1}{\delta}\right)}{\sqrt{N}}\right)$ 
with probability at least $1-\delta$.
\end{corollary}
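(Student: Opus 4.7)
The plan is to obtain the corollary by direct substitution into Lemma \ref{lemm:conc}. Plugging $\theta = \log(1/\delta)/\sqrt{N}$ into the Chernoff bound gives
\begin{equation*}
\bbP\bigl(|\hat\mu_N - \mu| > \theta\bigr) \le \exp\!\bigl(-2N\theta^2\bigr) = \exp\!\bigl(-2\log^2(1/\delta)\bigr).
\end{equation*}
So the core of the argument is to check that $\exp(-2\log^2(1/\delta)) \le \delta$ in the regime of interest, and then take complements.

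To verify the inequality, I would note that $\exp(-2\log^2(1/\delta)) \le \delta$ is equivalent to $2\log^2(1/\delta) \ge \log(1/\delta)$, i.e., $\log(1/\delta) \ge 1/2$, which holds for all $\delta \le e^{-1/2}$ (the regime in which a high-probability concentration bound is meaningful; smaller $\delta$ only makes the bound tighter). Once this is in place, we conclude
\begin{equation*}
\bbP\!\left(|\hat\mu_N - \mu| > \frac{\log(1/\delta)}{\sqrt{N}}\right) \le \delta,
\end{equation*}
and taking complements yields $\text{err}_N = |\hat\mu_N - \mu| \le \log(1/\delta)/\sqrt{N}$ with probability at least $1-\delta$, which is exactly the $O\!\bigl(\log(1/\delta)/\sqrt{N}\bigr)$ statement.

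There is no substantive obstacle here; the result is essentially an algebraic rearrangement of Lemma \ref{lemm:conc} with a convenient (slightly loose) choice of deviation $\theta$. The only mild subtlety is that the paper's chosen $\theta$ yields an $\exp(-2\log^2(1/\delta))$ tail rather than the tighter $\sqrt{\log(1/\delta)/(2N)}$ one obtainable from $\theta = \sqrt{\log(1/\delta)/(2N)}$; the form adopted here simply produces a cleaner expression for use in the subsequent stopping-rule analysis around equation \eqref{istaddef}.
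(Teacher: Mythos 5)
Your proposal is correct and matches what the paper intends: the corollary is stated as an immediate consequence of Lemma \ref{lemm:conc}, obtained by the same direct substitution of $\theta= \frac{\log(1/\delta)}{\sqrt{N}}$ into the Chernoff bound. Your extra check that $\exp(-2\log^2(1/\delta))\le\delta$ holds for $\delta\le e^{-1/2}$ is a valid and worthwhile clarification of the regime, and it is satisfied by the paper's choice $\delta=1/T^2$.
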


%
 \section{Proof of Theorem~ \ref{thm:ubsinglearm}}
 Throughout we need the definition of $p^\star$ \eqref{defn:p}.
 To prove Theorem \ref{thm:ubsinglearm}, we will need the following two Lemmas.
\begin{lemma}\label{lem:ublearningperiod}
For $\delta=\frac{1}{T^2}$, the learning period of algorithm \textsc{CTSAB} ends in at most $\left(\frac{p^\star}{\epsilon}+1\right)$ phases, i.e., $i^\star \le \left(\frac{p^\star}{\epsilon}+1\right)$, 
with probability at least 
$1-1/T^2$, where $\epsilon$ is such that $T^\epsilon = \sfc$ (constant).
\end{lemma}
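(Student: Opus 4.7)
The plan is to show that by the end of phase $i_0 := \lceil p^\star/\epsilon \rceil + 1$, the stopping condition $\sqrt{\log(2/\delta)/N^{i_0}} < \hat{\mu}_{i_0}/2$ already holds with probability at least $1-1/T^2$. Because $i^\star$ is defined as the earliest phase at which the condition is satisfied, this immediately yields $i^\star \le p^\star/\epsilon + 1$.

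First I would lower-bound $N^{i_0}$. Because $N_j = \kappa(\log T)T^{(2/3)j\epsilon}$ grows geometrically in $j$ with ratio $T^{(2/3)\epsilon}>1$, the last term dominates and $N^{i_0} \ge N_{i_0}=\kappa(\log T)T^{(2/3)i_0\epsilon}$. Using $i_0\epsilon \ge p^\star+\epsilon$ together with $T^{p^\star}=1/\mu^3$ and $T^\epsilon=\sfc$, I get
\[
N^{i_0} \;\ge\; \kappa (\log T)\,(T^{p^\star+\epsilon})^{2/3}\;=\;\kappa\, \sfc^{2/3}\,\frac{\log T}{\mu^2}.
\]

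Second, I would invoke the Chernoff bound (Lemma \ref{lemm:conc}) at level $\theta_{i_0}=\sqrt{\log(2/\delta)/(2N^{i_0})}$ with $\delta=1/T^2$, giving $|\hat\mu_{i_0}-\mu|\le \theta_{i_0}$ with probability at least $1-\delta/2 \ge 1-1/T^2$. Plugging in the lower bound on $N^{i_0}$ and using $\log(2/\delta)=\log(2T^2)=O(\log T)$,
\[
\theta_{i_0}^2 \;\le\; \frac{\log(2T^2)\,\mu^2}{2\kappa\,\sfc^{2/3}\log T} \;\le\; \frac{C\,\mu^2}{\kappa\,\sfc^{2/3}},
\]
so by choosing $\kappa$ large enough (a constant, independent of $T$ and $\mu$), I can force $\theta_{i_0}\le \mu/8$.

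Third, I would turn this into the stopping condition. On the high-probability event above, $\hat\mu_{i_0}\ge \mu-\mu/8 = 7\mu/8$, hence $\hat\mu_{i_0}/2 \ge 7\mu/16$. On the other hand $\sqrt{\log(2/\delta)/N^{i_0}}=\sqrt{2}\,\theta_{i_0}\le \sqrt{2}\,\mu/8 < 7\mu/16$, so the stopping inequality is satisfied at phase $i_0$, concluding that $i^\star \le i_0 \le p^\star/\epsilon+1$ with probability $\ge 1-1/T^2$.

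The main obstacle I expect is the coupling between the random threshold $\hat\mu_{i_0}/2$ in the stopping rule and the deterministic quantity $\sqrt{\log(2/\delta)/N^{i_0}}$: I have to calibrate $\kappa$ so that a single concentration event simultaneously (i) keeps $\hat\mu_{i_0}$ safely above $\mu/2$ and (ii) drives the left-hand side strictly below that random threshold. The geometric growth of $N_j$ is what makes this work, since it forces $N^{i_0}$ to be of order $(\log T)/\mu^2$ at precisely the target phase, matching the $\log(2/\delta)=\Theta(\log T)$ appearing inside the square root and leaving a free constant $\kappa$ to close the numerical gap. Rounding of $p^\star/\epsilon$ to an integer and the minor slack from $\log(2T^2)$ vs.\ $\log T$ are absorbed into the choice of $\kappa$.
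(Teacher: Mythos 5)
Your proposal is correct and follows essentially the same route as the paper's proof: lower-bound the sample count at phase $\lceil p^\star/\epsilon\rceil+1$ using the definition of $p^\star$ (so that $N^{i_0}=\Omega(\log T/\mu^2)$), apply the Chernoff bound, and choose $\kappa$ large enough that the concentration radius is a small constant fraction of $\mu$, which forces the stopping condition $\sqrt{\log(2/\delta)/N^{i^\star}}<\hat\mu_{i^\star}/2$ to hold at that phase. The only differences are cosmetic (your constant $\mu/8$ versus the paper's $\mu/3$, and your direct use of the Chernoff lemma rather than its corollary).
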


Note that $\frac{p^\star}{\epsilon}$ need not be an integer, and to be precise, we should use $\lceil\frac{p^\star}{\epsilon}\rceil$. For ease of exposition, however, we ignore the ceiling.
Lemma \ref{lem:ublearningperiod} also shows that algorithm \textsc{CTSAB} does not $\textsf{fail}$ with probability at least $1-1/T^2$.  	
 \begin{proof}	
 	By the definition of phases, by the end of phase 
 	$\ell = \frac{p^\star}{\epsilon}+1$, the number of samples
 	obtained by the algorithm is $N^\ell \ge (\kappa \log (T)) T^{2/3 (p^\star+\epsilon)}$, where $p^\star$ has been defined in \eqref{defn:p}. 
 	Moreover, from the definition of $p^\star$ \eqref{defn:p}, we have 
 	$\frac{1}{\mu^4 T^{p^\star}} \le  
 	\left(\mu^2 T^{p^\star}\right)$.
 	This implies that 
 	\begin{equation}\label{eq:dummy1}
 	\sqrt{\frac{1}{T^{2p^\star/3}}} \le \mu.
 	\end{equation} Let $\delta = 1/T^2$. Then 
	\begin{align}\nn
 	\sqrt{\frac{\log(2/\delta)}{N^\ell}} &\le \sqrt{\frac{\log(2T^2)}{(\kappa \log (T)) T^{2/3 (p^\star+\epsilon)}}}\\ \label{eq:dummyx1x2}&  \le \mu\sqrt{\frac{\log(2T^2)}{ (\kappa \log (T)) T^{(2/3)\epsilon}}},
	\end{align} 
	where the final inequality follows from \eqref{eq:dummy1}. For any fixed $\epsilon >0$, such that $T^\epsilon = \sfc$ a constant, for a large enough constant $\kappa$,  we get  $ \mu\sqrt{\frac{\log(2T^2)}{(\kappa \log (T))T^{(2/3)\epsilon}}}\le \frac{\mu}{3}$. Using this fact in \eqref{eq:dummyx1x2}, we get 
 	\begin{equation}\label{eq:dummy2}\sqrt{\frac{\log(2/\delta)}{N^\ell}} \le \frac{\mu}{3}.
 	\end{equation}
 	
 	Recall from Corollary \ref{cor:conc} that $\bbP\left(|{\hat \mu}_\ell - \mu | = \text{err}_\ell \ge 
 	\sqrt{\frac{\log(2/\delta)}{N^\ell}}\right)\le \delta$. 
 	Thus, with probability at least $1-\delta$, we have 
	\begin{align}\nn {\hat \mu}_\ell \ge \mu - \text{err}_\ell &\ge \mu- \sqrt{\frac{\log(2/\delta)}{N^\ell}},\\ \nn
	& \stackrel{(a)}\ge  
 	3\sqrt{\frac{\log(2/\delta)}{N^\ell}} - \sqrt{\frac{\log(2/\delta)}{N^\ell}},\\ \nn
	& = 2 \sqrt{\frac{\log(2/\delta)}{N^\ell}}
	\end{align} at the end of phase $\ell= \frac{p^\star}{\epsilon}+1$, where $(a)$ follows from \eqref{eq:dummy2}. 
 	Thus, the learning period of algorithm \textsc{CTSAB} is completed by the $\ell^{th}$ phase, and the algorithm \textsc{CTSAB} never $\textsf{fails}$ with probability at least $1-\delta$, with $\delta=\frac{1}{T^2}$.
	\end{proof}

 	From the definition of algorithm \textsc{CTSAB}, the total number of samples $N^{i^\star}$ obtained by it in the learning period satisfies 
 	\begin{align}\label{eq:upplowboundsampCTSAB}
(\kappa \log (T))T^{(2/3) i^\star \epsilon } 
 &\le N^{i^\star}=  \sum_{i=1}^{i^\star} (\kappa \log (T)) T^{2/3 i \epsilon }\\
&\le (\kappa \log (T))T^{2/3 (i^\star+1) \epsilon }. \nonumber
\end{align}
	since the learning phase gets over in phase $i^\star$ and in each phase  $(\kappa \log (T))T^{(2/3) i^\star \epsilon }$ samples are obtained. Using this bound we get the following result.
 	\begin{lemma}\label{lem:payoff1}
  The total payoff of the  \textsc{CTSAB} algorithm in the learning period is  
 $$	P_L \ge \mu(\kappa \log (T)) T^{(2/3) i^\star \epsilon } - (\kappa \log (T))^2 T^{(i^\star+1)\epsilon/3}.$$
\end{lemma}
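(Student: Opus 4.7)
The plan is to directly compute the expected payoff of the learning period as a sum of per-phase contributions, then bound the reward and cost pieces separately. By the algorithm's construction, in phase $i$ (for $i \geq 2$) the learner takes $N_i = \kappa\log(T)\, T^{(2/3)i\epsilon}$ samples equally spaced in an interval of duration $d_i = T^{i\epsilon} - T^{(i-1)\epsilon}$, and by the payoff definition \eqref{eq:payoffA}, the expected contribution of phase $i$ equals $\mu N_i$ minus the sampling cost within the phase. Because the samples are uniformly spaced with inter-sample gap $d_i/N_i$, the within-phase sampling cost is exactly $N_i \cdot (N_i/d_i) = N_i^2/d_i$. The transition cost from the last sample of phase $i-1$ to the first sample of phase $i$ contributes an additional $O(N_i/d_i)$ term, which is smaller than the intra-phase cost by a factor of $N_i$ and so absorbs into constants. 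Phase $1$ is handled identically, with interval $[0,T^\epsilon]$.

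For the reward, since all summands are non-negative I simply keep the last phase's contribution:
\begin{equation*}
\sum_{i=1}^{i^\star}\mu N_i \;\geq\; \mu N_{i^\star} \;=\; \mu\,\kappa\log(T)\, T^{(2/3)i^\star\epsilon},
\end{equation*}
which recovers the positive term in the claimed bound.

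For the cost, substituting $d_i = T^{i\epsilon}(1 - T^{-\epsilon})$ yields
\begin{equation*}
\frac{N_i^2}{d_i} \;=\; \frac{(\kappa\log T)^2}{1-T^{-\epsilon}}\, T^{i\epsilon/3}.
\end{equation*}
Summing this geometric progression and using that $T^\epsilon$ is a constant strictly greater than $1$ (the regime in which Theorem \ref{thm:ubsinglearm} is applied),
\begin{equation*}
\sum_{i=1}^{i^\star}\frac{N_i^2}{d_i} \;\leq\; \frac{(\kappa\log T)^2}{1-T^{-\epsilon}} \cdot \frac{T^{(i^\star+1)\epsilon/3}}{T^{\epsilon/3}-1} \;\leq\; (\kappa\log T)^2\, T^{(i^\star+1)\epsilon/3},
\end{equation*}
the last inequality holding once $T^{\epsilon/3}$ is bounded away from $1$ (otherwise one absorbs a harmless constant into $\kappa$). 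Combining the two estimates yields the asserted lower bound on $P_L$.

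The only mild obstacle is the bookkeeping at phase boundaries: writing out the exact inter-sample interval at the $(i-1)\to i$ transition and at $t=0$. Both are easily dominated by the intra-phase cost $N_i^2/d_i$ (each is at most $N_i/d_i$, smaller by a factor of $N_i$), so they perturb the bound only in lower-order terms and do not affect the stated expression.
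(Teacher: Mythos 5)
Your proposal is correct and follows essentially the same route as the paper: lower-bound the reward by the last phase's sample count, charge each phase a cost of $N_i^2/d_i$ with $d_i = T^{i\epsilon}-T^{(i-1)\epsilon}\ge T^{i\epsilon}(1-T^{-\epsilon})$, and dominate the resulting geometric sum by its top term. The only difference is a constant factor in the cost term (the paper's own derivation actually produces $2(\kappa\log T)^2 T^{(i^\star+1)\epsilon/3}$, and your absorption of the geometric-sum constant needs $T^\epsilon$ large enough rather than merely $>1$), which is immaterial to how the lemma is used.
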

\begin{proof} 
The total payoff of the  \textsc{CTSAB} algorithm in the learning period by counting the payoff in each of the phase of the learning phases is 
  \begin{align}\nn
  P_L& = \mu N^{i^\star} - \sum_{i=1}^{i^\star} \frac{((\kappa \log (T)) T^{2/3 i \epsilon })^2}{T^{ i \epsilon} - T^{ (i-1) \epsilon}} \\\nn
	 	&\ge  \mu(\kappa \log (T)) T^{(2/3) i^\star \epsilon } -  \sum_{i=1}^{i^\star} \frac{((\kappa \log (T)) T^{2/3 i \epsilon })^2}{T^{ i \epsilon}/2}, \\\label{eq:payoff1}
 	& = \mu(\kappa \log (T)) T^{(2/3) i^\star \epsilon } - 2(\kappa \log (T))^2 T^{(i^\star+1)\epsilon/3},
 	\end{align} 
	where the first inequality follows since $T^{ i \epsilon} - T^{ (i-1) \epsilon} \ge T^{ i \epsilon}/2$, while the number of samples obtained by algorithm \textsc{CTSAB} in the learning period is as given by \eqref{eq:upplowboundsampCTSAB}.
\end{proof}


 Next simple lemma helps to show  that once the learning period is complete in a particular phase, in subsequent 
phases the payoff obtained by  \textsc{CTSAB} algorithm is positive. 
\begin{lemma}\label{lem:direct}
Let the number of samples obtained be $N$, and 
$|{\hat \mu}_N- \mu | = \text{err}_N$.  If 
$\text{err}_N<  \frac{{\hat \mu}_N}{2}$, then 
$\text{err}_N< \mu$.
\end{lemma}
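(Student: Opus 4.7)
The plan is to do a direct one-line chain of inequalities using the triangle inequality in the form of a reverse triangle inequality, since $\text{err}_N = |\hat{\mu}_N - \mu|$ is by definition symmetric, so in particular $\mu \ge \hat{\mu}_N - \text{err}_N$.

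First I would unpack the hypothesis $\text{err}_N < \hat{\mu}_N / 2$ as $\hat{\mu}_N > 2\,\text{err}_N$. Then, using that $|{\hat \mu}_N - \mu | = \text{err}_N$ implies $\mu \ge \hat{\mu}_N - \text{err}_N$, I would substitute to obtain $\mu \ge \hat{\mu}_N - \text{err}_N > 2\,\text{err}_N - \text{err}_N = \text{err}_N$, which is exactly the claim. The whole argument fits in one display.

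There is really no obstacle here; the lemma is a trivial consequence of the definition of $\text{err}_N$ together with the hypothesis, and no probabilistic or concentration argument is needed. The only subtle point worth flagging is that the inequalities are strict because the hypothesis $\text{err}_N < \hat{\mu}_N/2$ is strict, which propagates through to give strict $\text{err}_N < \mu$. The proof can be written in two or three lines.
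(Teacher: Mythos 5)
Your proof is correct and follows essentially the same route as the paper: unpack $\text{err}_N < \hat{\mu}_N/2$ and use $\mu \ge \hat{\mu}_N - \text{err}_N$ to conclude $\mu > \text{err}_N$. In fact your version is slightly more careful than the paper's, which writes $\mu = \hat{\mu}_N - \text{err}_N$ where the correct statement is the inequality $\mu \ge \hat{\mu}_N - \text{err}_N$ that you use.
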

\begin{proof}
By definition,  $\mu = {\hat \mu}_N - \text{err}_N$. 
Therefore, if $\text{err}_N< \frac{{\hat \mu}_N}{2}$, it implies that  ${\hat \mu}_N - \text{err}_N 
> \text{err}_N $ which  is sufficient for 
$\text{err}_N< \mu$.
\end{proof}
 
 Now we complete the Proof of Theorem~ \ref{thm:ubsinglearm}.
 	From Corollary \ref{cor:conc}, at the end of phase $i$, $\text{err}_i \le \sqrt{\frac{\log(2/\delta)}{N^{i}}}$ 
 	with probability at least $1-\delta$, where $N^i$ is total number of samples obtained until the end of phase $i$. Thus, 
 	the condition that ends the  training period in phase $i^\star$, $\sqrt{\frac{\log(2/\delta)}{N^{i^\star}}} < \frac{{\hat \mu}_{i^\star}}{2}$ ensures that $\text{err}_{i^\star}< \frac{{\hat \mu}_{i^\star}}{2}$ with probability at least $1-\delta$, which implies that 
 	$\text{err}_{i^\star} < \mu$ (Lemma \ref{lem:direct}). For phase $i> i^\star$, obtaining ${\hat \mu}_iT^{i^\star \epsilon}/2$ samples in each phase, the payoff obtained by algorithm \textsc{CTSAB} in phase $i$ is 
 	\begin{align}\nonumber
 	P_i & = \mu\frac{{\hat \mu}_iT^{i^\star \epsilon}}{2}  - 
 	\frac{(\frac{{\hat \mu}_iT^{i^\star \epsilon}}{2})^2}{T^{i^\star \epsilon}},\\ \nn
 	& = \mu\frac{ (\mu \pm \text{err}_i)T^{i^\star \epsilon}}{2}  - 
 	\frac{(\frac{(\mu \pm \text{err}_i)T^{i^\star \epsilon}}{2})^2}{T^{i^\star \epsilon}},\\ \label{eq:payoffwitherr}
 	&\ge \frac{\mu^2T^{i^\star \epsilon}}{4} - \text{err}_i^2\frac{T^{i^\star \epsilon}}{4}. \end{align}
 	Since at the end of learning period phase $i^\star$, $\text{err}_{i^\star} < \mu$, hence, 
 	we have that $P_i> 0$ for $i > i^\star$.

 	Let the {\it bad} event in phase $j$ be defined as $B_{j} = \{\text{err}_{j} > \frac{c}{\sqrt{N^{j}}}\}$, where $N^{j}$ is the sum of the number of samples obtained until the end of 
 	phase $j$. For further analysis of the payoff of algorithm \textsc{CTSAB} in the exploit period, we want to bound the probability that a bad event happens during any phase (both in learning and exploit period). 
	\begin{lemma}\label{lem:bad}
  The probability that a {\it bad} event happens in any phase of learning or exploit period of algorithm \textsc{CTSAB} is $\le 1-\frac{T}{T^{ \epsilon}}\delta$ for $c= \sqrt{\log(2/\delta)/2}$.
\end{lemma}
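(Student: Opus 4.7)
The plan is a straightforward union bound, combining the two-sided Chernoff/Hoeffding bound (Lemma~\ref{lemm:conc}) with a count of the total number of phases that the algorithm \textsc{CTSAB} can execute. The statement is to be read as: the probability that the bad event $B_j$ occurs in \emph{some} phase $j$ of the learning or exploit period is at most $\tfrac{T}{T^\epsilon}\delta$ (equivalently, the complementary good event holds with probability at least $1-\tfrac{T}{T^\epsilon}\delta$).

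First I would bound the per-phase failure probability. Fix any phase $j$ and let $N^j$ denote the total number of i.i.d.\ Bernoulli$(\mu)$ samples accumulated up to the end of phase $j$. Applying Lemma~\ref{lemm:conc} to both tails with $\theta = c/\sqrt{N^j}$ and $c=\sqrt{\log(2/\delta)/2}$ yields
\begin{equation*}
\bbP(B_j) \;=\; \bbP\!\left(\text{err}_j > \tfrac{c}{\sqrt{N^j}}\right) \;\le\; 2\exp\!\left(-2N^j\cdot\tfrac{c^2}{N^j}\right) \;=\; 2\exp(-\log(2/\delta)) \;=\; \delta .
\end{equation*}
Note that this bound is uniform in $j$: it does not depend on $N^j$, which is why it applies equally well to learning and exploit phases.

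Next I would count the total number of phases. By Lemma~\ref{lem:ublearningperiod}, the learning period finishes in at most $i^\star \le p^\star/\epsilon + 1$ phases with high probability. Each exploit phase has duration exactly $T^{i^\star\epsilon}$, so the exploit period contains at most $T/T^{i^\star\epsilon} \le T/T^\epsilon$ phases (using $i^\star \ge 1$). Since $p^\star/\epsilon+1$ is of constant order in $T$ while $T/T^\epsilon = T^{1-\epsilon}$ dominates, the total number of phases is at most $T/T^\epsilon$ (possibly up to an additive constant that can be absorbed). A union bound over these phases then gives
\begin{equation*}
\bbP\!\left(\bigcup_{j}B_j\right) \;\le\; \sum_{j}\bbP(B_j) \;\le\; \tfrac{T}{T^\epsilon}\cdot\delta ,
\end{equation*}
which is the claimed bound.

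The only real subtlety is the per-phase calibration of $c$: one has to carry the factor of $2$ from the two-sided tail correctly so that $c=\sqrt{\log(2/\delta)/2}$ gives exactly $\delta$ per phase (not $2\delta$), and one has to be careful that the statement applies to \emph{all} phases simultaneously, including the (random) learning phases whose count is itself controlled with probability at least $1-1/T^2$ by Lemma~\ref{lem:ublearningperiod}. If desired, the learning-phase-count failure $1/T^2$ can be absorbed into the final bound, or one can condition on the high-probability event of Lemma~\ref{lem:ublearningperiod} before applying the union bound. Beyond this bookkeeping, no further estimate is required.
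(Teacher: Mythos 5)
Your proof is correct and follows essentially the same route as the paper: apply the concentration bound (Corollary \ref{cor:conc} / Lemma \ref{lemm:conc}) with $c=\sqrt{\log(2/\delta)/2}$ to get $\bbP(B_j)\le\delta$ for each phase, then union bound over the at most $T/T^{\epsilon}$ phases. Your additional bookkeeping about the two-sided tail factor and the phase count is more careful than the paper's one-line argument, but it is the same proof.
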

	\begin{proof}
  From Corollary \ref{cor:conc}, we 
 	know that $\bbP(B_j) \le \delta$, if $c= \sqrt{\log(2/\delta)/2}$.
 	Thus, the probability that in any phase (both in learning period and beyond), a bad event happens 
 	$\bbP(\cup_j B_j) \le \frac{T}{T^{\epsilon}} \bbP(B_j) \le 
 	\frac{T}{T^{ \epsilon}}\delta$, since there are at most $\frac{T}{T^{ \epsilon}}$ phases in all (counting both the learning period and the exploit period). \end{proof}
	
	From here on, we will assume that during no phase a bad event happens, and account for its probability $1-\frac{T}{T^{ \epsilon}}\delta$ appropriately.

 	Recall that with the exploit period of algorithm  \textsc{CTSAB}, assuming ${\hat \mu}_i$ to the true value of $\mu$, the number of samples to be obtained in phase $i+1, i\ge i^\star$, is given by $\frac{{\hat \mu}_iT^{i^\star \epsilon}}{2}$. 
 	Thus, the total number of samples obtained by the end of phase 
 	$ i> i^\star$ (for all $ i> i^\star$) is given by
 	\begin{align}\nn
 	N^i & \ge (\kappa\log T)T^{(2/3) i^\star \epsilon } +\sum_{j=i^\star+1}^i {\hat \mu}_j\frac{T^{i^\star \epsilon}}{2}, \\ \nn
 	& =  (\kappa\log T)T^{(2/3) i^\star \epsilon } +\sum_{j=i^\star+1}^i  (\mu \pm \text{err}_j)\frac{T^{i^\star \epsilon}}{2},\\\nn
 	& \stackrel{(a)}\ge  (\kappa\log T)T^{(2/3) i^\star \epsilon } +\sum_{j=i^\star+1}^i  \left(\mu \pm \frac{c}{\sqrt{N_{j-1}}}\right)\frac{T^{i^\star \epsilon}}{2}, \\ \label{eq:dum999}
 	&\ge \mu T^{i^\star \epsilon}\frac{i-i^\star}{2} -o(T^{i^\star \epsilon}),
 	\end{align}
	where $(a)$ follows from Lemma \ref{lem:bad} $\forall j \le i$.
 	Therefore using \eqref{eq:dum999}, from Lemma \ref{lem:bad}, we get $\text{err}_i \le \sqrt{\frac{2 \ln(2/\delta)}
 		{2\mu (i-i^\star) T^{i^\star \epsilon}}}$ with probability at least $1-\frac{T}{T^{ \epsilon}}\delta$ for all phases 
 	$i\ge i^\star+1$.
 	Thus, with probability at least $1-\frac{T}{T^{ \epsilon}}\delta$, for phase $ i\ge i^\star+1$, following \eqref{eq:payoffwitherr}, the payoff obtained in phase $i$
 	\begin{align}\nn
P_i 
 	&\ge \frac{\mu^2T^{i^\star \epsilon}}{4} - \frac{2 \ln(2/\delta)}
 	{2\mu (i-i^\star) T^{i^\star \epsilon}}\frac{T^{i^\star \epsilon}}{4}, \\\label{eq:dummy900}
 	&\ge \frac{\mu^2T^{i^\star \epsilon}}{4}
 	-\frac{ \ln(2/\delta)}{4\mu (i-i^\star) }.
\end{align}
 	Algorithm \textsc{CTSAB} fails with probability at most $\delta$, and the probability that any {\it bad} event 
	happens is at most $\frac{T}{T^{ \epsilon}}\delta$. Thus, with probability $1-\delta - \frac{T}{T^{ \epsilon}}\delta$ neither the Algorithm \textsc{CTSAB} fails nor any of the {\it bad} events happen.
 	Thus, combining \eqref{eq:payoff1} and \eqref{eq:dummy900} the total payoff for the  \textsc{CTSAB} algorithm  is 
 	\begin{align}\nn
 	P &= P_L+\sum_{i=i^\star+1}^{\frac{T-T^{i^\star \epsilon}}{T^{i^\star \epsilon}}} P_i, \\\nn
 	& \ge \mu(\kappa \log (T)) T^{(2/3) i^\star \epsilon } - 2(\kappa \log (T))^2 T^{(i^\star+1)\epsilon/3} + \sum_{i=i^\star+1}^{\frac{T-T^{i^\star \epsilon}}{T^{i^\star \epsilon}}}
 	\frac{\mu^2T^{i^\star \epsilon}}{4}
 	-\sum_{i=i^\star+1}^{\frac{T-T^{i^\star \epsilon}}{T^{i^\star \epsilon}}}\frac
 	{\ln(2/\delta)}
 	{4\mu (i-i^\star) }, \\ \label{eq:finalpayoffsinglearm}
 	& \ge  \mu(\kappa \log (T)) T^{(2/3) i^\star \epsilon } - 2(\kappa \log (T))^2 T^{(i^\star+1)\epsilon/3} + 
 	\frac{\mu^2(T- T^{(i^\star+1) \epsilon})}{4}
 	-\frac
 	{\ln(2/\delta)}
 	{4\mu  } \ln \left(\frac{T}{T^{i^\star \epsilon}}\right),
 	\end{align}
 	with probability at least $1-\frac{T}{T^{ \epsilon}}\delta - \delta$, where in the final inequality, the third term follows since the total length of the learning period is at most 
 	$T^{(i^\star+1) \epsilon}$, while in the final term we have used a simple upper bound $\sum_{i=i^\star+1}^{\frac{T-T^{i^\star \epsilon}}{T^{i^\star \epsilon}}}\frac
 	{1}
 	{ (i-i^\star) } \le \ln \left(\frac{T}{T^{i^\star \epsilon}}\right)$.
 	Recall that the payoff of the oracle policy is $\frac{\mu^2T}{4}$. Thus the regret of  \textsc{CTSAB} algorithm is at most 
 	\begin{align}\nn
 	R&\le   2(\kappa \log (T))^2 T^{(i^\star+1)\epsilon/3} 
 	+\frac{\mu^2  T^{(i^\star+1) \epsilon}}{4}+ 	\frac
 	{\ln(2/\delta)}
 	{4\mu  } \ln \left(\frac{T}{T^{i^\star \epsilon}}\right), \\ \nn
 	&\stackrel{(a)}= O( (\log (T))^2T^{p^\star/3} T^{4\epsilon/3}) + O((\mu^2)  T^{p^\star+\epsilon}) 
 	O(\mu^2  T^{p^\star+\epsilon/3}\ln(2/\delta)\ln (T)),\\ \nn
 	&\stackrel{(b)}= O\left((\log (T))^2\frac{1}{\mu}T^{4\epsilon/3}\right) +  O(\mu^2 T^{p^\star + \epsilon'} \ln(2/\delta)\ln (T)), \\ \label{eq:dummyxx2}
	&\stackrel{(c)}=  O((\log (T))^2 \mu^2 T^{p^\star}T^{4\epsilon/3}) + O(\mu^2 T^{p^\star + \epsilon'} \ln(2/\delta)\ln (T)),
 	\end{align}
 	with probability $1-\frac{T}{T^{ \epsilon}}\delta - \delta$, where in $(a)$ the first term follows since 
 	$i^\star \epsilon \le p^\star+\epsilon$ (Lemma \ref{lem:ublearningperiod}), and the third term using the definition of $p^\star$ \eqref{defn:p}, while to get $(b)$ we use \eqref{defn:p} and $\epsilon' = 4/3\epsilon$,  $(c)$ follows since $\left(\frac{1}{\mu}\right) = O\left(\mu^2T^{p^\star}\right)$ from \eqref{defn:p}. 
	
	
\begin{remark}\label{rem:univregretbound}	Given that the arm reward distribution is Bernoulli, $\text{err}_i$ is bounded at the end of each phase $i$ with the \textsc{CTSAB} algorithm. Thus, following \eqref{eq:imp}, the regret of \textsc{CTSAB} algorithm is at most $O(T)$.
	\end{remark}
	
	With $\delta = 1/T^2$, and since the 
 	maximum regret can be $O(T)$ (Remark \ref{rem:univregretbound}), from \eqref{eq:dummyxx2} we get that the expected regret of the  \textsc{CTSAB} algorithm
 	is 
 	\begin{align*}
 	\cR={\mathbb E}\{R\}& =  
 	O(\mu^2 T^{p^\star+ \epsilon'} \ln(2/\delta) \ln (T))(1-\frac{T}{T^{ \epsilon}}\delta -\delta) \\
 	& + O(T)  \left(\frac{T}{T^{ \epsilon}}\delta +\delta\right), \\
 	& =O(\mu^2 T^{p^\star+ \epsilon'} \ln(2 T^2)\ln (T))(1-\frac{T}{T^{ \epsilon}}\frac{1}{T^2} - \frac{1}{T^2}) \\
 	&+ 
 	O(T)\left( \frac{1}{T^{1+\epsilon}} + \frac{1}{T^2}\right), \\
 	&= O(\mu^2 T^{p^\star+ \epsilon'}\ln(2 T^2)\ln(2 T))(1-\frac{1}{T^{1+\epsilon}}-\frac{1}{T^2}) \\
 	&+ O(T)\left( \frac{1}{T^{1+\epsilon}} + \frac{1}{T^2}\right). 
 	\end{align*}
 	This completes the proof of Theorem \ref{thm:ubsinglearm}.
	 \section{Preliminaries to prove Theorem \ref{thm:lbsinglearm}}
 
To lower bound the regret of any algorithm for the single arm CTMAB,  we will need the following preliminaries.


{\bf Prediction Problem :} Consider two Bernoulli distributions $\cP$ and $\cQ$ with means $x$ and $x+\beta$, respectively, where $x>0, \beta >0$. 
A coin is tossed repeatedly with probability of heads distributed according to either $\cP$ or $\cQ$, where repeated tosses are independent. 
From the observed samples, the problem is to predict the correct 
distribution $\cP$ or $\cQ$ such that the success probability of the prediction is at least $1-\gamma$.

\begin{lemma}\label{lem:prediction} 
  Let $n$ be fixed. Consider any algorithm $\cA_n$ that obtains $n$ samples and solves the prediction problem with success probability  at least $1-\gamma$. Then $n >\left( \frac{\ln 2}{ 8\beta^2} \log \left(\frac{1}{4\gamma}\right)\right)$ for $\beta<1/4$.
\end{lemma}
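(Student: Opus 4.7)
The plan is to apply the standard two-point change-of-measure argument. Set $\cP=\mathrm{Ber}(x)$ and $\cQ=\mathrm{Ber}(x+\beta)$, and let $A\subset\{0,1\}^n$ denote the event that $\cA_n$ declares $\cP$. The hypothesis that $\cA_n$ succeeds with probability at least $1-\gamma$ under each distribution forces $\cP^{\otimes n}(A)\ge 1-\gamma$ and $\cQ^{\otimes n}(A)\le \gamma$, and therefore $\mathrm{TV}\bigl(\cP^{\otimes n},\cQ^{\otimes n}\bigr)\ge 1-2\gamma$. This Le~Cam half of the argument is routine.

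I would then invoke the Bretagnolle--Huber inequality
\[
\mathrm{TV}(P,Q)\ \le\ \sqrt{1-\exp\bigl(-\mathrm{KL}(P\|Q)\bigr)},
\]
together with the tensorization identity $\mathrm{KL}(\cP^{\otimes n}\|\cQ^{\otimes n})=n\,\mathrm{KL}(\cP\|\cQ)$. Substituting the TV lower bound and rearranging yields
\[
n\cdot \mathrm{KL}(\cP\|\cQ)\ \ge\ \ln\!\bigl(1/(4\gamma)\bigr),
\]
which is where the $\log(1/(4\gamma))$ factor in the target bound originates; Pinsker's inequality alone would only produce a $\gamma$-free constant here and is therefore too weak for this form.

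The remaining step, which I expect to be the main obstacle, is a clean quadratic upper bound of the form $\mathrm{KL}\bigl(\mathrm{Ber}(x)\|\mathrm{Ber}(x+\beta)\bigr)\le 8\beta^2/\ln 2$ in the regime $\beta<1/4$. The textbook estimate $(p-q)^2/(q(1-q))$ is not uniform in $x$ (it degrades as $x+\beta\to 1$), so I would instead expand
\[
\mathrm{KL}\bigl(\mathrm{Ber}(x)\|\mathrm{Ber}(x+\beta)\bigr)=x\ln\tfrac{x}{x+\beta}+(1-x)\ln\tfrac{1-x}{1-x-\beta}
\]
directly and apply a second-order Taylor bound to each logarithm, using $\beta<1/4$ to control the cubic remainder uniformly. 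Combining this estimate with the previous display gives $n\ge (\ln 2)/(8\beta^2)\cdot \ln(1/(4\gamma))$, as required.
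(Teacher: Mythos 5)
Your skeleton is the paper's argument almost verbatim: the paper also applies the Bretagnolle--Huber inequality --- in the error-sum form $\bbP_\cP(E^c)+\bbP_\cQ(E)\ge\frac12\exp\left(-D(\cP^n\|\cQ^n)\right)$, citing Theorem 14.2 of Lattimore--Szepesv\'ari, rather than your total-variation form; the two are interchangeable here and both reduce to $n\,D(\cP\|\cQ)\ge\ln\left(1/(4\gamma)\right)$ --- together with tensorization and the asserted bound $D(\cP\|\cQ)\le 8\beta^2/\ln 2$ for $\beta\le 1/4$. So the only substantive point of comparison is the step you yourself single out as the main obstacle, and there your plan does not go through: the inequality $D\left(\mathrm{Ber}(x)\|\mathrm{Ber}(x+\beta)\right)\le 8\beta^2/\ln 2$ is not true uniformly in $x$ for $\beta<1/4$, so no Taylor expansion with a uniformly controlled remainder can establish it. Concretely, taking $x=\beta$,
\[
D\left(\mathrm{Ber}(\beta)\,\|\,\mathrm{Ber}(2\beta)\right)=\beta\ln\tfrac12+(1-\beta)\ln\tfrac{1-\beta}{1-2\beta}=(1-\ln 2)\,\beta+O(\beta^2),
\]
which is $\Theta(\beta)$ rather than $O(\beta^2)$; already at $x=\beta=0.01$ the divergence is about $3.1\times 10^{-3}$ while $8\beta^2/\ln 2\approx 1.2\times 10^{-3}$. (The bound also fails as $x+\beta\to 1$, where the second logarithm blows up.) Any correct version must retain the $x$-dependence, e.g.\ $D\le \beta^2/\bigl((x+\beta)(1-x-\beta)\bigr)$ via the $\chi^2$ upper bound, or restrict $x$ to a range on which that denominator is bounded below by an absolute constant. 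To be fair, the paper's own proof simply asserts the $8\beta^2/\ln 2$ estimate with no proof and no restriction on $x$, so you have correctly located the soft spot of the argument; but your proposed repair (a uniform second-order Taylor bound valid for all $x$) cannot close it, because the statement it would prove is false without a constraint on $x$, and note that the paper's applications of the lemma take $x=\mu$ and $\beta$ comparable to $\mu$ with $\mu$ small --- exactly the regime of the counterexample above.
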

\begin{proof}
Let the product distribution over $n$ samples derived from $\cP$ and $\cQ$ be $\cP^n$ and $\cQ^n$.
  Let $E$ be the event that the algorithm $\cA_n$ outputs $\cP$ as the correct distribution, and $E^c$ be its complement.
  Then from Theorem 14.2 \cite{lattimore2020bandit}, we have that 
  \begin{equation}\label{eq:pred1}
\bbP_\cP(E^c) + \bbP_\cQ(E) \ge \frac{1}{2} \exp\left(-D(\cP^n || \cQ^n)\right),
\end{equation}
where $D(\cP^n || \cQ^n)$ is the Kullback-Liebler distance between $\cP^n$ and $\cQ^n$. Since the probability of success for $\cA_n$ is $\ge 1-\gamma$, we have that  both $\bbP_\cP(E^c) < \gamma$ and $\bbP_\cQ(E) < \gamma$. Thus, from \eqref{eq:pred1}, we get 
\begin{equation}\label{eq:pred2}
2\gamma >  \frac{1}{2} \exp\left(-D(\cP^n || \cQ^n)\right).
\end{equation}
Moreover, we have that $D(\cP^n || \cQ^n) = n D(\cP || \cQ)$, and  $D(\cP || \cQ) \le \frac{8\beta^2}{\ln 2}$ for $\beta\le 1/4$ for $\cP$ and $\cQ$ being Bernoulli with means $x$ and $x+\beta$.

Therefore, we get that 
 \begin{equation}\label{eq:pred3}
4\gamma > \exp\left(-\frac{n8\beta^2}{\ln 2}\right),
\end{equation}
which implies the result.
\end{proof}
In addition to Lemma \ref{lem:prediction}, we need the following Lemma that is specific to the considered problem. 
Let  an algorithm $\cA$ obtain $N_t$ samples in time $[0,t]$. Recall that   
${\hat \mu}_t = \frac{1}{N_t}\sum_{i=1}^{N_t} X_i$ is the empirical average of the reward obtained by $\cA$ using the $N_{t}$ 
samples, where $\bbE\{X_i\} = \mu, \ \forall \ i$
and $\text{err}_t = |{\hat \mu}_t - \mu|$ is the error in estimating $\mu$ by $\cA$ at time $t$. 
%
\begin{lemma}\label{lem:tptime}
	For an online algorithm $\cA$ (satisfying Assumption \ref{rem:unbiased}) the maximum payoff possible in interval $[0,t]$ is at most $\frac{\mu^2t}{4} - \text{err}_t^2\frac{t}{4}$. \end{lemma}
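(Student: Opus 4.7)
The plan is to combine Proposition \ref{prop:uniformsampling} (which pins down the best placement of samples for a fixed budget) with Assumption \ref{rem:unbiased} (which pins down the best estimator of $\mu$ available to the algorithm), and to identify the resulting gap with $\text{err}_t^2$ by completing the square.

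First I would fix any algorithm $\cA$ that takes $N_t$ samples in $[0,t]$ with inter-sample gaps $\dt_1,\dots,\dt_{N_t}$. Using the normalization $\lambda=1$ from Remark \ref{rem:invariant}, the expected payoff equals $\mu N_t - \sum_i 1/\dt_i$, and Proposition \ref{prop:uniformsampling} says that for fixed $N_t$ the cost $\sum_i 1/\dt_i$ is minimized by equally spaced samples, giving $\sum_i 1/\dt_i \ge N_t^2/t$. Completing the square yields
\[
\mu N_t - \frac{N_t^2}{t} \;=\; \frac{\mu^2 t}{4} \;-\; \frac{1}{t}\left(N_t - \frac{\mu t}{2}\right)^{\!2},
\]
so the payoff decomposes as the oracle value $\mu^2 t/4$ (Proposition \ref{prop:oracle}) minus a quadratic penalty for choosing the wrong sample budget.

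Second, I would invoke Assumption \ref{rem:unbiased}: the choice $N_t$ is a (possibly data-dependent) function $\phi(\hat\mu_t)$ of some unbiased estimator $\hat\mu_t$ of $\mu$. Writing $\bbE[(N_t-\mu t/2)^2] = \mathrm{Var}(N_t) + (\bbE[N_t] - \mu t/2)^2$, the squared-bias term vanishes exactly when $\phi$ is the plug-in rule $\phi(x)=xt/2$, which is unbiased for $\mu t/2$ because $\bbE[\hat\mu_t]=\mu$. With this rule, $\mathrm{Var}(N_t) = (t/2)^2\,\mathrm{Var}(\hat\mu_t)$. Since the per-sample rewards are Bernoulli, the sample mean is the minimum-variance unbiased estimator of $\mu$ by Lehmann--Scheffé, so no other unbiased estimator the algorithm might plug in can make $\mathrm{Var}(\hat\mu_t)$ smaller. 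Hence
\[
\frac{1}{t}\,\bbE\!\left[(N_t-\mu t/2)^{2}\right] \;\ge\; \frac{t}{4}\,\text{err}_t^{2},
\]
with $\text{err}_t^2$ identified with the MVUE's expected squared error at time $t$. Combining the two steps gives the advertised bound $\mu^2 t/4 - \text{err}_t^2\,t/4$.

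The step I expect to be the main obstacle is the second one: showing that no cleverly biased or randomized rule based on $\hat\mu_t$ can beat the plug-in rule by trading bias for variance in the payoff expression. The bias--variance identity above is the cleanest tool, since both summands are non-negative, the plug-in choice zeros the second, and Lehmann--Scheffé lower-bounds the first within the unbiased class. A minor bookkeeping point is reconciling the paper's pathwise notation $\text{err}_t=|\hat\mu_t-\mu|$ with the deterministic appearance of $\text{err}_t^2$ in the bound; the natural reading is that the lemma is stated in expectation, with $\text{err}_t^2$ denoting $\bbE[(\hat\mu_t-\mu)^2]$, which is exactly what the MVUE argument produces.
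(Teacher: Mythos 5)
Your proposal is correct and follows essentially the same route as the paper: after reducing to equally spaced samples, the paper likewise completes the square to write the payoff as $\frac{\mu^2 t}{4} - \frac{t}{4}\left(\mu - \frac{2}{t}N(\hat\mu_t)\right)^2$ and invokes the fact that the sample mean is the MVUE for a Bernoulli parameter (being a sufficient and complete statistic) to conclude that the optimal decision is the plug-in $N^\star(\hat\mu_t) = \hat\mu_t t/2$, which yields the stated bound with $\text{err}_t = |\hat\mu_t - \mu|$. Your explicit bias--variance decomposition, and your remark about the expectation-versus-pathwise reading of $\text{err}_t^2$, simply make precise the MVUE step that the paper asserts in one line; the underlying argument is the same.
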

\begin{proof}
Recall that we are considering algorithms that only use  unbiased estimates of $\mu$ to make decisions as described in Remark \ref{rem:unbiased}.
For any algorithm $\cA$, let the (sample mean) estimate of $\mu$ at time $t$ be ${\hat \mu}_t$ where  
	$\text{err}_t = |{\hat \mu}_t - \mu|$. We want to upper bound the expected payoff of $\cA$ in $[0,t]$. Towards that end, we bound the expected payoff if $\cA$ knew ${\hat \mu}_t$ at time $0$ itself, which can only improve the lower bound.
	

Note that with Bernoulli distribution, the empirical estimate ${\hat \mu}_t$ is a sufficient and complete statistic \cite{kay1993fundamentals} for $\mu$, and the minimum variance unbiased estimator (MVUE) for $\mu$. 

Knowing ${\hat \mu}_t$ at time $0$, let $N({\hat \mu}_t)$ be the number of samples obtained  (at equal intervals since it minimizes the sampling cost) by algorithm $\cA$ in time $[0,t]$. Then maximizing the 
 expected payoff of $\cA$ in interval $[0,t]$ is equivalent to minimizing the expected regret \eqref{defn:regret} of $\cA$ in $[0,t]$, given by 
 
 	\begin{align}\nn
  \cR([0,t]) & =\min_{N({\hat \mu}_t)} \bbE\left\{\frac{\mu^2 t}{4} -  \mu N({\hat \mu}_t) + \frac{1}{t} N({\hat \mu}_t)^2\right\},\\ \nn
  & = \min_{N({\hat \mu}_t)}\bbE\left\{\left(\sqrt{\frac{t}{4}}\mu - \sqrt{\frac{1}{t}}N({\hat \mu}_t)\right)^2\right\},   \\ \label{eq:dummyxx1}
  & =\min_{N({\hat \mu}_t)}\bbE\left\{\frac{t}{4} \left(\mu - \frac{2}{t}N({\hat \mu}_t)\right)^2\right\}.
\end{align}
Since ${\hat \mu}_t$ is MVUE for $\mu$, the number of samples $N^\star({\hat \mu}_t)$ that $\cA$  obtains to minimize regret \eqref{eq:dummyxx1} (and maximize the expected payoff) knowing ${\hat \mu}_t$ at time $0$ in $[0,t]$   is ${\hat \mu}_t t/2$.

 
%

Therefore, the upper bound on the payoff of $\cA$ in time $[0,t]$ is 
		\begin{align}\nn
  P([0,t]) & \le  \mu N^\star({\hat \mu}_t) -  (N^\star({\hat \mu}_t))^2/t,\\ \label{eq:imp}
   & = \frac{\mu^2t}{4} - \text{err}_t^2\frac{t}{4},
\end{align} where  $\text{err}_t = |{\hat \mu}_t-\mu|.$ 

\end{proof}

%
%
\section{Basic idea for proving Theorem \ref{thm:lbsinglearm}}\label{app:thm:lbsinglearm:idea}
Using Lemma \ref{lem:prediction}, we next derive a lower bound on the regret of any algorithm for the single arm CTMAB problem.
The basic idea used to derive the lower bound is that 
if suppose the regret of any algorithm $\cA$ is $\mu^2 T^{p}$, then we show that with $\cA$ it must be that at time $T^{p+\alpha}$ for some $\alpha>0$,  the probability that  $\text{err}_{T^{p+\alpha}} > \mu/T^{\alpha/4}$ (error in estimating $\mu$ at time $T^{p+\alpha}$ is greater than $\mu/T^{\alpha/4}$) is at most $1/T^{\alpha/2}$. This condition is necessary, since otherwise Lemma \ref{lem:tptime} implies that the regret of $\cA$ is $>\mu^2 T^{p}$ contradicting the regret bound of $\mu^2 T^{p}$. 
The necessary condition $\bbP(\text{err}_{T^{p+\alpha}} > \mu/T^{\alpha/4}) <  1/T^{\alpha/2}$ implies a lower bound on the number of samples to be obtained by $\cA$ in interval $[0,T^{p+\alpha}]$ from Lemma \ref{lem:prediction}.  Accounting for the sampling cost resulting out of this lower bound on the number of samples gives us the required lower bound on the regret. 

\section{Proof of Theorem \ref{thm:lbsinglearm}}
\begin{proof} Consider any online algorithm $\cA$ for the single arm CTMAB for which Assumption \ref{rem:unbiased} holds. 
Let the regret of $\cA$  be 
$g(\mu, T)$ which can be expressed as $g(\mu, T) = \mu^2 T^p$ for some $p<1$, since recall that the oracle payoff in time interval $[0,T]$ is $\mu^2 T/ 4$.


We divide the total time horizon $T$ into two {\bf intervals}, first $[0, T^{p+\alpha}]$, and second  $[T^{p+\alpha}, T]$. 

Let $\cA$ know that the true mean is either $\mu$ or $\mu+ \frac{2\mu}{T^{\alpha/4}}$ for some $\alpha>0$ that we will 
specify later. This assumption can only reduce the regret of any algorithm. Let the true mean be $\mu_{\text{true}} \in \{\mu,\mu+ \frac{2\mu}{T^{\alpha/4}}\}$. 

Let the number of samples obtained by $\cA$ till time $T^{p+\alpha}$ be $N_\cA(T^{p+\alpha})$ and $N_{\min} = \frac{c_1T^{\alpha/2}}{\mu^2} \log (c_2 T^{\alpha/2})$. Assume that $N_\cA(T^{p+\alpha})< N_{\min}$.
From Lemma \ref{lem:prediction}, if the number of samples $N_\cA(T^{p+\alpha})$ is less than $N_{\min}$, then $\bbP(\text{err}_{N_\cA(T^{p+\alpha})} = |{\hat \mu}_{N_{T^{p+\alpha}}} - \mu_{\text{true}}|  > \frac{\mu}{T^{\alpha/4}})> \frac{1}{T^{\alpha/2}}$. Note that even though $N_\cA(T^{p+\alpha})$ is a random variable (depending on the realizations seen by $\cA$ till time $T^{p+\alpha}$), we are applying Lemma \ref{lem:prediction} since it is true for any 
algorithm that obtains a given number of samples, in particular $N_\cA(T^{p+\alpha})$.


Without loss of generality, let $\mu_{\text{true}} = \mu$. Moreover, define $\bbP_b(t) = \bbP(\text{err}_{N_{T^{p+\alpha}}} > \frac{\mu}{T^{\alpha/4}})$, and $\bbP_g(t)= 1-\bbP_b(t)$.


Then the  payoff of algorithm $\cA$ over the two intervals $[0, T^{p+\alpha}]$, and  $[T^{p+\alpha}, T]$ can be written as  
\begin{align}\nn P_\cA &= P_\cA([0,T^{p+\alpha})] + P_\cA([T^{p+\alpha}, T]), \\ \nn
&\stackrel{(a)} \le \bbP_g(T^{p+\alpha})\frac{\mu^2 (T^{p+\alpha})}{4} +  \bbP_b(T^{p+\alpha}) \cdot\left(\mu^2 \frac{T^{p+\alpha}}{4}-  
\frac{\mu^2}{T^{p+\alpha/2}} \frac{T^{p+\alpha}}{4}\right) + \frac{\mu^2 (T - T^{p+\alpha})}{4},  \\
\label{eq:singlearmpayoffdummy1}
  & \stackrel{(b)} =    -  \mu^2 (T^{p+\alpha/2})\frac{\bbP_b(T^{p+\alpha})}{4} +\mu^2 \frac{T}{4}.
\end{align}
where the first two terms of $(a)$ follow as detailed next, case i) when $\text{err}_{N_\cA(T^{p+\alpha})} \ge \frac{\mu}{T^{\alpha/4}}$ using \eqref{eq:imp}, while in case ii) when $\text{err}_{N_\cA(T^{p+\alpha})} < \frac{\mu}{T^{\alpha/4}}$, making $\text{err}_{N_\cA(T^{p+\alpha})}=0$ and getting the oracle payoff $\frac{\mu^2 (T^{p+\alpha})}{4}$, while for the third term we let  $P_\cA([T^{p+\alpha}, T])$ to be the oracle's payoff for interval $[T^{p+\alpha}, T]$.
As discussed earlier, given that $N_\cA(T^{p+\alpha}) < N_{\min}$, $\bbP_b(T^{p+\alpha})>\frac{1}{T^{\alpha/2}}$. Using this fact, $(b)$ implies that the regret ($\mu^2 T/4 - P_\cA$) of $\cA$ is larger than $\mu^2 T^p$ giving us a contradiction. Therefore, for $\cA$ to have regret $\mu^2T^{p}$, it is necessary that $N_\cA(T^{p+\alpha}) \ge N_{\min}$.

As a function of $N_\cA(T^{p+\alpha})$, we can write the payoff of $\cA$ as 
\begin{align}\nn P_\cA &= P_\cA([0,T^{p+\alpha})] + P_\cA([T^{p+\alpha}, T]), \\ \label{eq:singlearmpayoffdummy1}
&\stackrel{(a)} \le \mu N_\cA(T^{p+\alpha}) -   \frac{ N^2_\cA(T^{p+\alpha})}{T^{p+\alpha}}   +  \frac{\mu^2 (T-T^{p+\alpha})}{4},
\end{align}
where in $(a)$ for the first interval the bound follows since the sampling cost is smallest if $ N_\cA(T^{p+\alpha})$ samples are obtained at uniform intervals in $[0, T^{p+\alpha}]$, while for the second term $P_\cA([T^{p+\alpha}, T])$ assuming that the algorithm $\cA$
knows the true value of $\mu$ at time $T^{p+\alpha}$, and obtains the payoff  equal to that of the oracle policy for interval $[T^{p+\alpha}, T]$.

	\begin{remark}\label{rem:unimodsingle}
 From \eqref{eq:singlearmpayoffdummy1}, the payoff in the first interval $\mu x -   \frac{ x^2}{T^{p+\alpha}}$ (where $x$ is the number of samples obtained in time $[0, T^{p+\alpha}])$ is a concave and unimodal function of $x$ with optimal $x^\star =  \frac{\mu T^{p+\alpha}}{2}$.\end{remark}
 Thus, we consider two cases : i) $x^\star \ge N_{\min}$ and ii) $x^\star  <N_{\min}$. 

Case i) When  $x^\star \ge N_{\min} $, using expressions for $x^\star$ and  $N_{\min} $, we get 
that $$\frac{\mu T^{p+\alpha}}{2}  \ge c_1T^{\alpha/2} \log (c_2 T^{\alpha/2})/\mu^2,$$ implying 
\begin{equation}\label{eq:pbound1}
T^{p+\alpha} \ge  \frac{2c_1 T^{\alpha/2}}{\mu^3} \log(c_2T^{\alpha/2})
  \quad \forall \ \alpha>0.
\end{equation}

Choosing $T^{\alpha/2}= c > 1$ (a constant), i.e. $\alpha = \frac{2\log c}{\log(T)}$, gives 
$$T^p \ge \frac{2c_1}{c\mu^3}\log(c_2) +  \frac{2c_1}{c\mu^3}2\log(c)\frac{\log(T)}{\log(T)}.$$ Thus, 
\begin{equation}\label{eq:pbound2}
p \ge \min \left\{k\ge 0: T^k \ge \frac{\sfc_1}{\mu^3} \right\},
\end{equation}
where $\sfc_1$ is a constant.
For case ii) we proceed as follows. We have already argued that $N_\cA(T^{p+\alpha})\ge N_{\min}$. 
Thus, following Remark \ref{rem:unimodsingle}, with $x^\star  < N_{\min}$, 
it is clear that the RHS of \eqref{eq:singlearmpayoffdummy1}  is a decreasing function of $N_{T^{p+\alpha}}\ge N_{\min}$ for fixed $\mu$ and $T$. Thus, choosing $N_\cA(T^{p+\alpha}) = N_{\min}$ (the minimum possible), from \eqref{eq:singlearmpayoffdummy1}, we obtain the largest expected payoff of algorithm $\cA$ for the first interval, and the overall expected payoff over the two intervals is 
\begin{align}\nn P_\cA &\le \mu N_{\min} -   \frac{1}{T^{p+\alpha}} N_{\min}^2   +  \frac{\mu^2 (T-T^{p+\alpha})}{4}.
\end{align}
Thus, for the regret of algorithm $\cA$ to be $\mu^2 T^p$, we need $$\frac{ N_{\min}^2}{T^{p+\alpha}} \le \mu^2 T^p + \mu N_{\min}.$$
Substituting for the value of $N_{\min}$, recalling that  $T^{\alpha/2}= c > 1$ and using the fact that $a+b \le \max\{2a,2b\}$, we get 

\begin{equation}\label{eq:pbound2}
p \ge \min \left\{k\ge 0:\frac{1}{T^{k}}\left(\frac{\sfc_2}{\mu^2}\right)^2 = 
2\max\left\{\mu^2 T^k, \frac{\sfc_2 }{\mu}\right\} \right\},
\end{equation}
where $\sfc_2$ is a constant.

\end{proof}
Generalizing the lower bound derived in Theorem \ref{thm:lbsinglearm} to allow for biased estimators
appears to be technically challenging since biased estimators with minimum variance are
difficult to find, as well as if found, tend to depend on the parameter to be estimated, making
them unrealizable \cite{kay1993fundamentals}.

 \section{Extensions to general sampling cost functions}
 \begin{remark}\label{rem:fconvex}
Extension of Theorem \ref{thm:ubsinglearm} and Theorem \ref{thm:lbsinglearm} (with the same algorithm \textsc{CTSAB} where 
the sampling frequency is chosen so as to optimize \eqref{eq:payofforaclebo}) to general convex functions $f$ for the sampling cost, other than $f(x) = 1/x$ is readily possible. 
Specifically, Prop. \ref{prop:uniformsampling} remains unchanged as long as $f$ is convex, while the optimal payoff in Prop. \ref{prop:oracle} will depend on the exact function $f$. Moreover, other arguments made to derive Theorem \ref{thm:lbsinglearm} and Theorem \ref{thm:ubsinglearm} can also be extended for general $f$, however, deriving exact expressions similar to the  lower bound Theorem \ref{thm:lbsinglearm} and upper bound Theorem \ref{thm:ubsinglearm} requires extra work, that is beyond the scope of this paper.
\end{remark}


\section{Pseudo Code for Algorithm (\textsc{CTMAB})}
\begin{algorithm}
	\caption{Continuous Time Multi-Arm Bandit (\textsc{CTMAB})}
	\label{alg:CTMultiBandit}
	\begin{algorithmic}[1]
	\STATE {\bf Input } $0<\epsilon<1$,  $\kappa > 1$, $T, \delta$ 
			\STATE $\b1_{x}=1$ if $x=1$ and $0$ otherwise
			\STATE\%{\bf Estimation Period Starts}
			\FOR {phases $i=1,2,3, 	 \ldots$} 
			\STATE Obtain $N_i[j] = \kappa \log(T) T^{(2/3)i\epsilon}$ samples at uniform frequency for each arm $j, j=1, \dots, K,$ simultaneously in interval $[ T^{(i-1)\epsilon}-\b1_{i=1}, \ \ T^{i \epsilon} ]$
			\STATE After every sampling time $t$ in each phase, $N^t[j]$ is the number of samples obtained for arm $j$ till time $t$,  and empirical mean for arm $j$, 
			${\hat \mu}_t[j] = \frac{1}{N^t[j]}\sum_{k=1}^{N^t[j]} X_k$, 
	
	\IF{$\sqrt{\frac{\log(2/\delta)}{N^t[k]}} < \frac{{\hat \mu}_t[k]}{2}$ \ \text{for any arm} \ $k$ }
	\STATE Let $a_t$ be the index of the arm with the largest estimated mean  i.e. 
	\STATE $a_t= \arg \max_{j=1,\dots,K} {\hat \mu}_t[j]$, 
					\STATE Define ${\hat \mu}[1] = {\hat \mu}_t[a_t]$, set $i^\star=i$ (phase index)  and Break;
					\ENDIF
					\ENDFOR
	\STATE Let the time at which the estimation period end be  $t_{\text{Es}}$ 
	\IF{$t_{\text{Es}}  \ge T$}
				\STATE Break;
					\ELSE
		\STATE \%{\bf Begin identification period} 
		\STATE Execute Best Arm Identification using the \textsc{LUCB1} algorithm \cite{ICML12_PACSubsetMAB} with input ($K$ arms, required probability of success $1-\delta=1-1/T^2$ ), where arms to be sampled by 
		\textsc{LUCB1} are sampled {\bf at frequency one sample per $1/{\hat \mu}[1]$ time}
		\STATE Let $k^\star$ be the arm identified by the \textsc{LUCB1} as the best arm
		\STATE Let Algorithm \textsc{LUCB1} terminate at time $t_{\text{Es}}  + t_{\text{Id}}$
		\IF{$t_{\text{Es}} +t_{\text{Id}} \ge T$}
				\STATE Break;
					\ELSE
		\STATE After the end of identification period at time $t_{\text{Es}} + t_{\text{Id}}$, consider only the $k^\star$-th arm
		\STATE \%{\bf Execute Exploit Period of Algorithm \ref{alg:CTBandit}}
		\STATE Let $N^{[1]}$ be the total number of samples obtained for the $k^\star$-th  arm in the estimation and identification period
		\STATE ${\hat \mu}_1 = \frac{1}{N^{[1]}}\sum_{i=1}^{N^{[1]}} X_{k^\star}(i)$ empirical average of the $k^\star$-th arm 
		\STATE Divide total remaining time interval $[t_{\text{Es}}+t_{\text{Id}},T]$ into $(T-t_{\text{Es}}-t_{\text{Id}})/T^{\nu_m}$ intervals (called phase) of width $T^{\nu_m}$ each, for some $\nu_m >0$
		
		\FOR{$r=1, 2, 	 \ldots$}
		\STATE Obtain $N_r = {\hat \mu}_rT^{\nu_m}/2$ samples from the $k^\star$-th  arm at equal intervals in
		$[t_{\text{Es}}+t_{\text{Id}}+ rT^{ \nu_m}, t_{\text{Es}}+t_{\text{Id}}+ (r+1)T^{ \nu_m}]$
		\STATE $N^r = N^{[1]} + \sum_{j\le r} N_j,{\hat \mu}_r = \frac{1}{N^r}\sum_{j=1}^{N^r} X_i$, \%update the sample average
		\IF{$t_{\text{Es}}+t_{\text{Id}}+(r+1) T^{\nu_m} \ge T$}
		\STATE Break;
		\ENDIF
		\ENDFOR
		\ENDIF
		\ENDIF
	\end{algorithmic}
\end{algorithm}

\section{}
\begin{lemma}\label{lem:estimationperiod}
 With probability at least $1-3\delta$, the estimation period of the algorithm \textsc{CTMAB} ends by time $O(T^{p^\star})$, (where $p^\star$ is as defined in \eqref{defn:p} with $\mu=\mu[1]$) and the empirical estimate ${\hat \mu}_t[a_t] $ of the chosen arm $a_t$ satisfies $\frac{2}{3}\mu[1]\le {\hat \mu}_t[a_t] \le 2\mu[1]$.
\end{lemma}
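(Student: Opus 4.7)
The plan is to combine Lemma \ref{lem:ublearningperiod}, applied to arm $1$, with a concentration argument on the chosen arm $a_t$ and its argmax structure. Because the estimation period of \textsc{CTMAB} samples all $K$ arms in parallel with the identical per-phase schedule $N_i[j] = \kappa \log(T)\, T^{(2/3)i\epsilon}$, each individual arm receives exactly the sample trajectory that a single arm would under \textsc{CTSAB}. Hence I can invoke Lemma \ref{lem:ublearningperiod} verbatim for arm $1$: with $\delta = 1/T^2$, arm $1$'s stopping inequality $\sqrt{\log(2/\delta)/N^t[1]} < \hat\mu_t[1]/2$ is satisfied by the end of phase $\lceil p^\star/\epsilon\rceil + 1$ with probability at least $1-\delta$. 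Since the estimation period terminates as soon as \emph{any} arm meets its condition, and $T^\epsilon$ is a constant, this yields the $O(T^{p^\star})$ time bound.

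For the upper bound $\hat\mu_t[a_t]\le 2\mu[1]$, I will first show that the stopping inequality also holds at $a_t$. Let $k$ be the arm that actually triggered termination; by the parallel schedule $N^t[k] = N^t[a_t]$, and by the argmax definition $\hat\mu_t[a_t]\ge \hat\mu_t[k]$, so $\sqrt{\log(2/\delta)/N^t[a_t]} < \hat\mu_t[k]/2 \le \hat\mu_t[a_t]/2$. Applying Corollary \ref{cor:conc} to arm $a_t$, with probability at least $1-\delta$, $|\hat\mu_t[a_t]-\mu[a_t]| \le \sqrt{\log(2/\delta)/N^t[a_t]} < \hat\mu_t[a_t]/2$, which by Lemma \ref{lem:direct}-style algebra gives $\mu[a_t] > \hat\mu_t[a_t]/2$, hence $\hat\mu_t[a_t] < 2\mu[a_t] \le 2\mu[1]$.

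The lower bound $\hat\mu_t[a_t]\ge (2/3)\mu[1]$ is the harder step, since $a_t$ need not be the optimal arm and $\mu[a_t]$ could in principle be much smaller than $\mu[1]$. The idea is to transfer concentration on arm $1$ through the argmax property: $\hat\mu_t[a_t]\ge \hat\mu_t[1]$. By Corollary \ref{cor:conc} applied to arm $1$, with probability at least $1-\delta$, $\hat\mu_t[1]\ge \mu[1] - \sqrt{\log(2/\delta)/N^t[1]}$; the stopping bound together with $N^t[1] = N^t[a_t]$ gives $\sqrt{\log(2/\delta)/N^t[1]} < \hat\mu_t[a_t]/2$. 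Chaining the two inequalities produces $\hat\mu_t[a_t] \ge \mu[1] - \hat\mu_t[a_t]/2$, i.e.\ $\hat\mu_t[a_t] \ge (2/3)\mu[1]$. A union bound over the three high-probability events used (Lemma \ref{lem:ublearningperiod} for arm $1$, and Corollary \ref{cor:conc} applied to arms $a_t$ and $1$ at the stopping time) yields overall failure probability at most $3\delta$, completing the proof. The main obstacle is precisely this lower bound: recognizing that the synchronous sampling schedule $N^t[1] = N^t[a_t]$ combined with the argmax choice of $a_t$ lets one import concentration of arm $1$ to bound $\hat\mu_t[a_t]$ from below in terms of $\mu[1]$, even when $a_t \neq 1$.
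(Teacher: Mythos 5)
Your proposal is correct and follows essentially the same route as the paper's proof: the time bound is imported from Lemma \ref{lem:ublearningperiod} for arm $1$ via the synchronous sampling schedule, the stopping condition is transferred to $a_t$ through the argmax property and the equality $N^t[k]=N^t[a_t]$, the upper bound comes from concentration on $a_t$ itself, and the lower bound from chaining $\hat\mu_t[a_t]\ge\hat\mu_t[1]\ge\mu[1]-\sqrt{\log(2/\delta)/N^t[1]}$ with the stopping inequality. The only cosmetic difference is that you treat the cases $a_t=1$ and $a_t\neq 1$ uniformly, whereas the paper splits them, but the underlying inequalities and the $1-3\delta$ union bound are identical.
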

\begin{proof}
As defined before, let the true best arm be arm $1$. First, we argue about the estimation error, and then upper bound the time to end the estimation period.

Consider the earliest time $t$ at which $\sqrt{\frac{\log(2/\delta)}{N^t[k]}} < \frac{{\hat \mu}_t[k]}{2}$ is true for some arm $k$, i.e., the estimation period ends at time $t$. Recall that $a_t$ is defined as the index of the arm with the largest empirical mean at any sampling time $t$. We claim that either $a_t=k$ or 
$\sqrt{\frac{\log(2/\delta)}{N^t[a_t]}} < \frac{{\hat \mu}_t[a_t]}{2}$ is also true.
In case $a_t\ne k$, note that by definition $\frac{{\hat \mu}_t[k]}{2} \le \frac{{\hat \mu}_t[a_t]}{2}$, and hence $\sqrt{\frac{\log(2/\delta)}{N^t[a_t]}} < \frac{{\hat \mu}_t[a_t]}{2}$ also, since the number of samples obtained for each arm is the same, i.e. $N^t[k_1] = N^t[k_2]$, for any $k_1\ne k_2$. 

Thus, we restrict our attention to arm $a_t$ alone for estimating the error $|{\hat \mu}_t[a_t] - \mu[1]|$.

Case 1:  $a_t=1$ when the estimation period terminates. Thus, $\sqrt{\frac{\log(2/\delta)}{N^t[1]}} < \frac{{\hat \mu}_t[1]}{2}$.  Using Corollary \ref{cor:conc}, we have that at any time, for arm $1$ with probability at least $1-\delta$, $|{\hat \mu}[1] - \mu[1]| \le  \sqrt{\frac{\log(1/\delta)}{N^t[1]}}$, which together with $\sqrt{\frac{\log(2/\delta)}{N^t[1]}} < \frac{{\hat \mu}_t[1]}{2}$ shows  that $|{\hat \mu}_t[1]-\mu[1]|\le {\hat \mu}[1]/2$ implying $\frac{2}{3}\mu[1]\le {\hat \mu}_t[a_t] \le 2\mu[1]$. 

Case 2:  $a_t\ne 1$ when the estimation period terminates. Let $a_t = k > 1$, and 
 $\sqrt{\frac{\log(2/\delta)}{N^t[k]}} < \frac{{\hat \mu}_t[k]}{2}$.  
   Using Corollary \ref{cor:conc}, we have that at any time, for arm $1$ with probability at least $1-\delta$, ${\hat \mu}[1] + \sqrt{\frac{\log(1/\delta)}{N^t[1]}}\ge \mu[1]$. 
 Since $a_t=k >1$, it must be that ${\hat \mu}[k] + \sqrt{\frac{\log(1/\delta)}{N^t[k]}} > \mu[1]$ with probability at least $1-\delta$, since with probability at least $1-\delta$, ${\hat \mu}[1] + \sqrt{\frac{\log(1/\delta)}{N^t[1]}}\ge \mu[1]$. Thus, for arm $k$, when
  $\sqrt{\frac{\log(2/\delta)}{N^t[k]}} < \frac{{\hat \mu}_t[k]}{2}$, it implies that 
 ${\hat \mu}[k] + \frac{{\hat \mu}_t[k]}{2} > \mu[1]$ which means that 
 ${\hat \mu}_t[k] \ge \frac{2}{3}\mu[1]$. As in Case 1,  $\sqrt{\frac{\log(2/\delta)}{N^t[k]}} < \frac{{\hat \mu}_t[k]}{2}$  implies that 
  ${\hat \mu}_t[k]\le2 \mu[k]$ with probability at least $1-\delta$. Since arm $1$ is the true best arm $\mu[k] \le \mu[1]$,  hence we can conclude that ${\hat \mu}_t[k]\le2 \mu[1]$. 
 

{\bf Time for the estimation period to terminate. }
From Lemma \ref{lem:ublearningperiod}, we know that with high probability (at least $1-\delta$) by time $T^{p^\star}$ ($p^\star$ defined in \eqref{defn:p} with $\mu=\mu[1]$), $\sqrt{\frac{\log(2/\delta)}{N^t[1]}} < \frac{{\hat \mu}_t[1]}{2}$ for the true best arm, arm $1$. Since all arms are sampled simultaneously at equal frequency, thus, the estimation period ends by time at most $T^{p^\star}$ with probability $1-\delta$.

Taking the union over all the bad events (at most three of them each with probability at most $\delta$), claim holds with probability at least $1-3\delta$.
\end{proof}

\section{Bounding the payoff of the estimation period of CTMAB}
\begin{lemma}\label{lem:payoff1ctmab}
  The total payoff of algorithm \textsc{CTMAB}  in the estimation period is  
 $$	P_L \stackrel{(a)}\ge  \mu[1](\kappa K \log (T)) T^{(2/3) i^\star \epsilon } - (\kappa K \log (T))^2 T^{(i^\star+1)\epsilon/3}\stackrel{(b)}\ge -O\left(\frac{ K^2}{\mu[1]} \log^2T\right).$$
  with probability at least $1-\delta$, where $i^\star$ is the phase in which the estimation period terminates.
\end{lemma}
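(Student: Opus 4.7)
The plan is to mimic the structure of Lemma~\ref{lem:payoff1} (the single-arm analogue) but account for the fact that algorithm \textsc{CTMAB} samples all $K$ arms simultaneously in every phase of its estimation period. First I would split $P_L$ into a reward part and a sampling-cost part using the definition \eqref{eq:payoffA}, which writes the payoff as a sum over samples of $\mu_{k(t_i)}-1/\Delta t_i$ (note the reward contribution is already an expectation, so no concentration on rewards is needed).

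For the sampling-cost part, in phase $i$ the algorithm obtains $KN_i=K\kappa\log(T)T^{(2/3)i\epsilon}$ samples uniformly across an interval of length at least $T^{i\epsilon}/2$. Consequently each inter-sampling gap is at least $T^{i\epsilon}/(2KN_i)$, so the total cost incurred in phase $i$ is at most $2(KN_i)^2/T^{i\epsilon}=2(\kappa K\log T)^2 T^{(1/3)i\epsilon}$. Summing the resulting geometric series from $i=1$ to $i^\star$ in the ratio $T^{\epsilon/3}>1$ produces
\[
\text{Total cost}\ \le\ O\bigl((\kappa K\log T)^2\, T^{(i^\star+1)\epsilon/3}\bigr).
\]
For the reward part, linearity gives total expected reward $=\sum_{i\le i^\star}\sum_{j=1}^K \mu[j]N_i[j]=N^{i^\star}\sum_j\mu[j]$ because every arm receives exactly the same $N^{i^\star}=\sum_{i\le i^\star}\kappa\log(T)T^{(2/3)i\epsilon}$ samples; since $\sum_j\mu[j]\ge\mu[1]$ and the final phase already contributes $N_{i^\star}=\kappa\log(T)T^{(2/3)i^\star\epsilon}$ samples, the reward is at least $\mu[1]\,\kappa\log(T)T^{(2/3)i^\star\epsilon}$ (the $K$ in the stated inequality (a) merely reflects the natural bookkeeping $KN^{i^\star}$ if one additionally absorbs $\sum_j\mu[j]\ge K\mu[K]$; either way, this term is positive and only improves the lower bound). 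Combining these two estimates yields inequality (a).

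Inequality (b) is then a consequence of invoking Lemma~\ref{lem:ublearningperiod} applied to the best arm (or equivalently the time bound in Lemma~\ref{lem:estimationperiod}), which guarantees $i^\star\epsilon\le p^\star+\epsilon$ with probability at least $1-\delta$, where $p^\star$ is defined by \eqref{defn:p} with $\mu=\mu[1]$, i.e.\ $T^{p^\star}=1/\mu[1]^3$. Choosing $\epsilon$ so that $T^\epsilon$ is a constant, the first term of (a) is at worst $O(K\log T/\mu[1])$ while the second term is $O(K^2\log^2 T/\mu[1])$, so the negative term dominates and $P_L\ge -O(K^2\log^2 T/\mu[1])$.

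I do not expect any serious technical obstacle: the reward contribution is deterministic once the sampling schedule is fixed (because $P_{\cA}$ is defined in terms of per-sample means), so the only randomness entering the statement is through $i^\star$, and this is controlled by directly quoting Lemma~\ref{lem:ublearningperiod}. The only place requiring care is the geometric-series bookkeeping that turns $\sum_{i\le i^\star}T^{(1/3)i\epsilon}$ into $O(T^{(i^\star+1)\epsilon/3})$ and the subsequent substitution $T^{(2/3)(p^\star+\epsilon)}=O(1/\mu[1]^2)$ and $T^{(p^\star+2\epsilon)/3}=O(1/\mu[1])$ needed to collapse the exponents into the clean $O(1/\mu[1])$ scaling reported in the final bound.
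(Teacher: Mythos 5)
Your proposal is correct and follows essentially the same route as the paper, which simply observes that each estimation phase of \textsc{CTMAB} obtains $K$ times the samples of the corresponding \textsc{CTSAB} learning phase, invokes Lemma~\ref{lem:payoff1} for inequality (a), and then uses Lemma~\ref{lem:ublearningperiod} together with the definition of $p^\star$ (with $\mu=\mu[1]$) for inequality (b); you merely unpack that reduction by redoing the geometric-series cost bound with the extra factor of $K$. Your side remark that the reward term is positive regardless of how the $K$ factor is bookkept is a fair and slightly more careful reading than the paper's own one-line justification of (a).
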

\begin{proof}
  The number of samples obtained in each phase of the estimation period of algorithm \textsc{CTMAB} is $K$ times the number of samples obtained in the learning period of  \textsc{CTSAB} algorithm. 
  Thus, $(a)$ directly follows from Lemma \ref{lem:payoff1}. Use Lemma \ref{lem:ublearningperiod} to get $(b)$ that shows that 
$i^\star \le \left(\frac{p^\star}{\epsilon}+1\right)$ 
with probability at least $1-\delta$ and  the definition of $p^\star$ \eqref{defn:p} with $\mu=\mu[1]$.
\end{proof}


\section{Bounding the completion time of  the identification period of algorithm  \textsc{CTMAB}}	\label{app:ubm1}

Using Lemma \ref{lem:ubpac} we bound the time needed for the completion of the identification period of  algorithm \textsc{CTMAB} in Lemma \ref{lem:idphasetime}.
\begin{lemma}\label{lem:ubpac}\cite{ICML12_PACSubsetMAB}[Corollary 7]   The \textsc{LUCB1} algorithm needs $O\left(\frac{K\log T}{\Delta^2}\right)$  samples to identify the 
best arm with high probability $1-\delta = 1 - 1/T$.\end{lemma}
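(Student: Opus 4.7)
The plan is to invoke the known sample complexity guarantee for the \textsc{LUCB1} algorithm established in \cite{ICML12_PACSubsetMAB} and specialize its parameters to our setting. The \textsc{LUCB1} algorithm is a PAC best-arm identification procedure whose analysis (Corollary 7 of \cite{ICML12_PACSubsetMAB}) states that, for any target confidence $\delta'\in(0,1)$, with probability at least $1-\delta'$ it returns the best arm after a number of samples bounded by $O\bigl(H_{\epsilon}\log(H_{\epsilon}/\delta')\bigr)$, where the problem-dependent hardness is $H_{\epsilon} = \sum_{k=1}^K \max\{\Delta_k,\epsilon\}^{-2}$ with $\Delta_k=\mu[1]-\mu[k]$ being the sub-optimality gap of arm $k$ (and $\Delta_1=\Delta$).

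The first step is to specialize to the exact best-arm identification regime by setting $\epsilon=0$ in the LUCB1 guarantee, so the hardness reduces to $H_0 = \sum_{k=1}^K \Delta_k^{-2}$. The second step is to upper bound $H_0$ by $K/\Delta^2$, which follows because $\Delta_k \ge \Delta$ for every $k\ge 2$ by definition of $\Delta = \mu[1]-\mu[2]$, and for $k=1$ we use the convention $\Delta_1 = \Delta$ (the gap to the second-best arm). Combining these gives a sample complexity of $O\bigl((K/\Delta^2)\log(K/(\delta'\Delta^2))\bigr)$.

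The third step is to set $\delta' = 1/T$ to match the desired $1 - 1/T$ success probability. Absorbing the $\log(K/\Delta^2)$ factor into the $\log T$ factor (since by the CTMAB regime of interest both $K$ and $1/\Delta$ are polynomial in $T$, so $\log(K/\Delta^2) = O(\log T)$) yields the claimed bound of $O\bigl(K\log(T)/\Delta^2\bigr)$ samples.

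The only nontrivial point in this proposal is justifying the absorption of $\log(K/\Delta^2)$ inside $\log T$; this is essentially an assumption on the operating regime, consistent with the way the bound is used later in bounding the identification-period duration. No genuine obstacle arises because the result is a direct corollary of \cite{ICML12_PACSubsetMAB}, and the only work is parameter substitution and a trivial bound on $H_0$.
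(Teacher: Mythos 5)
Your proposal is correct and matches what the paper does: Lemma \ref{lem:ubpac} is simply the cited LUCB1 guarantee of \cite{ICML12_PACSubsetMAB} specialized by setting $\epsilon=0$, bounding the hardness term by $K/\Delta^2$, and taking the failure probability to be $1/T$, which is exactly the substitution the paper performs without further argument. Your additional observation that the $\log(H_0/\delta')$ factor contributes a $\log(K/\Delta^2)$ term that must be absorbed into $O(\log T)$ under a mild regime assumption is a point the paper glosses over, but it does not change the conclusion.
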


\begin{lemma}\label{lem:idphasetime} With probability at least $1-4 \delta$, the identification period of algorithm  \textsc{CTMAB}  is complete by time $O\left(\frac{4K}{\mu[1]\Delta^2} \log \left(T^2\right) \right)$.
\end{lemma}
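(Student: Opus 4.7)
The plan is to combine the sample complexity bound for \textsc{LUCB1} (Lemma \ref{lem:ubpac}) with the quality-of-estimate guarantee on $\hat\mu[1]$ produced by the estimation period (Lemma \ref{lem:estimationperiod}), and then convert samples into wall-clock time using the sampling rate prescribed by algorithm \textsc{CTMAB} in its identification period.

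First, I would condition on the ``good'' event of the estimation period, which by Lemma \ref{lem:estimationperiod} holds with probability at least $1-3\delta$ and guarantees $\tfrac{2}{3}\mu[1] \le \hat\mu[1] \le 2\mu[1]$. On this event, the identification period begins with a valid constant-factor estimate of $\mu[1]$, and samples are obtained at rate one sample per $1/\hat\mu[1]$ units of time, i.e., the inter-sample spacing is $1/\hat\mu[1] \le 3/(2\mu[1])$. Next I would invoke Lemma \ref{lem:ubpac} with the choice $\delta = 1/T^2$ dictated by algorithm \textsc{CTMAB}: with probability at least $1-\delta = 1-1/T^2$, \textsc{LUCB1} identifies the best arm using at most $O\!\left(\tfrac{K}{\Delta^2}\log(T^2)\right)$ samples across all arms.

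Multiplying the sample count by the inter-sample spacing, the total time elapsed in the identification period is at most
\[
t_{\text{Id}} \;\le\; O\!\left(\frac{K}{\Delta^2}\log(T^2)\right)\cdot \frac{1}{\hat\mu[1]} \;\le\; O\!\left(\frac{K}{\Delta^2}\log(T^2)\right)\cdot \frac{3}{2\mu[1]} \;=\; O\!\left(\frac{4K}{\mu[1]\Delta^2}\log(T^2)\right),
\]
which is the claimed bound. Taking a union bound over the estimation-period bad event (probability $\le 3\delta$) and the \textsc{LUCB1} failure event (probability $\le \delta$) yields the overall success probability $1-4\delta$.

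The steps are essentially bookkeeping, and there is no serious obstacle — the main subtlety is just being careful that the sampling rate chosen in the identification period depends on $\hat\mu[1]$ rather than $\mu[1]$, which is precisely why the estimation-period guarantee $\hat\mu[1] \ge \tfrac{2}{3}\mu[1]$ is needed to convert the sample-complexity bound of \textsc{LUCB1} into the stated time bound without losing an extra factor depending on $\mu[1]$. A minor care point is lining up the probability budget: the $1/T^2$ confidence used for \textsc{LUCB1} matches the $\delta = 1/T^2$ convention of algorithm \textsc{CTMAB}, so the two failure-probability contributions combine cleanly into the $4\delta$ factor.
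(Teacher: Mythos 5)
Your proposal is correct and follows essentially the same route as the paper's own proof: condition on the estimation-period guarantee $\tfrac{2}{3}\mu[1] \le \hat\mu[1] \le 2\mu[1]$ from Lemma \ref{lem:estimationperiod}, invoke the \textsc{LUCB1} sample-complexity bound of Lemma \ref{lem:ubpac}, convert samples to time via the $1/\hat\mu[1]$ inter-sample spacing, and union-bound the failure probabilities to get $1-4\delta$. Your explicit bound $1/\hat\mu[1]\le 3/(2\mu[1])$ is a slightly more careful rendering of the paper's "the loss is only a constant" step, but the argument is the same.
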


\begin{proof}
From Lemma \ref{lem:estimationperiod}, at the end of estimation period of algorithm  \textsc{CTMAB}, we know that $\frac{2}{3}\mu[1]\le {\hat \mu}_t[a_t] \le 2\mu[1]$ with  probability at least $1-3\delta$.  Since we are bounding order-wise, we next substitute ${\hat \mu}[a_t]$ with 
the true mean $\mu[1]$, as the loss compared to ${\hat \mu}[a_t]$ is only a constant.
From Lemma \ref{lem:ubpac}, we know that the \textsc{LUCB1} algorithm identifies the best arm with probability $1-\delta$ if it obtains $O\left( \frac{4K}{\Delta^2} \log \left(T^2\right) \right)$ samples. Since the \textsc{LUCB1} algorithm is executed at frequency one sample per time period $1/\mu[1]$,
in time $t_{\text{Id}}=O\left(\frac{4K}{\mu[1]\Delta^2} \log \left(T^2\right) \right)$ all the required samples (needed by \textsc{LUCB1} algorithm to identify the best arm with probability $1-\delta$) have been obtained and the best arm has been identified with probability at least $1-4\delta$ (using the union bound). 
 \end{proof}

\section{Proof of Thm.~\ref{thm:ubmultiarm} }	\label{app:ubm2}
With $\delta=\frac{1}{T^2}$, from Lemma \ref{lem:ubpac}, the \textsc{LUCB1} algorithm obtains $N_{\text{Id}}= O\left( \frac{4K}{\Delta^2} \log \left(T^2\right) \right)$ samples in the identification period whose time duration is $t_{\text{Id}} =O\left(\frac{4K}{\mu[1]\Delta^2} \log \left(T^2\right) \right)$ from Lemma \ref{lem:idphasetime}, and succeeds with probability $1-\frac{4}{T^2}$ in identifying the best arm. For the rest of the proof, we assume that the best arm has been identified in the identification period, and account for its probability correspondingly. The payoff obtained in the identification period of the   \textsc{CTSAB} algorithm is 

	\begin{align}\label{eq:payoffmultiarm1}
	P_{\text{Id}} & \stackrel{(a)}=\left[ \sum_{k=1}^K \mu[k]N^{[k]}-  N_{\text{Id}}^2/t_{\text{Id}}\right]  \stackrel{(b)}\ge -   N_{\text{Id}}^2/t_{\text{Id}} \stackrel{(c)}=  -O\left(\frac{4\mu[1]K}{\Delta^2} \log \left(T^2\right) \right),
	\end{align}
	where in $(a)$, $N^{[k]}$ is the number of samples obtained for arm $k$ in the identification period, to get $(b)$, we ignore the first positive term of $(a)$, and for $(c)$ we use the definition of $t_{\text{Id}} = O\left(\frac{4K}{\mu[1]\Delta^2} \log \left(T^2\right) \right)$.

	
	
	For further exposition, we  need the following Corollary of Lemma \ref{lem:ubpac}.
	\begin{corollary}\label{cor:ubpac}
With $\delta=\frac{1}{T^2}$, the number of samples obtained by the \textsc{LUCB1} algorithm for the best arm is at least 
$$\left(\frac{4c}{\Delta^2} \log \left(T^2\right)\right) $$ for some constant $c$, with probability at least $1-\frac{1}{T^2}$.
\end{corollary}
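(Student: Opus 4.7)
The plan is to leverage the stopping rule of the \textsc{LUCB1} algorithm, together with a standard statistical-distinguishability argument, to show that the best arm must accumulate at least $\Omega(\log(1/\delta)/\Delta^2)$ pulls before \textsc{LUCB1} can correctly terminate. Since the corollary is specifically about the \emph{best} arm's pull count (not the total across arms), I would first recall that \textsc{LUCB1} maintains Hoeffding-style upper and lower confidence bounds $\text{UCB}_t[k]$ and $\text{LCB}_t[k]$ for each arm $k$, of radius $\beta(N^t[k],\delta)\asymp\sqrt{\log(1/\delta)/N^t[k]}$, and stops only when $\text{LCB}_t[a_t]\ge\text{UCB}_t[k]$ for all $k\ne a_t$, where $a_t$ is the empirical leader at time $t$.

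The key step is as follows. On the high-probability event $\cE$ (of probability at least $1-\delta$) that all empirical means $\hat\mu_t[k]$ lie within their confidence radii of the true means $\mu[k]$, the algorithm succeeds in returning arm $1$. On this event, at termination we must have $\text{LCB}_t[1]\ge\text{UCB}_t[2]$, which in particular implies
\begin{equation*}
\hat\mu_t[1]-\beta(N^t[1],\delta)\;\ge\;\hat\mu_t[2]+\beta(N^t[2],\delta).
\end{equation*}
Using the concentration bounds $|\hat\mu_t[k]-\mu[k]|\le\beta(N^t[k],\delta)$ and $\mu[1]-\mu[2]=\Delta$, rearranging yields
\begin{equation*}
2\beta(N^t[1],\delta)+2\beta(N^t[2],\delta)\;\ge\;\Delta,
\end{equation*}
so in particular $\beta(N^t[1],\delta)\ge\Delta/4$, forcing $N^t[1]\ge c\log(1/\delta)/\Delta^2$ for an absolute constant $c$.

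Substituting $\delta=1/T^2$ gives $\log(1/\delta)=\log T^2$ and therefore $N^t[1]\ge(4c/\Delta^2)\log T^2$ (after absorbing constants into $c$), with probability at least $1-1/T^2$, which is exactly the claim. The main obstacle is verifying that the constant in \cite{ICML12_PACSubsetMAB}'s confidence radius is compatible with the factor of $4c$ in the statement; however, since the statement is only up to an unspecified constant $c$, this is purely bookkeeping and the argument above is essentially tight. An alternative, slicker proof would cite a per-arm sample-complexity lower bound (e.g.\ a change-of-measure argument distinguishing $\mu[1]$ from $\mu[2]$ using only pulls of arm $1$), but the stopping-rule argument is self-contained given the preceding material.
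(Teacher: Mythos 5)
The paper does not actually supply an argument for this corollary---it is asserted as a consequence of the cited \textsc{LUCB1} guarantee---so your proposal has to stand on its own, and as written its central step does not go through. You claim that the termination condition $\hat\mu_t[1]-\beta(N^t[1],\delta)\ge\hat\mu_t[2]+\beta(N^t[2],\delta)$, combined with $|\hat\mu_t[k]-\mu[k]|\le\beta(N^t[k],\delta)$, rearranges to $2\beta(N^t[1],\delta)+2\beta(N^t[2],\delta)\ge\Delta$. It does not: substituting $\hat\mu_t[1]\le\mu[1]+\beta(N^t[1],\delta)$ and $\hat\mu_t[2]\ge\mu[2]-\beta(N^t[2],\delta)$ into the termination condition, the confidence radii cancel and you obtain only the vacuous $\mu[1]\ge\mu[2]$. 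The inequality $2\beta_1+2\beta_2\ge\Delta$ is what follows from the \emph{negation} of the stopping rule (i.e., it holds while the algorithm has \emph{not yet} stopped), and it is the engine of the sample-complexity \emph{upper} bound for LUCB-type algorithms, not of any lower bound. Moreover, even if you could establish $\beta(N^t[1],\delta)\ge\Delta/4$ at termination, since $\beta(u,\delta)\asymp\sqrt{\log(1/\delta)/u}$ is decreasing in $u$ this would give $N^t[1]\le O(\log(1/\delta)/\Delta^2)$ --- an upper bound on the best arm's pull count, the opposite of the claim. What the corollary needs is an \emph{upper} bound $\beta(N^t[1],\delta)\le c'\Delta$ at termination, and the high-probability concentration event alone does not force this: on that event $\hat\mu_t[1]$ may sit near the top of its confidence interval, in which case $\mathrm{LCB}_t[1]\approx\mu[1]\ge\mathrm{UCB}_t[2]$ can hold even when $N^t[1]$ is small.

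The repair is essentially the argument you dismiss as an ``alternative'': a change-of-measure / distinguishability lower bound on the number of pulls of arm $1$ itself. Any procedure that returns arm $1$ with probability at least $1-\delta$ must be able to distinguish the given instance from one in which $\mu[1]$ is lowered to just below $\mu[2]$ (a perturbation of size about $\Delta$ affecting only arm $1$'s distribution), which forces $\Omega(\log(1/\delta)/\Delta^2)$ samples of arm $1$; this is the per-arm analogue of Lemma~\ref{lem:lbpureexplore} already used elsewhere in the paper, and it is the argument you should make primary rather than parenthetical. As it stands, the stopping-rule derivation is not a proof of the stated lower bound.
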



Let ${\hat \mu[1]}_t$ be the empirical estimate of the mean of the best arm at the end of identification period of  algorithm \textsc{CTMAB}, i.e., at time $t$ after obtaining $N^{[1]}$ samples for the best arm.
As before, let $\text{err}_t = |{\hat \mu[1]}_t-\mu[1]|$. From Corollary \ref{cor:ubpac}, the number of samples $N^{[1]}$ obtained  
	 for the best arm  in the identification period of the  \textsc{CTSAB} algorithm is $$\left(\frac{4c}{\Delta^2} \log \left(T^2\right)\right) \ge \left( \frac{4c}{\mu[1]^2} \log \left(T^2\right)\right)$$ for some constant $c$ with probability at least $1-1/T^2$.

	 Thus, we have from 
	Corollary \ref{cor:conc} and Lemma \ref{lem:direct}, that with probability at least $1-1/T^2$, $\text{err}_t < \mu[1]$. Thus, similar to the single arm problem case \eqref{eq:payoffwitherr}, payoff for phase $r=1$ of the  \textsc{CTSAB} algorithm after the identification period is	\begin{align}
	P_1&= \frac{\mu[1]^2T^{\nu_m}}{2} - \text{err}_t^2\frac{T^{\nu_m}}{4} > 0.
	\end{align}
	Therefore, for each phase $r\ge 1$ of  the  \textsc{CTSAB} algorithm that starts after the identification period, the payoff $P_r$ is same as in \eqref{eq:dummy900}, considering the total time horizon as $T-t_{\text{Es}}-t_{\text{Id}}$ with each phase width of $T^{\nu_m}$, and 
	assuming that the best arm identified in the identification period is in fact 
	arm $1$ (which happens with probability $\left(1-\frac{1}{T^2}\right)$). 
	
	Therefore, using \eqref{eq:dummy900},  similar to \eqref{eq:finalpayoffsinglearm}, the payoff of the  \textsc{CTMAB} algorithm obtained using the exploit period of algorithm \textsc{CTSAB} with the best identified arm as the single arm,
	\begin{align}\nn
	P_{\text{single},\opt}([t_{\text{Es}}+ t_{\text{Id}}, T])  & = \sum_{r=1}^{\frac{T-t_{\text{Es}}-t_{\text{Id}}}{T^{\nu_m}}}P_r,\\\nn
	& \ge     \frac{\mu[1]^2(T- t_{\text{Es}}-t_{\text{Id}})}{4}-\frac{\ln(2/\delta)}{4\mu[1]  } \ln \left(\frac{T}{T^{\nu_m}}\right), \\\nn
	& \stackrel{(a)}\ge     \frac{\mu[1]^2\left(T- O\left(\frac{K^2\log^2 T}{\mu[1]}\right)- O\left(\frac{4K}{\mu[1]\Delta^2} \log \left(T^2\right) \right)\right)}{4}-\frac{\ln(2/\delta)}{4\mu[1]  } \ln \left(\frac{T}{T^{\nu_m}}\right), \\ \nn
	& \stackrel{(b)}=
	\frac{\mu[1]^2 T}{4}  -O\left(\mu[1]K^2\log^2 T\right)- O\left(\frac{4\mu[1]K}{\Delta^2} \log \left(T^2\right) \right)  \\ \nn
	& \quad - O\left(\frac{1}{\mu[1]} \log (T^2) \log (T)\right),\\ \label{eq:finalpayoffmultiarm}
	& \stackrel{(c)}\ge \frac{\mu[1]^2 T}{4}  -O\left(\max\left\{\frac{\mu[1]K}{\Delta^2}\log T, \mu[1]K^2\log^2 T, \frac{\log^2 T}{\mu[1]} \right\}\right),
	\end{align}
	with probability at least
	$\left(1-\frac{T}{T^{2+\nu_m}} -\frac{5}{T^2}\right)$ with $\delta=\frac{1}{T^2}$, where in $(a)$ we have used $t_{\text{Es}} = O\left(\frac{K^2\log^2 T}{\mu[1]}\right)$ from Lemma \ref{lem:estimationperiod}, while
	 $t_{\text{Id}} = O\left(\frac{4K}{\mu[1]\Delta^2} \log \left(T^2\right) \right)$ from Lemma \ref{lem:idphasetime}, and $T^{\nu_m}$ is the width of each phase after the identification period. 
	 
	 Let $\delta' = \frac{T}{T^{2+\nu_m}} -\frac{5}{T^2}$.

	Incorporating the payoff of \textsc{CTMAB} algorithm for the estimation period from Lemma \ref{lem:payoff1ctmab} and the identification period from \eqref{eq:payoffmultiarm1}, the total payoff of the  \textsc{CTMAB} algorithm is 
	\begin{align}\nn
	P& =P([0,t_{\text{Es}}])+ P([t_{\text{Es}}, t_{\text{Es}}+ t_{\text{Id}})+ P_{\text{single},\opt}([t_{\text{Es}}+ t_{\text{Id}}, T]), \\   \nn
& \ge -O\left(\frac{ K^2}{\mu[1]} \log^2T\right) -O\left(\frac{4\mu[1]K}{\Delta^2} \log \left(T^2\right) \right) \\ \nn
& \quad \quad
 + \frac{\mu[1]^2 T}{4}  -O\left(\max\left\{\frac{\mu[1]K}{\Delta^2}\log T, \mu[1]K^2\log^2 T,\frac{\log^2 T}{\mu[1]}\right\}\right)
 \end{align}
	with probability at least
	$1-\delta'$.

	Recall that the payoff of oracle policy is  
	$\frac{\mu[1]^2 T}{4}$. 
	Thus,  the 
	regret of algorithm \textsc{CTMAB} is at most 	\begin{align}\nn
	R&\le \frac{\mu[1]^2 T}{4} - P, \\\nn
	&\le O\left(\max\left\{\frac{\mu[1]K}{\Delta^2}\log T, \frac{K^2}{\mu[1]}\log^2 T, \frac{\log^2 T}{\mu[1]} \right\}\right) 
	\end{align}
	with probability $1-\delta'$.

	Since the maximum regret of algorithm \textsc{CTMAB} can be $O(T)$ (Remark \ref{rem:univregretbound}), we get that the expected regret of algorithm \textsc{CTSAB} is at most 
	\begin{align*}
	\cR = {\mathbb E}\{R\} & =  O\left(\max\left\{\frac{\mu[1]K}{\Delta^2}\log T, \frac{K^2}{\mu[1]}\log^2 T, \frac{\log^2 T}{\mu[1]} \right\}\right)
	(1-\delta') + O(T)\delta'.
	\end{align*}

\section{Preliminaries to prove  Theorem~\ref{thm:lbmultiplearms}}
{\bf Exploration Problem with K arms of unknown means:} Let there be $K$ arms, with i.i.d. Bernoulli distribution $\cP_k$ for arm $k$ with mean $\mu[k]$ and $1>\mu[1] > \dots > \mu[K]$. Let the product distribution over the $K$ arms be denoted as 
$\cP = \prod_{i=1}^K\cP_i$.
An arm $k$ is called $\beta$-optimal if $\mu[k] \ge \mu[1]-\beta$. 
An algorithm is $(\beta,\gamma)$-correct if for the arm $k^\star$ that it outputs as the best arm, we have 
$\bbP_{\cP}\left(\mu[k^\star] \ge \max_{1\le k\le K} \mu[k]-\beta \right) \ge 1-\gamma$.

\begin{lemma}\label{lem:lbpureexplore} \cite{mannor2004sample}
There exist positive constants $c_1, c_2, \beta_0, \gamma_0$ such that for $K\ge 2, \beta \in (0,\beta_0)$ there exists distribution $\cP$ such that for  every
$\gamma \in (0,\gamma_0)$, the sample complexity of any  $(\beta,\gamma)$-correct algorithm is at least 
$N \ge c_1 \frac{K}{\beta^2} \log \left(\frac{c_2}{\gamma}\right)$, where $\beta_0 = 1/8$ and $\delta_0 = e^{-4}/4$.
\end{lemma}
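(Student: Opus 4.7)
The plan is to prove Lemma \ref{lem:lbpureexplore} via a change-of-measure argument applied to a carefully chosen family of hard instances, following the structure of the original Mannor--Tsitsiklis construction. First I would fix a reference instance $\cP_0$ on $K$ Bernoulli arms where arm $1$ has mean $1/2+2\beta$ and every other arm has mean $1/2$, so that arm $1$ is the unique $\beta$-optimal arm. For each $i\in\{2,\dots,K\}$ I would build an alternative instance $\cP_i$ that agrees with $\cP_0$ on every arm except arm $i$, whose mean is lifted to $1/2+4\beta$; then arm $i$ becomes the unique $\beta$-optimal arm under $\cP_i$ while arm $1$ is no longer $\beta$-optimal. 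Any $(\beta,\gamma)$-correct algorithm must therefore output arm $1$ with probability at least $1-\gamma$ under $\cP_0$ and output arm $i$ with probability at least $1-\gamma$ under $\cP_i$, so it is forced to distinguish $\cP_0$ from each $\cP_i$ reliably.

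Next I would invoke the Bretagnolle--Huber inequality on the event $A_i=\{\text{algorithm outputs arm }i\}$:
\[
\bbP_{\cP_0}(A_i)+\bbP_{\cP_i}(A_i^{c}) \;\ge\; \tfrac{1}{2}\exp\!\bigl(-D(\cP_0^{\tau}\,\|\,\cP_i^{\tau})\bigr),
\]
where $\cP^{\tau}$ denotes the law on the full interaction transcript up to the algorithm's (random) stopping time $\tau$. Since correctness forces the left-hand side to be at most $2\gamma$, this yields $D(\cP_0^{\tau}\,\|\,\cP_i^{\tau})\ge \log(1/(4\gamma))$. I would then apply the Wald-type divergence decomposition identity for adaptive sampling: because $\cP_0$ and $\cP_i$ differ only on arm $i$,
\[
D(\cP_0^{\tau}\,\|\,\cP_i^{\tau}) \;=\; \bbE_{\cP_0}[N_i(\tau)]\cdot D\!\bigl(\mathrm{Ber}(1/2)\,\|\,\mathrm{Ber}(1/2+4\beta)\bigr).
\]
For $\beta\in(0,1/8)$ a direct expansion of the Bernoulli KL gives an upper bound of the form $c\,\beta^{2}$ with an absolute constant $c$, so $\bbE_{\cP_0}[N_i(\tau)]\ge \frac{1}{c\,\beta^{2}}\log(1/(4\gamma))$ for every $i=2,\dots,K$.

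Finally I would sum the per-arm bound over $i=2,\dots,K$ to obtain
\[
\bbE_{\cP_0}[N] \;=\; \sum_{k=1}^{K}\bbE_{\cP_0}[N_k(\tau)] \;\ge\; \frac{K-1}{c\,\beta^{2}}\log\!\bigl(1/(4\gamma)\bigr),
\]
which matches the announced form $c_1\,K\beta^{-2}\log(c_2/\gamma)$ after absorbing universal constants; a symmetric instance in which arm $1$'s mean is instead depressed accounts for the $i=1$ contribution and restores the factor $K$ rather than $K-1$. The main obstacle I expect is the adaptive nature of the sampling rule: the chain-rule decomposition of $D(\cP_0^{\tau}\,\|\,\cP_i^{\tau})$ into per-arm contributions is immediate for fixed-horizon protocols but, at the random stopping time $\tau$, must be justified by an optional-stopping argument on the log-likelihood ratio (the now-standard divergence-decomposition lemma of Kaufmann--Capp\'e--Garivier). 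Pinning down the explicit thresholds $\beta_0=1/8$ and $\delta_0=e^{-4}/4$ stated in the lemma reduces to a careful bookkeeping of constants in the small-$\beta$ expansion of the Bernoulli KL and in the Bretagnolle--Huber bound, which is routine but tedious.
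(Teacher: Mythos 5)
The paper itself gives no proof of this lemma --- it is imported verbatim from \cite{mannor2004sample} with only a citation --- so the only meaningful comparison is with the original source. Your proposal is a correct proof strategy, but it follows the now-standard ``modern'' route (Bretagnolle--Huber plus the Kaufmann--Capp\'e--Garivier divergence-decomposition at a stopping time) rather than the original Mannor--Tsitsiklis argument, which works directly with likelihood ratios on explicitly constructed bad events for the empirical means and in fact establishes a stronger statement (a lower bound holding for a whole family of instances, not just one worst case). Your version is shorter and buys modularity --- the adaptivity and the random stopping time are absorbed into one off-the-shelf identity --- at the price of proving only the existential form, which is all the lemma asserts, so nothing is lost for the purposes of this paper. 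One concrete quantitative wrinkle you should fix: with your means $1/2$, $1/2+2\beta$, $1/2+4\beta$ and the threshold $\beta_0=1/8$, the lifted mean $1/2+4\beta$ approaches $1$, where $D(\mathrm{Ber}(1/2)\,\|\,\mathrm{Ber}(1/2+4\beta))$ contains a $\log\bigl(1/(1/2-4\beta)\bigr)$ term and is \emph{not} bounded by $c\,\beta^2$ for a universal $c$ uniformly over $\beta\in(0,1/8)$. This is repaired by recentering the construction (e.g.\ suboptimal arms at $1/2-2\beta$, arm $1$ at $1/2$, and the lifted arm at $1/2+2\beta$, so all means stay in $[1/4,3/4]$), which preserves the $\beta$-optimality separation you need; with that adjustment, and noting $K-1\ge K/2$, your argument delivers exactly the stated bound.
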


Using Lemma \ref{lem:lbpureexplore}, next, we derive a lower bound on the regret of any online algorithm for the CTMAB problem. 

\section{Proof of Theorem.~\ref{thm:lbmultiplearms}}
Consider any online algorithm $\cA$ for the multiple arms CTMAB that has regret 
$g_m(\mu[1], \dots, \mu[K], \Delta, T)$ which can be expressed as $g_m(\mu[1], \dots, \mu[K], \Delta, T)= \mu^2 T^{p_m}$ for some $p_m<1$ that can depend on $\mu[1], \dots, \mu[K], \Delta, T$.

	We divide the total time horizon $T$ into two intervals, first $[0, c_{12}T^{p_m}]$, and second  $[c_{12}T^{p_m}, T]$ where $c_{12} = \frac{\mu[1]^2}{\mu[2]^2}$. Then the payoff of algorithm $\cA$ can be written as  
 	\begin{align}\nn P_\cA &=P_\cA([0,c_{12}T^{p_m}]) + P_\cA([c_{12}T^{p_m}, T]), \\ \label{eq:multiarmarmpayoffdummy1}
 	&\stackrel{(a)} \le P_\cA([0,c_{12}T^{p_m})]  +  \frac{\mu[1]^2 (T-c_{12}T^{p_m})}{4},
 	\end{align}
 	where in $(a)$
 	for the second interval $[c_{12}T^{p_m}, T]$, we assume that $\cA$
 	knows the index of the best arm and the true value of $\mu[1]$ at time $c_{12}T^{p_m}$, and obtains the payoff  equal to that of the oracle policy for interval $[c_{12}T^{p_m}, T]$.

	Let at time $t$, arm $k^\star(t,N_t)$ be the arm identified by $\cA$ as the best arm.
	Let $\delta_I(t,N_t)=\bbP(k^\star(t,N_t) \ne 1)$ be the probability with which $\cA$ mis-identifies the best arm at time $t$ after obtaining $N_t$ samples.
	For $\psi>0$, such that $T^\psi < c_{12}$, let $N_{\min}^m(\psi) = \frac{\sfc_4 K}{(\Delta/2)^2}
 	\log\left(\sfc_5\frac{c_{12}}{8T^\psi }\right)$.
	Let at time $c_{12}T^{p_m}$,  the total number of samples obtained by $\cA$ be 
	$$N_\cA(c_{12}T^{p_m}) < N_{\min}^m(\psi).$$ Consequently, letting $\beta = \Delta/2$ (which ensures only the best arm is part of the $\beta$-optimal arm set), from Lemma \ref{lem:lbpureexplore}, we get that $\delta_I (c_{12}T^{p_m}, N_\cA(c_{12}T^{p_m}))\ge   \frac{T^{\psi}}{c_{12}}$.

%
%
%
	
 	Recall that the maximum payoff obtainable by the oracle policy in time interval $[0,t]$ from arm $k$  is $\frac{\mu[k]^2 t}{4}$ and across all arms is $\frac{\mu[1]^2 t}{4}$.
 	Hence, 
 	 the payoff of $\cA$ for the first interval
	 \begin{align}\label{eq:dummyxx3} P_\cA([0,c_{12}T^{p_m})] \le & (1-\delta_I (c_{12}T^{p_m}, N_\cA(c_{12}T^{p_m}))) \frac{\mu[1]^2 c_{12}T^{p_m}}{4}  + \delta_I (c_{12}T^{p_m}, N_\cA(c_{12}T^{p_m})) \frac{\mu[2]^2 c_{12}T^{p_m}}{4}, \end{align} 
	 where to get the second term we are upper bounding $\mu[k] < \mu[2]$ whenever $k^\star(t,N_t)>2$.
	 Since $\delta_I (c_{12}T^{p_m}, N_\cA(c_{12}T^{p_m}))\ge \frac{T^{\psi}}{c_{12}} $, from \eqref{eq:dummyxx3} we get 
 \begin{align}\label{eq:upmdum1}
 P_\cA([0,c_{12}T^{p_m})]&\le \frac{\mu[1]^2 c_{12}T^{p_m}}{4} -  \frac{\mu[1]^2 T^{p_m+\psi}}{4} +  \frac{\mu[2]^2 c_{12} T^{p_m}}{4},
 \end{align}
 where to get the third term we have just used the trivial bound $\delta_I (c_{12}T^{p_m}, N_\cA(c_{12}T^{p_m})) \le 1$.
 Combining  \eqref{eq:upmdum1} with \eqref{eq:multiarmarmpayoffdummy1},  and using the definition of $c_{12}$, 
 we get that 
 \begin{align}\nn P_\cA &\le  \frac{\mu[1]^2T}{4} -  \frac{\mu[1]^2}{4}(T^{p_m+\psi}) +  \frac{\mu[1]^2}{4}(T^{p_m}),
 	\end{align}
	making the regret of $\cA$ more than $\frac{\mu[1]^2}{4}(T^{p_m})$, giving a contradiction. Thus, it is necessary that $N_\cA(c_{12}T^{p_m}) < N_{\min}^m(\psi)$ for all $\psi>0$. Using the definition of $N_{\min}^m(\psi)$ we get 
	$$N_\cA(c_{12}T^{p_m}) >\frac{\sfc_4 K}{(\Delta/2)^2}
 	\log\left(\sfc_5\frac{c_{12}}{8T^\psi }\right).$$
	Next, letting $\psi$ small, hence we get that 
	\begin{equation} \label{eq:dummyxx4} N_\cA(c_{12}T^{p_m}) > \frac{\sfc_4 K}{(\Delta/2)^2}
 	\log\left(\sfc_5\frac{c_{12}}{8}\right) = N_{\min}^m
	\end{equation}

	The payoff of $\cA$ in the first interval $[0,c_{12}T^{p_m})]$ as a function of the number of samples $x$ obtained by it  in time $[0, c_{12}T^{p_m}])$ for arm $1$ is
	$$P_\cA([0,c_{12}T^{p_m}]) \le \max_x \left\{ \mu[1] x -   \frac{ x^2}{c_{12}T^{p_m}} \right\}.$$ 
	\begin{remark}\label{rem:unimod}
  Note that 
	$\mu[1] x -   \frac{ x^2}{c_{12}T^{p_m}}$ is a concave and unimodal function of $x$ with optimal $x_m^\star =  \frac{\mu[1] c_{12}T^{p_m}}{2}$. 
\end{remark}
	
	Thus, we consider two cases : i) $x_m^\star \ge N_{\min}^m$ and ii) $x_m^\star  <N_{\min}^m$, where $N_{\min}^m$ has been defined in \eqref{eq:dummyxx4}. When  $x_m^\star \ge N_{\min}^m $, using the expressions for $x_m^\star$ and $N_{\min}^m $,  we get 
that $\frac{\mu[1] c_{12}T^{p_m}}{2}  \ge \frac{\sfc_4 K}{(\Delta/2)^2}
 	\log\left(\sfc_5\frac{c_{12}}{8}\right)$, implying 
\begin{equation}\label{eq:pmbound1}
c_{12}T^{p_m} \ge \frac{2 \sfc_4 K}{(\Delta/2)^2 \mu[1]}
 	\log\left(\sfc_5\frac{c_{12}}{8}\right) \quad \forall \ \alpha>0.
\end{equation}
Thus, we get that $T^{p_m} = \Omega\left(\frac{K}{\Delta^2\mu[1]}\right)$.

For case ii) $x_m^\star  < N_{\min}^m$, we proceed as follows. 
Writing out the expected payoff of $\cA$ in the first interval as the function of samples obtained, 
	\begin{align}\label{eq:dummy449}
\nn \bbE\{P([0,c_{12}T^{p_m})]\} &\le \mu[1] N_\cA(c_{12}T^{p_m}) -  \\ & \frac{N_\cA(c_{12}T^{p_m})^2}{c_{12}T^{p_m}},
\end{align} since arm $1$ is the best.

We have already argued that $N_\cA(c_{12}T^{p_m})\ge N_{\min}^m$. Thus, following Remark \ref{rem:unimod}, with $x^\star  < N_{\min}^m$, 
it is clear that the RHS of \eqref{eq:dummy449}  is a decreasing function of $N_\cA(c_{12}T^{p_m})\ge N_{\min}^m$ for fixed $\mu[1]$ and $T$. Thus, choosing $N_\cA(c_{12}T^{p_m}) = N_{\min}^m$ (the minimum possible), from \eqref{eq:multiarmarmpayoffdummy1}, we obtain the largest payoff of algorithm $\cA$ for the first interval, and the overall payoff \eqref{eq:multiarmarmpayoffdummy1} over the two intervals is 
\begin{align} P &\le \mu[1] N_{\min}^m -   \frac{(N_{\min}^m)^2}{c_{12}T^{p_m}}    +  \frac{\mu[1]^2 (T-c_{12}T^{p_m})}{4}.\end{align}

Thus, for the regret of algorithm $\cA$ to be $\mu[1]^2 T^{p_m}$, we need $$\frac{  (N_{\min}^m)^2}{c_{12}T^{p_m}} \le \mu[1]^2 T^{p_m} + \mu[1] N_{\min}^m.$$
Substituting for the value of $N_{\min}$, we get 

\begin{align}\label{eq:pmbound2}
p_m \ge \min \left\{k\ge 0: \frac{ \left(\frac{\sfc_4 K}{(\Delta/2)^2}
 	\log\left(\sfc_5\frac{c_{12}}{8}\right)\right)^2}{c_{12}T^{k}}  = 
\max\left\{\mu[1]^2 T^k, \mu[1]\frac{\sfc_4 K}{(\Delta/2)^2}
 	\log\left(\sfc_5\frac{c_{12}}{8}\right)\right\} \right\}.
\end{align}
Combining \eqref{eq:pmbound1} and \eqref{eq:pmbound2}, when $\Delta$ is small, i.e. $\mu[1] \approx \mu[2]$, we get that $T^{p_m} = \Omega\left(\frac{ K}{\Delta^2\mu[1]}\right)$.

\section{Numerical Results for the single arm case}
In this section, we present the comparitive performance of our algorithms against the oracle policy and a baseline policy that does not adapt to the estimates of the arm means for the single arm case. The multiple arm case has already been discussed in the main body of the paper. The baseline policy samples the optimal arm at a fixed  interval of $1/aT$, where $a>0$ is a constant that determines the rate of sampling. The payoff of the baseline policy over a period $T$ is $a T(\mu-\lambda a)$, and  the payoff is positive and increasing for all $a \leq \frac{\mu}{2\lambda}$ achieving maxima at $a= \frac{\mu}{2\lambda}$ . We compare the performance of algorithm {\textsc CTSAB} against this baseline policy and the oracle policy for the single arm case in Fig.~\ref{fig:PayoffComparison} for different values of $\mu \in \{0.3,0.15,0.05\}$. We set the parameters to satisfy the relations $\frac{1}{\mu^3}=O(\mu T^p)$ as discussed in Theorem. \ref{thm:lbsinglearm}.  The total payoff of each policy is obtained by average over $50$ independent runs and each run is of $T=60000$ rounds. In these experiments, we set $\epsilon=0.05$ and $\delta=0.05$. In Figs. \ref{fig:SingleArm1} and \ref{fig:SingleArm2}, the confidence intervals are small and are not clearly visible, while  they are easily discernible in Fig. \ref{fig:SingleArm3}. 

Note that only total (mean) payoff of algorithm {\textsc CTSAB} is stochastic due to its adaptation to the observations while others are deterministic. The confidence interval on the  total payoff of algorithm {\textsc CTSAB} is as shown in the figures. The baseline policies corresponds to the case where the sampling is uniform at rate $a_1=0.12/(2\lambda)$ and $a_2=0.09/(2\lambda)$, named BaseLine1 and BaseLine2 in the figures. As seen from Fig. \ref{fig:PayoffComparison}, the total reward from algorithm {\textsc CTSAB} is close to optimal and better than the base line policies. For $\mu=0.05$, the cumulativereward for the Baseline1 is negative as $a_1> 0.05/\lambda$. Further, note that the gap between the total payoff of the oracle policy and algorithm {\textsc CTSAB} is increasing as $\mu$ decreases. This is natural as the learning problem gets harder as $\mu$ becomes smaller. This is explicitly depicted in Fig. \ref{fig:SingleArmDiff}, where we plot the regret of algorithm {\textsc CTSAB} as a function of $\mu$ with confidence intervals. As seen, the regret has an inverse relationship with $\mu$ in agreement with Theorem ~\ref{thm:lbsinglearm}.

\begin{figure*}
	\centering
	\subfloat[]{\label{fig:SingleArm1}
		\includegraphics[scale=0.28]{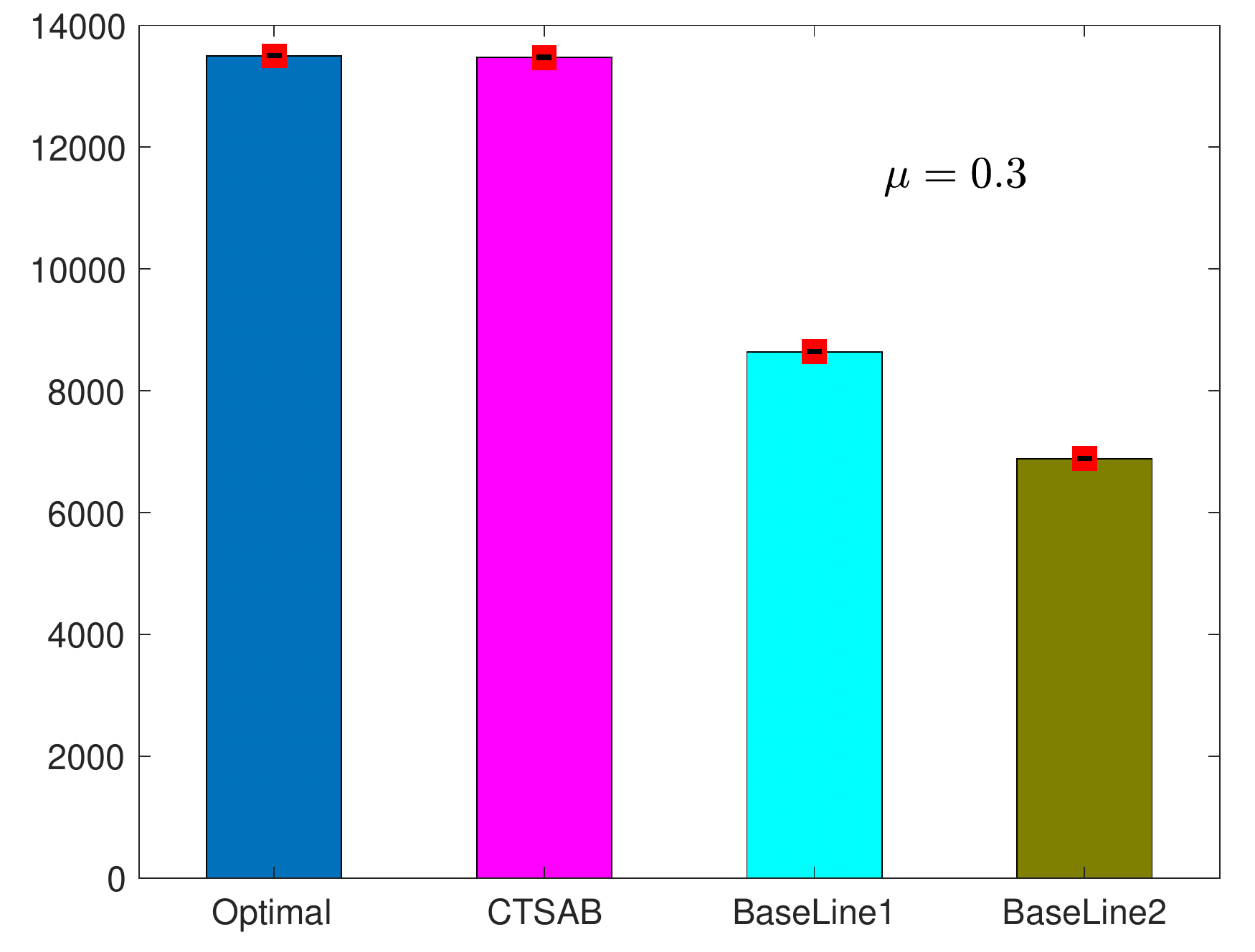}}
	\subfloat[]{\label{fig:SingleArm2}
		\includegraphics[scale=0.28]{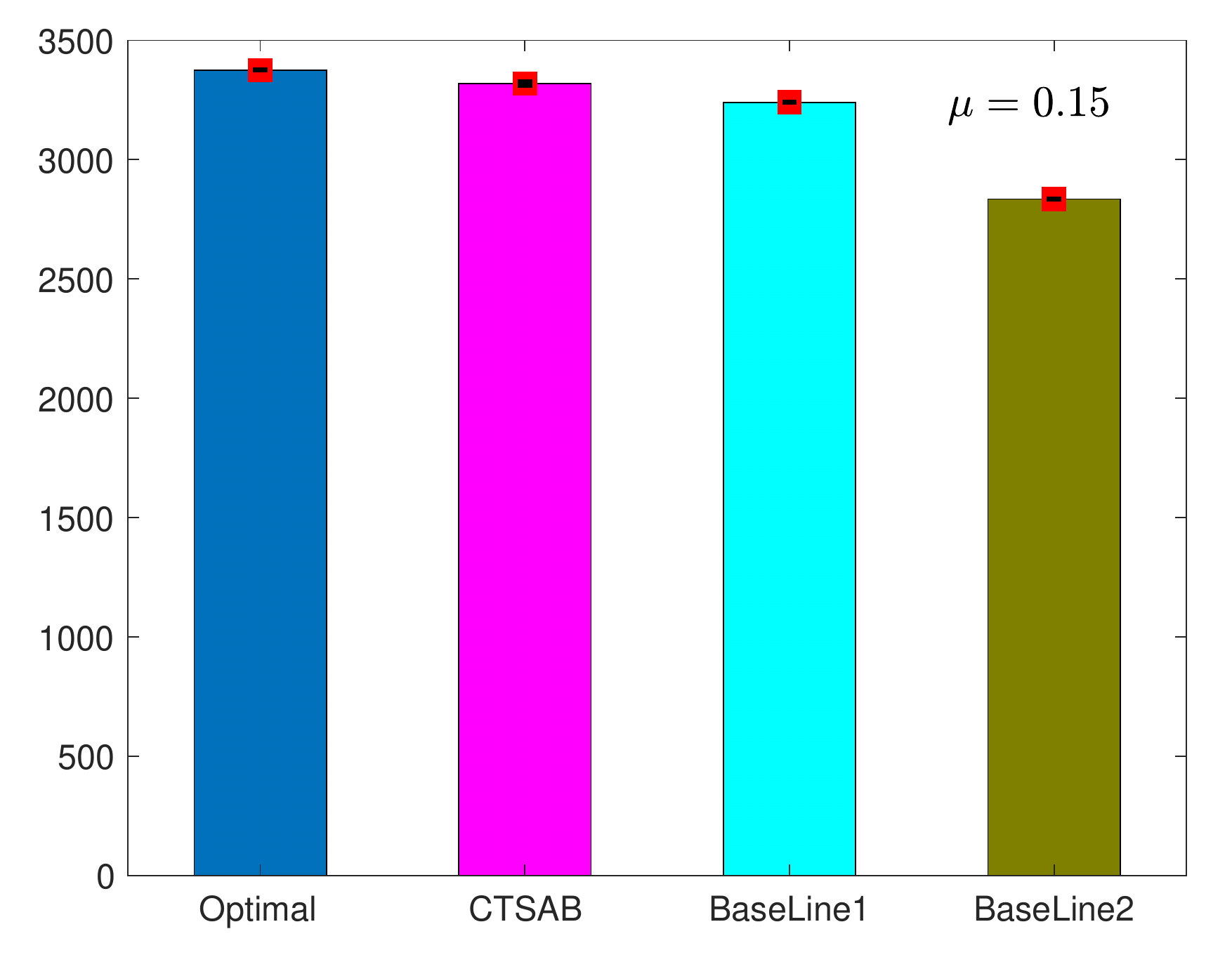}}
	\subfloat[]{\label{fig:SingleArm3}
		\includegraphics[scale=0.28]{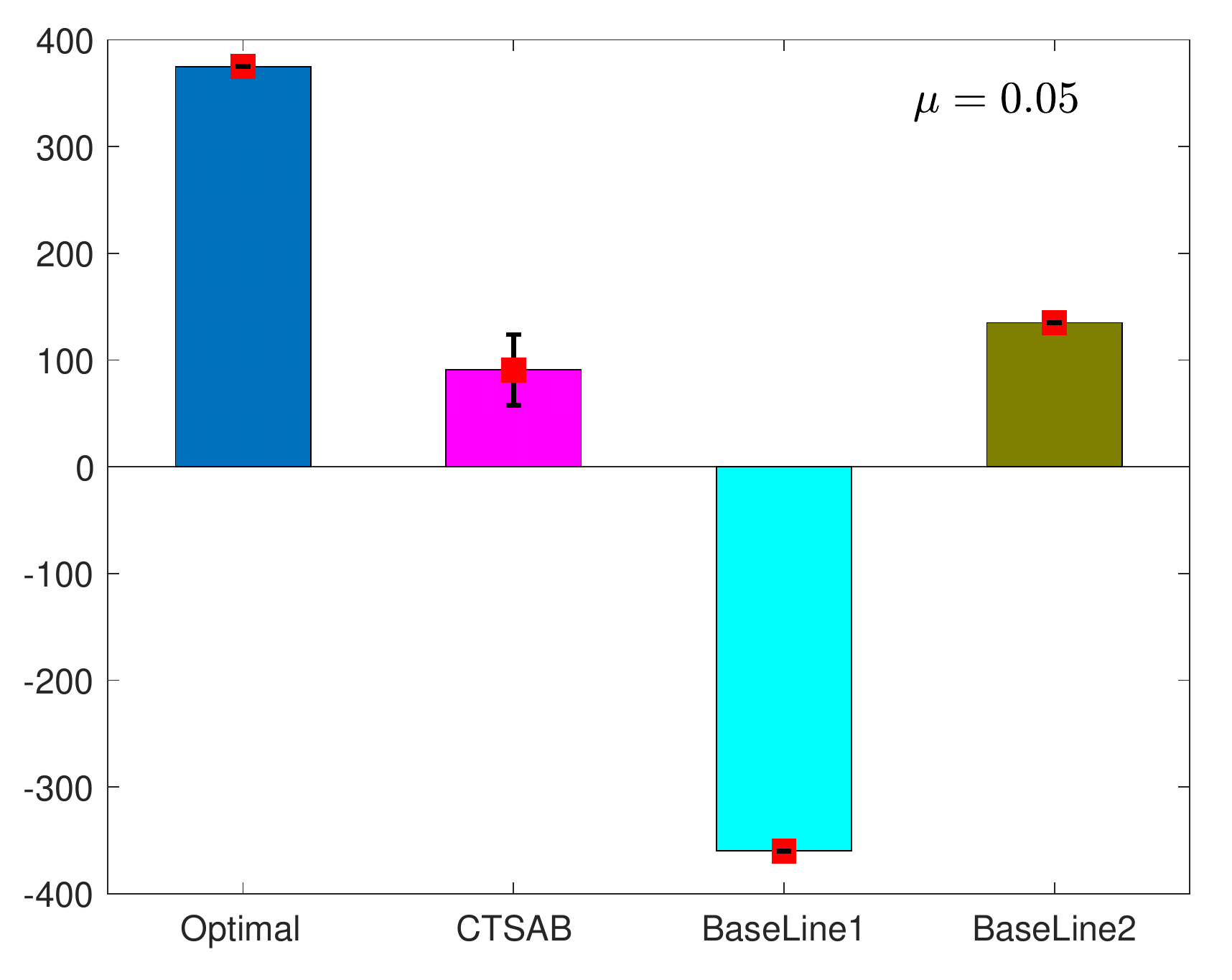}}
	\caption{Comparison of cumulative reward of algorithm {\textsc CTSAB} with other policies for different mean value of the arm. We set $\epsilon=0.05$ and $\delta=0.05$}
	\label{fig:PayoffComparison}
\end{figure*}

\begin{figure}
\centering
\includegraphics[scale=.35]{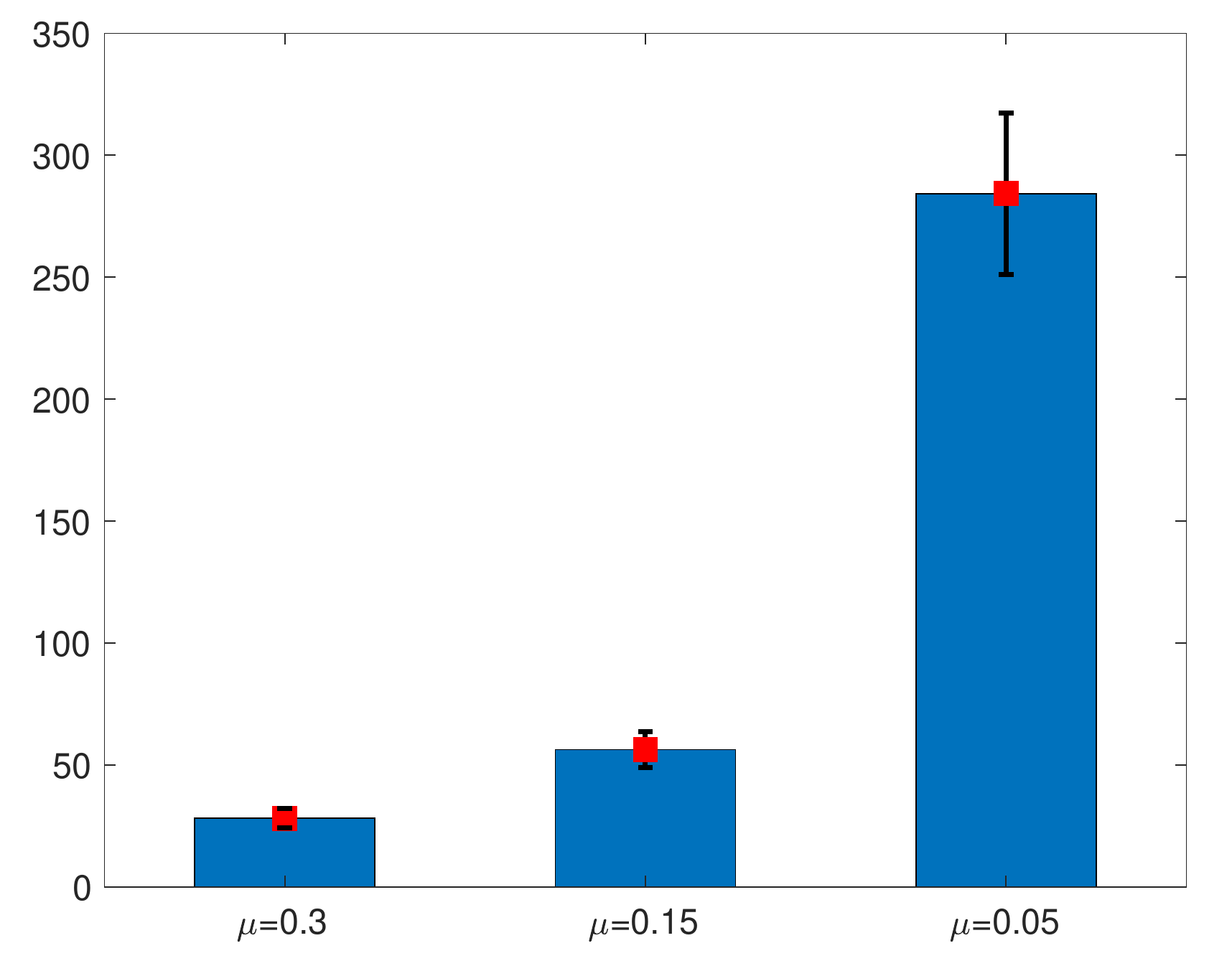}
\caption{Regret of the {\textsc CTSAB} algorithm for the single arm problem as $\mu$ is varied.}
\label{fig:SingleArmDiff}
\end{figure}

\end{document}